\documentclass[reqno,12pt]{amsart}

\usepackage[margin= 1in]{geometry}
\usepackage{amsmath,amsthm,verbatim,amssymb,amsfonts,amscd,graphicx}
\usepackage{xcolor,slashed,kotex}
\usepackage{soul}
\usepackage{graphics}
\usepackage{array}
\usepackage{hhline}
\usepackage{enumitem}

\usepackage{euscript}
\usepackage{mathrsfs}

\usepackage[numbers,sort&compress]{natbib}
\usepackage{url}
\usepackage[colorlinks=true,breaklinks=true]{hyperref}

\newcommand{\p}{\partial}

\newcommand{\la}{\langle}
\newcommand{\ra}{\rangle}
\newcommand{\na}{\nabla}

\newcommand{\be}{\begin{equation}}
\newcommand{\ba}{\begin{aligned}}
\newcommand{\bee}{\begin{equation*}}
\newcommand{\ee}{\end{equation}}
\newcommand{\ea}{\end{aligned}}
\newcommand{\eee}{\end{equation*}}
\newcommand{\bea}{\begin{equation} \begin{aligned} }
\newcommand{\eea}{\end{aligned}\end{equation} }

\newcommand{\R}{{ \mathbb{R}  }}

\newcommand{\Z}{{ \mathbb{Z}  }}
\newcommand{\Tn}{{ \mathbb{T}^n  }}

\newcommand{\nn}{{ \nonumber }}

\newcommand{\cP}{{\mathcal{P}}}

\newcommand{\T}{\tau}

\DeclareMathOperator{\Law}{Law}

\newcommand{\Tan}{\textrm{Tan}}

\newcommand{\Tr}{\textrm{tr}}

\DeclareMathOperator{\grad}{grad}

\newtheorem{theorem}{Theorem}[section]
\newtheorem{proposition}[theorem]{Proposition}
\newtheorem{lemma}[theorem]{Lemma}
\newtheorem{corollary}[theorem]{Corollary}

\theoremstyle{remark}

\newtheorem{remark}[theorem]{Remark}

\theoremstyle{definition}

\newtheorem{definition}[theorem]{Definition}

\numberwithin{equation}{section}

\title[Local Exponential Stability of Mean-Field Langevin Descent–Ascent]{Local exponential stability of mean-field Langevin descent–ascent and associated particle system}

\author{Geuntaek Seo}
\address{GS: Pohang University of Science and Technology}
\email{gtseo@postech.ac.kr}

\author{Minseop Shin}
\address{MS: Korea Advanced Institute of Science and Technology}
\email{fisherson0225@kaist.ac.kr}

\author{Pierre Monmarch\'e}
\address{PM: Universit\'e Gustave Eiffel}
\email{pierre.monmarche@univ-eiffel.fr}

\author{Beomjun Choi}
\address{BC: Korea Advanced Institute of Science and Technology}
\email{bchoi@kaist.ac.kr}

\begin{document}

\begin{abstract}We study the mean-field Langevin descent--ascent (MFL-DA), a coupled optimization dynamics on the space of probability measures for entropically regularized two-player zero-sum games, together with its associated interacting particle system. For general nonconvex--nonconcave payoffs, Wang and Chizat (COLT 2024) asked whether the original single-timescale MFL-DA converges to the mixed Nash equilibrium and, if so, at what rate. We prove a local affirmative answer in Wasserstein space: if the initial datum is sufficiently close to the mixed Nash equilibrium, then the mean-field dynamics converges to it exponentially fast at a quantitative rate. We further show that the finite-$N$ particle system inherits this stability up to times exponential in $N$, with an $N$-independent exponential rate modulo a finite-particle error floor. Combined with the recent counterexample of Mourrat and Pillaud--Vivien for MFL-DA, which shows that global convergence cannot hold in general, our theorem completes the positive local counterpart of the Wang--Chizat question: the mixed Nash equilibrium has a robust basin of attraction, stable under both the mean-field flow and its finite-particle approximation.
\end{abstract}

\maketitle

\section{Introduction} \label{sec-intro}
Two-player zero-sum games or minimax problems arise throughout modern machine learning, most notably in generative adversarial networks (GAN) \cite{goodfellow2014generativeadversarialnetworks} and Wasserstein GAN \cite{arjovsky2017wassersteingan}. For a payoff function \(f(x,y)\), the simultaneous pure-strategy game seeks a saddle point, equivalently a pure-strategy Nash equilibrium, namely a pair \((x^*,y^*)\) such that
\[
f(x^*,y)\le f(x^*,y^*)\le f(x,y^*)
\qquad \text{for all } x,y.
\]
If such a point exists, its value agrees with the minimax/maximin value:
\[
f(x^*,y^*)=\min_x\max_y f(x,y)=\max_y\min_x f(x,y).
\]
In general, however, the minimax value problem alone does not imply the existence of a pure-strategy equilibrium, and the standard gradient descent--ascent (GDA) dynamics may suffer from non-existence of equilibria, degeneracy, and lack of convergence. To address these issues, \cite{domingoenrich2020} proposed a mean-field relaxation in which strategies are probability measures, i.e., mixed strategies, rather than points. From a game-theoretic perspective, this mean-field formulation can be interpreted
as a learning dynamics in continuous-population games, with entropic
regularization playing the role of bounded rationality; see, e.g., \cite{sandholm2010population,mertikopoulos2016learning}.

Let $\mathcal X=\mathcal Y=\Tn:=\R^n/\Z^n$\footnote{Following \cite{WC24_MFLDA}, we work on the flat torus $\Tn$ mainly for the
simplicity of exposition. As discussed in Appendix~\ref{appendix-manifold}, our analysis extends with only minor modifications to general compact Riemannian manifolds without boundary. We also
briefly comment there on possible extensions to $\R^n$, where one would typically impose additional coercivity assumptions (e.g.\ quadratic growth at infinity) and where further technical issues may arise.} and let the payoff function \(f:\mathcal X\times\mathcal Y\to\R\) be of class \(C^2\).
Consider the function $F:\mathcal P(\mathcal X)\times\mathcal P(\mathcal Y)\to[-\infty,\infty]$ defined by
\be \label{eq-functional}
F(\mu,\nu)=\iint f(x,y)\,d\mu(x)d\nu(y)+H(\mu)-H(\nu),
\ee
where $H(\mu)=\int \log\bigl(\frac{d\mu}{dx}\bigr)\,d\mu$ if $\mu\ll dx$ and $H(\mu)=\infty$ otherwise.
The entropic terms enforce absolute continuity and induce diffusive regularization in the associated dynamics.

We study the natural Wasserstein descent--ascent flow associated with $F$.
Identifying $\mu_t,\nu_t$ with their densities\footnote{If there is no confusion, we often identify a measure with its density with respect to the Lebesgue measure.}, the resulting evolution\footnote{See Remark~\ref{eq-derivamain} for a derivation of \eqref{eq-main} from formal gradient descent--ascent in Wasserstein space.} reads
\bea\label{eq-main}
\begin{cases}\begin{aligned}
\partial_t \mu_t&=\Delta\mu_t+\nabla\cdot\Bigl(\mu_t\nabla_x\int_{\mathcal Y} f(x,y)\,d\nu_t(y)\Bigr),\\
\partial_t \nu_t&=\Delta\nu_t-\nabla\cdot\Bigl(\nu_t\nabla_y\int_{\mathcal X} f(x,y)\,d\mu_t(x)\Bigr).
\end{aligned}
\end{cases}
\eea
Following \cite{Kim2024SymmetricML,WC24_MFLDA}, we call \eqref{eq-main} the \emph{mean-field Langevin descent--ascent} (MFL-DA) equation, as it arises as the mean-field limit of the interacting particle system driven by Langevin descent--ascent dynamics in \eqref{eq:particle} \cite{domingoenrich2020,Kim2024SymmetricML}.

The mean-field counterpart of a pure-strategy Nash equilibrium is a \emph{mixed Nash equilibrium} (MNE), namely a pair $(\mu^*,\nu^*)\in\mathcal P(\mathcal X)\times\mathcal P(\mathcal Y)$ satisfying
\begin{equation}\label{eq-MNE}
F(\mu^*,\nu)\le F(\mu^*,\nu^*)\le F(\mu,\nu^*)\qquad\forall(\mu,\nu)\in\mathcal P(\mathcal X)\times\mathcal P(\mathcal Y).
\end{equation}
When such an equilibrium exists, its value agrees with the corresponding mixed-strategy minimax/maximin value.
An MNE is a fixed point of \eqref{eq-main} (and the converse is also true), and moreover for each $f\in C^2$ there exists a unique fixed point \cite[Theorem~4]{domingoenrich2020}.
Our main result establishes that this unique equilibrium is locally exponentially stable.

\begin{theorem}[Exponential stability of Nash equilibrium] \label{thm-2} For a given $f\in C^2$, let $(\mu^*,\nu^*)$ be the unique mixed Nash equilibrium of \eqref{eq-functional}. There exist $\delta>0$, $\lambda>0$, and $C<\infty$ depending on $f$ with the following significance: if the initial data satisfies $W_2(\mu_0,\mu^*)+W_2(\nu_0,\nu^*)<\delta$, then the MFL-DA \eqref{eq-main} running from $(\mu_0,\nu_0)$\footnote{MFL-DA \eqref{eq-main}  admits a unique solution for every initialization $(\mu_0,\nu_0) \in \mathcal{P}(\Tn)^2 $. The solution becomes regular immediately for $t>0$. See \cite[Prop. 1.1]{de2023reducing} or Theorem~\ref{thm:main_appendix} in Appendix~\ref{sec:welposedness}.} converges to $(\mu^*,\nu^*)$. More precisely, the densities converge in the $C^{1,\alpha}$-norm for every $\alpha\in(0,1)$, and 
\be \label{eq-maindecay}
W_2(\mu_t,\mu^*)+W_2(\nu_t,\nu^*)\le C e^{-\lambda t}\,\bigl(W_2(\mu_0,\mu^*)+W_2(\nu_0,\nu^*)\bigr).
\ee See Remark~\ref{remark-sharpasymp} for an explicit lower bound for the rate of convergence $\lambda>0$.
\end{theorem}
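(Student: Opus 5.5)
Theorem~\ref{thm-2} is proved by linearizing \eqref{eq-main} at $(\mu^*,\nu^*)$. The linear operator is shown to have spectrum in $\{\Re z\le -\lambda\}$ on a weighted $L^2$ space. This linear decay is then upgraded to nonlinear stability, using parabolic smoothing to move from $W_2$-closeness to strong-norm closeness.

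\textbf{Equilibrium structure.} The equilibrium satisfies the Gibbs relations $\mu^*\propto e^{-\int f(\cdot,y)d\nu^*(y)}$ and $\nu^*\propto e^{\int f(x,\cdot)d\mu^*(x)}$. Both densities are therefore smooth to the extent $f$ allows, and bounded above and below on the compact torus. Writing $\mu_t=\mu^*(1+u_t)$ and $\nu_t=\nu^*(1+v_t)$, the perturbations satisfy
\[
\partial_t u = L_{\mu^*}u - \nabla_x\!\int f(\cdot,y)\nu^*(y)v(y)\,dy\cdot\nabla u\,\text{-type terms} + \mathcal N(u,v).
\]
Here $L_{\mu^*}=\Delta-\nabla V_{\mu^*}\cdot\nabla$ is the Witten--Langevin generator. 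More precisely, the linear part is $\partial_t u = L_{\mu^*}u + (\mu^*)^{-1}\nabla\cdot(\mu^*\nabla K v)$ with $Kv(x)=\int f(x,y)\nu^*v\,dy$, and the $\nu$-equation has the symmetric form with the sign reversed.

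\textbf{Linear decay.} The key structural fact is antisymmetry of the coupling. In $L^2(\mu^*)\oplus L^2(\nu^*)$, the coupling operators are adjoint to each other up to sign. Indeed, $\langle u,(\mu^*)^{-1}\nabla\cdot(\mu^*\nabla Kv)\rangle_{\mu^*}=-\iint \nabla u\cdot\nabla_x f\,\mu^*\nu^* v$, and the $v$-equation produces exactly the negative of this. The energy identity is therefore
\[
\tfrac12\tfrac{d}{dt}\big(\|u\|^2_{L^2(\mu^*)}+\|v\|^2_{L^2(\nu^*)}\big)=-\|\nabla u\|^2_{\mu^*}-\|\nabla v\|^2_{\nu^*}.
\]
The linearization preserves zero mean. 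The Poincaré inequalities for the Gibbs measures $\mu^*,\nu^*$ (Holley--Stroock on the torus, with an explicit constant in terms of $\operatorname{osc} f$) then give decay at rate $\lambda=\min(C_P(\mu^*),C_P(\nu^*))$. This is presumably the content of Remark~\ref{remark-sharpasymp}.

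\textbf{Nonlinear closure.} The quadratic remainder is $\mathcal N=(\mu^*)^{-1}\nabla\cdot(\mu^* u\nabla Kv)$, and analogously for $v$. It satisfies $|\langle u,\mathcal N\rangle|\le \|\nabla u\|\,\|u\|_{L^\infty}\|\nabla Kv\|_\infty\lesssim \|\nabla u\|\,\|u\|_\infty\|v\|_{L^2}$, since $K$ is smoothing because $f\in C^2$. It therefore suffices to keep $\|u\|_\infty,\|v\|_\infty$ small for all times. To do this, I first use the smoothing step. From $W_2$-closeness at time $0$, stability in $W_2$ over $[0,1]$ follows by a synchronous coupling/Grönwall argument for the McKean--Vlasov SDE, since the drift is Lipschitz in the measure in $W_1$. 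Combined with parabolic Hölder regularization (Theorem~\ref{thm:main_appendix}), this shows that $u_1,v_1$ are small in $C^{1,\alpha}$, controlled linearly by $W_2(\mu_0,\mu^*)+W_2(\nu_0,\nu^*)$. From $t=1$ onward, a bootstrap closes the argument: the energy gives $L^2$ decay as long as $\|u\|_\infty\le\epsilon$, and local parabolic estimates on unit intervals bound $\|u\|_{C^{1,\alpha}}$ by $\|u\|_{L^2}$ on the previous interval. Exponential decay in $C^{1,\alpha}$ then follows, and $W_2\lesssim \|u\|_{L^2(\mu^*)}$, for example via $W_2\le C\|\mu-\mu^*\|_{\dot H^{-1}}$ on the torus, yields \eqref{eq-maindecay}.

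\textbf{Main obstacle.} The main difficulty is the initial step: obtaining strong-norm smallness from a merely $W_2$-small initial datum, with a bound linear in the initial distance. The tool I would try first is the regularization estimate of the form $\|\mu_1-\mu^*\|_{C^{1,\alpha}}\lesssim W_1(\mu_0,\mu^*)$. I would obtain it by writing the difference via Duhamel against the heat kernel of a Fokker--Planck operator with smooth drift. The heat kernel at time $\ge s$ is Lipschitz in its initial point, so pairing it with $\mu_0-\mu^*$ gives a $W_1$ bound. The drift discrepancy is handled by the short-time $W_2$ stability above.
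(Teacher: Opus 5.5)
The gap is in your \textbf{Linear decay} step: the coupling is \emph{not} antisymmetric in $L^2(\mu^*)\oplus L^2(\nu^*)$. The $v$-equation contributes $+\iint \nabla v(y)\cdot\nabla_y f(x,y)\,u(x)\,d\mu^*(x)\,d\nu^*(y)$. There the derivatives fall on $v$ and on the $y$-slot of $f$, so this is not the negative of $-\iint\nabla u(x)\cdot\nabla_x f(x,y)\,v(y)\,d\mu^*\,d\nu^*$. Set $Kv=\int f(\cdot,y)v(y)\,d\nu^*(y)$, with adjoint $K^*u=\int f(x,\cdot)u(x)\,d\mu^*(x)$. Then the cross terms sum to
\[
\langle u,L_{\mu^*}Kv\rangle_{\mu^*}-\langle v,L_{\nu^*}K^*u\rangle_{\nu^*}
=\iint u(x)\,v(y)\,\bigl[(L_{\mu^*}^{(x)}-L_{\nu^*}^{(y)})f\bigr](x,y)\,d\mu^*(x)\,d\nu^*(y),
\]
which vanishes identically only if $K^*L_{\mu^*}=L_{\nu^*}K^*$. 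Here is a concrete failure. On $\mathbb T^1$ let $f=\kappa\cos(2\pi x)\cos(4\pi y)$, whose MNE is the uniform pair, and take $u=\epsilon\cos(2\pi x)$, $v=\epsilon\cos(4\pi y)$. The linearized flow gives $\tfrac12\tfrac{d}{dt}(\|u\|^2+\|v\|^2)=(3\kappa-10)\pi^2\epsilon^2>0$ for $\kappa>10/3$, and the nonlinear flow agrees up to $O(\epsilon^3)$. So there is no $L^2$ energy identity. The cross terms, of size $\|\nabla f\|_\infty\|\nabla u\|\,\|v\|$, cannot be absorbed by the dissipation without a smallness assumption on $f$. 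Your nonlinear closure is built on that identity, so it fails too. The spectral claim in your first sentence is true, but it does not follow from this computation. For this non-normal operator, spectrum alone would give decay only with a constant, via sectorial-semigroup theory, not the monotone energy you use.

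The true antisymmetry sits one derivative lower. The linearization factors as $\mathcal L=D(I+J)$, with $D=\mathrm{diag}(L_{\mu^*},L_{\nu^*})$ and $J(u,v)=(Kv,-K^*u)$. It is $J$, which carries no derivatives on $f$, that is antisymmetric. The dissipative quantity is therefore $\langle w,(-D)^{-1}w\rangle$, i.e.\ the $\dot H^{-1}(\mu^*)\oplus\dot H^{-1}(\nu^*)$ norm. For it, $\tfrac12\tfrac{d}{dt}\langle w,(-D)^{-1}w\rangle=-\|w\|^2_{L^2}\le-\lambda_{\rm sc}\langle w,(-D)^{-1}w\rangle$ by Poincar\'e, which is your rate but in a different norm.

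This is exactly the linearization of the paper's proof, since $W_2^2$ is locally the $\dot H^{-1}(\mu^*)$ norm. The paper runs an EVI directly on $W_2^2(\mu_t,\mu^*)+W_2^2(\nu_t,\nu^*)$. It uses the Hessian gap of Lemma~\ref{lemma-convconc}, which is equivalent to Poincar\'e by Remark~\ref{remark-alternative}, and its persistence in Lemma~\ref{lemma-locconvconc}. Lemmas~\ref{lemma-stability}, \ref{lemma-smoothing} and \ref{lemma-interpolation} then bring the solution into the strong-norm neighborhood. Because its Lyapunov function is $W_2^2$ itself, only qualitative strong-norm smallness is needed, so your ``main obstacle'' (a strong-norm bound linear in $W_2$) never arises.

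The same happens if you redo your argument in the $\dot H^{-1}$ norm. With $\phi=(-L_{\mu^*})^{-1}u$, the quadratic remainder is $-\int u\,\nabla\phi\cdot\nabla Kv\,d\mu^*$, bounded by $C\|u\|_{L^2}\|v\|_{L^2}\|u\|_{\dot H^{-1}}$. So closure needs only $\dot H^{-1}$-smallness. At $t=1$ that follows from $W_2$-smallness and bounded densities via a Loeper-type inequality.
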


The analysis of Wasserstein gradient flows has seen major developments in recent years.
For single-player mean-field Langevin dynamics (obtained from \eqref{eq-main} by taking $f(x,y)=f(x)$ and ignoring $\nu_t$), global exponential convergence to equilibrium can easily be deduced from functional inequalities such as the log-Sobolev inequality (LSI), with constants depending on the potential and the domain.
Related convergence results are also available for particle-interacting models with additional pairwise interaction energies; for results in this direction and applications in machine learning, see \cite{MR2053570,CB18,Monmarche:2025aa,CJS25} and references therein.

For the descent--ascent dynamics \eqref{eq-main}, a comparable convergence theory has been incomplete.
Convergence is known under strong convex--concavity assumptions on \(f\), or for modified dynamics; see Section~\ref{sec-related}.
However, for general \(f\in C^2\), the convergence, or possible non-convergence, of the original single-timescale dynamics has remained largely unknown, as noted in \cite{WC24_MFLDA}.
Notably, \cite{pmlr-v202-lu23b} used a LSI to construct a Lyapunov function and proved convergence in a sufficiently separated two-timescale regime.
Since the two-timescale approach can be interpreted as quasi-statically freezing one of the variables, it is plausible that a LSI-type control alone does not directly resolve the original single-timescale problem.
This is consistent with the finite-dimensional examples of \cite{Mourrat26PL}, which show that even two-sided Polyak--{\L}ojasiewicz inequalities do not preclude cycling behavior for gradient descent--ascent dynamics.
This perspective partly motivated the questions posed in \cite{WC24_MFLDA}: progress in this model setting would sharpen our understanding of Wasserstein geometry for optimization.

Specifically, \cite{WC24_MFLDA} asked:
(i) whether the MFL-DA \eqref{eq-main} converges globally to \((\mu^*,\nu^*)\) in \(W_2\), \(\mathrm{KL}\)-divergence, or Nikaido--Isoda (NI) error;
(ii) if global convergence is too much to expect, whether the equilibrium is locally stable, i.e., whether convergence holds from a neighborhood of \((\mu^*,\nu^*)\) in one of these notions; and
(iii) in either case, what the quantitative rate is.
Theorem~\ref{thm-2} provides an affirmative answer to the local questions (ii) and (iii), with an explicit exponential rate as stated in Remark~\ref{remark-sharpasymp}. In the present setting, \(W_2^2\) can be controlled by the relative entropy \(\mathrm{KL}\), and \(\mathrm{KL}\) can in turn be controlled by the NI error; see \eqref{eq:W2-KL-NI}.
Moreover, convergence of densities in \(C^{0}\) implies convergence of the NI error to zero; see Lemma~\ref{lem:C1alpha-implies-NI} in Appendix~\ref{appendix-strong-NI}.
Consequently, Theorem~\ref{thm-2} yields convergence in stronger senses while requiring a weaker proximity assumption at initialization.

Indeed, the local stability in Theorem~\ref{thm-2} is essentially optimal at this level of generality: a recent counterexample of \cite{MourratPillaudVivien26Oscillating} shows that global convergence of the MFL-DA cannot hold in general: they construct smooth games with oscillatory, non-convergent solutions. Thus the question (i) has a negative answer in general. Together with Theorem~\ref{thm-2}, this answers the questions raised by \cite{WC24_MFLDA} in a natural sense: global convergence is false in general, while the mixed Nash equilibrium is locally exponentially stable.


\medskip

In practice, the mean-field dynamics~\eqref{eq-main} is approximated by the corresponding $N$--particle system, which is the coupled diffusion on $(\Tn)^{2N}$:
\begin{equation}\label{eq:particle}
\begin{cases}
dX_t^i \;=\; - \frac1N \sum_{k=1}^N \nabla_x f (X_t^i,Y_t^k)\,dt + \sqrt{2}\,dB_t^i,\\[1mm]
dY_t^j \;=\; +\frac1N \sum_{k=1}^N \nabla _yf(X_t^k,Y_t^j)\,dt + \sqrt{2}\,dW_t^j,
\end{cases}
\qquad i,j=1,\dots,N,
\end{equation}
with independent Brownian motions $(B^i)_{i\le N},(W^j)_{j\le N}$. The fact that the empirical measures
\begin{equation}
    \label{eq:empirical_particle}
\mu_t^N:=\frac1N\sum_{i=1}^N\delta_{X_t^i},\qquad \nu_t^N:=\frac1N\sum_{j=1}^N\delta_{Y_t^j}
\end{equation}
converge  almost surely, for any $t>0$, as $N\rightarrow \infty$ to the solutions of \eqref{eq-main} provided this holds at $t=0$ is well known. However, the  error bounds to quantify this convergence are of order $e^{Lt} N^{-\eta}$ for some $L,\eta>0$ and are thus informative only up to times of order $\ln N$. Alternatively, for a fixed $N\in\mathbb N$, it is clear by standard Markov arguments that the system~\eqref{eq:particle} is ergodic and that its law converges exponentially fast as $t\rightarrow \infty$ to its invariant measure. However, the convergence rate obtained in this way would depend strongly on $N$ and would thus not be very informative in practice. By contrast, our next result provides the particle counterpart of Theorem~\ref{thm-2}  and involves a long-time (approximate) convergence rate independent of $N$.
\begin{theorem}[Particle stability over exponential times]
   \label{thm-particle} 
   There exist $\delta,\lambda,\theta,\eta>0$ and $C<\infty$ depending on $f$ with the following significance: for any $N \ge 1$, if the initial data of~\eqref{eq:particle} satisfies 
   \be \label{eq:initial_particle}
   \mathbb P \left(  W_2(\mu_0^N,\mu^*)+W_2(\nu_0^N,\nu^*)  \ge \delta\right) \le R_0 N^{-\eta}\ee 
   for some $R_0>0$,    then for all $t \leqslant e^{\theta N}$
\be \label{eq-maindecay-particle}
\mathbb E \left [ W_2(\mu_t^N,\mu^*)+W_2(\nu_t^N,\nu^*) \right] \le C e^{-\lambda t}\,\mathbb E \left[W_2(\mu_0^N,\mu^*)+W_2(\nu_0^N,\nu^*)\right] + C(1+R_0) N^{-\eta}.
\ee
\end{theorem}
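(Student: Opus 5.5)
We prove Theorem~\ref{thm-particle} by transferring the mean-field contraction of Theorem~\ref{thm-2} to the particle system over successive time windows of fixed length. Throughout we write
\[
D(\mu,\nu):=W_2(\mu,\mu^*)+W_2(\nu,\nu^*).
\]

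\textbf{Step 1: a fixed time horizon.} By Theorem~\ref{thm-2}, choose $T>0$ with $Ce^{-\lambda T}\le 1/4$. Then every mean-field solution starting in $\{D<\delta\}$ satisfies $D(\mu_T,\nu_T)\le \tfrac14 D(\mu_0,\nu_0)$. Moreover it stays in $\{D< C\delta\}$ on $[0,T]$. We shrink $\delta$ if necessary so that everything happens inside the basin where Theorem~\ref{thm-2} applies.

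\textbf{Step 2: a finite-horizon propagation of chaos estimate started from empirical data.} On the torus, for fixed $T$ and any (random) initial empirical measures, we couple \eqref{eq:particle} synchronously with the nonlinear flow \eqref{eq-main} started from the same empirical initial data. The flow regularizes instantly, so this makes sense. We then aim for a bound of the form
\[
\mathbb E\Big[\sup_{s\le T}\big(W_2(\mu^N_s,\mu_s)+W_2(\nu^N_s,\nu_s)\big)\,\Big|\,\mathcal F_0\Big]\le C_T N^{-\eta}.
\]
To get it, we would use a Gr\"onwall argument on the coupled particles, with the Lipschitz drift coming from $f\in C^2$, together with a Fournier--Guillin-type bound on $W_2$ between i.i.d.\ samples and their law. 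An alternative is to compare with $N$ independent McKean--Vlasov particles driven by the flow. Since the domain is compact and the drifts are bounded, we can add a concentration inequality (bounded differences / Azuma for the particle system) to obtain
\[
\mathbb P\big(\text{error}>\epsilon \mid \mathcal F_0\big)\le e^{-cN\epsilon^{2}}.
\]

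\textbf{Step 3: iteration over windows $[kT,(k+1)T]$.} Combining Steps 1 and 2, on the event that the window starts in $\{D<\delta\}$ we get
\[
D_{(k+1)T}\le \tfrac14 D_{kT}+E_k,
\]
where $E_k$ is the window's propagation error, with $\mathbb E[E_k\mid\mathcal F_{kT}]\le C N^{-\eta}$ and $\mathbb P(E_k>\delta/2)\le e^{-cN}$. By induction, the system leaves the good region before time $e^{\theta N}$ only if some window has a large error. A union bound over the $e^{\theta N}/T$ windows gives probability at most $e^{\theta N - cN}\le e^{-cN/2}$ for $\theta<c/2$. Adding the initial bad event of probability $R_0N^{-\eta}$ gives the full bad event $B$.

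On $B^c$, unrolling the recursion gives
\[
\mathbb E\big[D_{kT}\mathbf 1_{B^c}\big]\le 4^{-k}\,\mathbb E[D_0]+\tfrac43\, C N^{-\eta}.
\]
Intermediate times are handled by the sup in Step 2 and the bounded excursion in Step 1. On $B$ we use $D\le 2\,\mathrm{diam}$ on the torus, which contributes at most $C(R_0N^{-\eta}+e^{-cN/2})$. Together these give \eqref{eq-maindecay-particle} with $\lambda=\log 4/T$.

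\textbf{Main obstacle.} The delicate point is Step 2. We need an error estimate uniform in the (possibly singular, empirical) initial condition, and we need exponential concentration in $N$ rather than only a polynomial moment bound. Without the concentration, the union bound over exponentially many windows fails. I would first try to obtain the concentration from the synchronous coupling plus a martingale/Lipschitz-functional concentration, using that $W_2$ of empirical measures is Lipschitz in the particle positions with constant $N^{-1/2}$. I would treat the mean term separately via known empirical-measure rates on $\Tn$; these yield some $\eta>0$ depending on $n$.
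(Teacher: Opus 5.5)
Your proposal is correct and follows essentially the paper's route: a window $T$ on which Theorem~\ref{thm-2} contracts by a fixed factor, then a finite-horizon comparison with the mean-field flow restarted from the empirical data, and finally a union bound over the $O(e^{\theta N})$ windows. The comparison uses a synchronous coupling to independent McKean--Vlasov particles, a pathwise Gr\"onwall bound, Fournier--Guillin for the mean error, and Bolley--Guillin--Villani-type exponential concentration for the i.i.d.\ part. The differences are cosmetic: you unroll the recursion pathwise on a global good event $B^c$ instead of using the stopping time $\tau$ and Proposition~\ref{prop:stopped_expectations}, and the supremum in your Step~2 expectation bound is unnecessary, since for a fixed $t$ only the error at that time enters. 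One caveat: a bounded-differences concentration argument would have to be run on $W_1$, using $W_2^2\le D_{\mathbb T^n}W_1$ on the torus, because differences of $W_2$ of order $N^{-1/2}$ give no decay in $N$.
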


From \cite{fournier2015rate}, the condition~\eqref{eq:initial_particle} is met if the particles  are initially i.i.d. with distribution in a suitable Wasserstein ball around $\mu^*\otimes\nu^*$. The restriction to times smaller than $e^{\theta N}$ is not a problem in practice since this time-scale is never reached on actual simulations when $N$ is large. Moreover, this restriction is necessary in our analysis which relies only on local information around the MNE as Theorem~\ref{thm-2}, while by ergodicity the system~\eqref{eq:particle} will eventually leave any neighborhood of the MNE given sufficient time. In particular, if the mean-field equation~\eqref{eq-main} admits as in~\cite{MourratPillaudVivien26Oscillating} a stable periodic orbit, then it is unknown whether the proportion of time spent by the particle system in the vicinity of this orbit is negligible or not in the long-time, and thus whether a bound of the form~\eqref{eq-maindecay-particle} can hold for all times $t\ge 0$. The only analysis in a similar spirit to Theorem~\ref{thm-particle} (local convergence for mean-field particles over exponentially long time-scale) we are aware of is~\cite{monmarche2025long} for the mean-field Langevin process.

\section{Related work}\label{sec-related}

\paragraph{\textbf{Mean-field minimax dynamics.}}
Mean-field and Wasserstein gradient flow viewpoints have become central in the
analysis of learning dynamics on spaces of probability measures
\cite{CB18,MMN18,chizatmean}. These ideas have recently been extended from
minimization to stochastic minimax optimization and games
\cite{domingoenrich2020,WC24_MFLDA,pmlr-v291-cai25a,liu2025convergence},
building on classical McKean--Vlasov and interacting-particle theory
\cite{MR1842429,MR2053570}. In finite-dimensional zero-sum games, pure Nash
equilibria may fail to exist or be dynamically unstable beyond restrictive
settings \cite{mazumdar2019findinglocalnashequilibria,daskalakis2020complexityconstrainedminmaxoptimization}.
This motivates the mixed-strategy formulation, where players choose probability
measures. In this direction, \cite{pmlr-v97-hsieh19b} studied mixed equilibria in
GAN-type models, and \cite{domingoenrich2020} introduced the entropy-regularized
mean-field game \eqref{eq-functional}, proved uniqueness of the MNE, and derived
the MFL-DA equation \eqref{eq-main} as the mean-field limit of a stochastic
descent--ascent particle system.

\paragraph{\textbf{Modified dynamics and structured regimes.}}
Several convergence results are known for variants of \eqref{eq-main} or under
additional structure. In the quasistatic two-timescale limit, the fast variable
is replaced by an entropy-regularized best response, reducing the dynamics to a
single-player Wasserstein gradient flow; global convergence is proved in
\cite{ma2022provablyconvergentquasistaticdynamics}. For separated but finite
timescales, \cite{pmlr-v202-lu23b} prove global convergence under an LSI
assumption via a Lyapunov approach. Other modifications include temperature
annealing and time-averaged gradients: see \cite{pmlr-v202-lu23b,LuMon25} for
annealing schedules, and \cite{Kim2024SymmetricML,LuMon25} for averaged-gradient
dynamics. Langevin particle approximations and related discretized schemes have
also been studied in mean-field learning dynamics, with quantitative guarantees
often relying on convexity, LSI, or related structural assumptions
\cite{pmlr-v291-cai25a,Wang_2025,liu2025convergence,suzuki2023convergencemeanfieldlangevindynamics}.

\paragraph{\textbf{Single-timescale regime.}}
The original single-timescale MFL-DA \eqref{eq-main} is less understood for general \(f\). Existing convergence results typically rely on strong
convex--concavity of $f$ or related global contraction assumptions
\cite{pmlr-v291-cai25a,conger2025wgf,isobe2025}. The main novelty of the present
work is to prove local exponential stability near the mixed Nash equilibrium
without imposing global convex--concavity. Finally, shortly after the first version of this work appeared,\footnote{The first
version contained Theorem~\ref{thm-2}, but not the particle
approximation result, Theorem~\ref{thm-particle}.}
\cite{WangChizat26Local} posted a closely related independent preprint on local
stability for the single-timescale MFL-DA. They prove local stability in a \(\chi^2\),
or weighted \(L^2\), topology. Although the functional setting and estimates
differ, both works exploit the same basic mechanism: near the mixed equilibrium,
the coupled dynamics is controlled by a positive linearized coercivity structure;
see Section~\ref{sec-formalexplanation}.

\section{Mean-field stability: proof of Theorem~\ref{thm-2}}

\subsection{Heuristic explanation of stability mechanism} \label{sec-formalexplanation}
We give an intuitive explanation of the main idea behind our analysis and sketch the proof strategy.

\medskip
\noindent
\textbf{(1) Wasserstein GDA viewpoint.}
Our starting observation is that the mean-field learning dynamics
\eqref{eq-main} can be interpreted as a gradient
descent--ascent (GDA) flow of the functional $F(\mu,\nu)$ on the product
Wasserstein space $\mathcal P_2(\mathcal X)\times\mathcal P_2(\mathcal Y)$.
Near a mixed Nash equilibrium $(\mu^*,\nu^*)$, the local behavior of the dynamics
is therefore governed by the second variation (Hessian) of $F$ in the sense
of Otto's formal Riemannian calculus.

\medskip
\noindent
\textbf{(2) Hessian and elliptic structure.}
Formally differentiating $\mu\mapsto F(\mu,\nu^*)$ along smooth Wasserstein
geodesics, one finds that the Hessian at $\mu^*$ induces a symmetric bilinear
form on the tangent space $\Tan_{\mu^*}\mathcal P_2(\mathcal X)$. Recall that the tangent space consists of gradient vector fields $\Phi=\nabla\phi$.
More precisely, by applying Lemma~\ref{lem:first-second-var} (variation formulae) for $\mu \mapsto  F(\mu,\nu^*)=:G(\mu)$, this Hessian reads
\[
\frac{\partial^2}{\partial t\,\partial s}\Big|_{t=s=0}
F\bigl((\mathrm{id}+t\Phi_1+s\Phi_2)_\#\mu^*,\nu^*\bigr)
= \langle L\Phi_1,\Phi_2\rangle_{L^2_{\mu^*}}=\langle \Phi_1,L \Phi_2\rangle_{L^2_{\mu^*}}.
\]
Here $L=L_{\mu^*,\nu^*}$ is the associated symmetric operator acting on gradient vector fields defined by
\begin{equation}\label{eq-L}
L\Phi
= -\rho_*^{-1}\nabla\cdot(\rho_*\nabla\Phi)
  + \nabla^2 V_{\nu^*}\cdot\Phi,
\qquad
V_{\nu^*}(x)=\int_{\mathcal Y} f(x,y)\,d\nu^*(y),
\end{equation}
where $\rho_*=\frac{d\mu^*}{dx}$ is the density of $\mu^*$. The first term of $L$ corresponds to the contribution from the entropy (acting like a weighted Laplacian), and the second captures the interaction geometry. This operator $L$ is uniformly elliptic on the space of gradient vector fields.

If $V_{\nu^*}$ is (strongly) $\lambda$-convex, then $\nabla^2 V_{\nu^*} \ge \lambda I$, which implies that $L$ is positive definite in the sense that
\begin{equation}
\label{eq-198}
\langle L\Phi,\Phi\rangle_{L^2_{\mu^*}}
\ge \lambda\|\Phi\|_{L^2_{\mu^*}}^2 \quad \text{for all } \Phi=\nabla\phi,
\end{equation}
which is precisely the infinitesimal manifestation of $\lambda$-displacement convexity of $\mu\mapsto F(\mu,\nu^*)$.\footnote{We recall the definition of displacement convexity and its infinitesimal Hessian characterization in Appendix~\ref{app:disp-convexity}.} An analogous statement holds in the $\nu$-variable with the sign reversed. 

\medskip
\noindent
\textbf{(3) Local convex--concavity.}
If $F$ were globally $\lambda$-displacement convex--concave for some $\lambda>0$ (which happens, for instance, when $f(x,y)$ is $\lambda$-convex--concave), the resulting Wasserstein GDA would behave exactly like the classical GDA for strongly
convex--concave functions, yielding exponential convergence; see Lemma~\ref{lemma-stability-cc} for the corresponding global contraction argument.

In general, however, $f(x,y)$ need not be strongly convex--concave, so global coercivity fails. Our first motivating observation is that
\emph{local (infinitesimal)} convex--concavity nevertheless holds at the equilibrium
$(\mu^*,\nu^*)$ restricted to the tangent space of gradient fields. i.e., $F$ is $\lambda$-displacement convex--concave `at $(\mu^*,\nu^*)$' for $\lambda=0$. This is consistent with the finite-dimensional intuition that at a saddle
point $(x^*,y^*)$ of a $C^2$ function $f$ one has
$\nabla_{xx}^2 f(x^*,y^*)\ge0$ and $\nabla_{yy}^2 f(x^*,y^*)\le0$.

Crucially, the entropic regularization provides a decisive advantage over the classical setting.
While the Hessian might only be non-negative definite in a finite-dimensional degenerate game, the diffusion introduced by the entropy term ensures a coercive \emph{spectral gap} for the linearized operator corresponding to the Wasserstein Hessian. Through spectral theory, we show that this mechanism upgrades the marginal non-negativity to \emph{strict} coercivity on the relevant tangent spaces (Lemma~\ref{lemma-convconc}).\footnote{This coercive spectral gap and the resulting stability should not be viewed as a generic effect of entropy alone; it relies essentially on the equilibrium structure of the present problem; see Remark~\ref{rem:entropy-not-automatic}.}
We further prove that this strict positivity is robust, extending to a neighborhood of the equilibrium (Lemma~\ref{lemma-locconvconc}).
Combined with the smoothing effect of \eqref{eq-main}, this restored local
convex--concave structure drives the exponential stability result in
Theorem~\ref{thm-2}.

\subsection{Local contraction and completion of the proof}\label{sec-MFLDA}
The proof of Theorem~\ref{thm-2} will be given at the end of this section. The next two lemmas constitute the most crucial part of our result. Recall that, for given absolutely continuous measure $\mu$, constant speed geodesics in Wasserstein space starting from $\mu$ can be locally written as a form $t\mapsto (id+ t\nabla \phi )_\# \mu$ for potential function $\phi$ (see Appendix~\ref{sec:ot} or \cite{mccann1997convexity}). We estimate the second derivative of $F$ along such geodesics, corresponding to the Hessian of $F$ in Wasserstein space. For background on function spaces such as Sobolev space $W^{k,p}$ or Holder space $C^{k,\alpha}$, optimal transport, and related classical results used in the proofs, we refer the reader to Appendix~\ref{app-prelim}.

\begin{lemma} [Spectral gap for Wasserstein Hessian at Nash equilibrium]\label{lemma-convconc} Let $(\mu^*,\nu^*)$ be the unique MNE. There exists $\lambda_{gap}=\lambda_{gap}(f)>0$ such that, for any vector fields $\nabla \phi$ and $\nabla \psi \in W^{2,p}(\Tn)$ with $p>n$, there hold
\begin{align} \label{eq-conv1}\frac{d^2}{dt^2}\Big \vert_{t=0}F((id+t \nabla \phi)_\# \mu^* , \nu^*) &\ge +\lambda_{gap} \int |\nabla \phi |^2 d\mu^* , \\  \label{eq-conc1}
\frac{d^2}{dt^2}\Big \vert_{t=0} F(\mu^* , (id+t \nabla \psi)_\# \nu^*) &\le -\lambda_{gap} \int |\nabla \psi |^2 d\nu^* .\end{align}

\end{lemma}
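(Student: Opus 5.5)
The approach is to compute the second variation explicitly and recognize it as an integrated Bochner ($\Gamma_2$) identity for the weighted Laplacian attached to $\mu^*$. The spectral gap then follows from a Poincaré inequality for $\mu^*$. The first step uses the fixed-point characterization of the MNE. Since $(\mu^*,\nu^*)$ is a stationary point of \eqref{eq-main}, both densities are Gibbs measures:
\[
\rho_* = \tfrac{1}{Z}e^{-V_{\nu^*}}, \quad V_{\nu^*}(x)=\int f(x,y)\,d\nu^*(y), \qquad \frac{d\nu^*}{dy}=\tfrac{1}{Z'}e^{U_{\mu^*}}, \quad U_{\mu^*}(y)=\int f(x,y)\,d\mu^*(x).
\]
Because $f\in C^2$ on a compact set, $V:=V_{\nu^*}$ and $U:=U_{\mu^*}$ are $C^2$ with bounded oscillation. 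The density $\rho_*$ is therefore $C^2$ and bounded above and below by positive constants.

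\textbf{Step 1 (second variation).} Fix $\phi$ with $\nabla\phi\in W^{2,p}$, $p>n$. By Morrey, $\nabla\phi\in C^{1,\beta}$, so $T_t=\mathrm{id}+t\nabla\phi$ is a $C^1$ diffeomorphism of $\Tn$ for $|t|$ small. We write $F((T_t)_\#\mu^*,\nu^*)=\int V\circ T_t\,d\mu^* + H((T_t)_\#\mu^*)$, up to the constant $-H(\nu^*)$. For the potential part, differentiating twice under the integral gives $\int \nabla^2V(\nabla\phi,\nabla\phi)\,d\mu^*$. For the entropy part, the change of variables formula gives
\[
H((T_t)_\#\mu^*)=H(\mu^*)-\int \log\det(I+t\nabla^2\phi)\,d\mu^*,
\]
whose second derivative at $t=0$ is $\int \mathrm{tr}((\nabla^2\phi)^2)\,d\mu^*=\int|\nabla^2\phi|^2\,d\mu^*$. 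Differentiation under the integral is justified by the uniform bounds on $\nabla^2\phi$ near $t=0$ (this is the content of Lemma~\ref{lem:first-second-var}). Hence
\[
\frac{d^2}{dt^2}\Big|_{t=0}F((\mathrm{id}+t\nabla\phi)_\#\mu^*,\nu^*)=\int\Bigl(|\nabla^2\phi|^2+\nabla^2V(\nabla\phi,\nabla\phi)\Bigr)d\mu^*.
\]

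\textbf{Step 2 (Bochner identity).} Let $\mathcal L\phi=\Delta\phi-\nabla V\cdot\nabla\phi$, which is symmetric in $L^2(\mu^*)$ precisely because $\rho_*\propto e^{-V}$. The pointwise Bochner formula $\tfrac12\mathcal L|\nabla\phi|^2-\nabla\phi\cdot\nabla\mathcal L\phi=|\nabla^2\phi|^2+\nabla^2V(\nabla\phi,\nabla\phi)$ holds (flat torus, so no Ricci term). Integrating it against $\mu^*$ gives
\[
\int\Bigl(|\nabla^2\phi|^2+\nabla^2V(\nabla\phi,\nabla\phi)\Bigr)d\mu^*=\int(\mathcal L\phi)^2\,d\mu^*,
\]
because $\int\mathcal L(\cdot)\,d\mu^*=0$ and $-\int\nabla\phi\cdot\nabla\mathcal L\phi\,d\mu^*=\int(\mathcal L\phi)^2d\mu^*$. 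I expect the main technical obstacle to be justifying this identity at the stated regularity: $\phi\in W^{3,p}$ and $V$ only $C^2$, so $\mathcal L\phi$ is only $W^{1,p}$ and the pointwise formula involves $\nabla^3\phi$. I would handle this by proving the identity for smooth $\phi$ and then approximating $\phi$ in $W^{3,p}$ by mollification. Both sides involve only $\nabla\phi,\nabla^2\phi,\nabla V,\nabla^2V$ and are continuous under this approximation; equivalently, the integrated form needs only one integration by parts, which is valid for $\phi\in W^{2,2}$.

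\textbf{Step 3 (Poincaré).} Since $\mu^*$ has density bounded between $e^{-\mathrm{osc}V}$-comparable constants, the Holley--Stroock perturbation of the Lebesgue Poincaré inequality on $\Tn$ gives $\mathrm{Var}_{\mu^*}(g)\le C_P\int|\nabla g|^2d\mu^*$ with $C_P^{-1}\ge 4\pi^2e^{-\mathrm{osc}V}$. Equivalently, $-\mathcal L\ge C_P^{-1}$ on mean-zero functions, so
\[
\int(\mathcal L\phi)^2d\mu^*\ge C_P^{-1}\int\phi(-\mathcal L\phi)\,d\mu^*=C_P^{-1}\int|\nabla\phi|^2d\mu^*,
\]
which can be checked by spectral decomposition or by Cauchy--Schwarz applied to $\int|\nabla\phi|^2=-\int\phi\,\mathcal L\phi$. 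This proves \eqref{eq-conv1} with $\lambda_{gap}=C_P^{-1}$.

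For \eqref{eq-conc1}, I would argue identically with $F(\mu^*,\nu)=\int U\,d\nu-H(\nu)+\mathrm{const}$ and $\nu^*\propto e^{U}$. The second variation is $-\int\bigl(|\nabla^2\psi|^2+\nabla^2(-U)(\nabla\psi,\nabla\psi)\bigr)d\nu^*$, and the same Bochner--Poincaré argument applied with potential $-U$ gives the bound with the sign reversed. Taking $\lambda_{gap}$ as the minimum of the two Poincaré constants completes the plan. The resulting constant depends on $f$ only through $\|f\|_{C^0}$ via $\mathrm{osc}$, since $\mathrm{osc}\,V,\mathrm{osc}\,U\le2\|f\|_\infty$.
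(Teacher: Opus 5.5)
Your proposal is correct. Steps~1 and~2 agree with the paper. The second variation formula is Lemma~\ref{lem:first-second-var}, and your integrated Bochner identity is exactly \eqref{eq-101}. The paper obtains \eqref{eq-101} by direct integrations by parts using $\nabla^2 V_{\nu^*}=\rho_*^{-2}\nabla\rho_*\otimes\nabla\rho_*-\rho_*^{-1}\nabla^2\rho_*$ rather than through $\Gamma_2$-calculus, but it is the same computation.

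The genuine difference is in the final step. The paper defines $\lambda_{gap}$ as the infimum of the Rayleigh quotient over $W^{2,2}$ and shows via Banach--Alaoglu and Rellich--Kondrachov that the infimum is attained. It then rules out $\lambda_{gap}=0$ because a minimizer would satisfy $\nabla\cdot(e^{-V_{\nu^*}}\nabla\phi_0)=0$, hence $\nabla\phi_0=0$. You instead use the Holley--Stroock Poincar\'e inequality for $\mu^*$ and convert it into the vector bound via $\int|\nabla\phi|^2\,d\mu^*=-\int(\phi-\bar\phi)\,\mathcal L\phi\,d\mu^*$ and Cauchy--Schwarz. This is the alternative route the paper only sketches in Remark~\ref{remark-alternative}. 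Your Cauchy--Schwarz line directly proves the one direction of the scalar/vector gap equivalence that the remark takes from the spectral theorem or a heat-flow argument.

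What each route buys:
\begin{itemize}
\item Your route gives the explicit bound $\lambda_{gap}\ge 4\pi^2e^{-\operatorname{osc}f}$, depending on $f$ only through its oscillation, and avoids compactness altogether. It would also carry over to settings (e.g.\ $\R^n$ with confinement) where Rellich fails but a Poincar\'e inequality holds.
\item The paper's argument is self-contained, using no external functional inequality, and identifies the sharp constant as an attained minimum. On its own, however, it gives no numerical lower bound.
\end{itemize}

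One small inaccuracy, which does not affect the proof: your parenthetical claim that the integrated identity needs only one integration by parts at $W^{2,2}$ regularity is not right. Exchanging $\int(\Delta\phi)^2$-type terms with $\int|\nabla^2\phi|^2$-type terms requires third derivatives or a density argument. This is harmless here, since $\phi\in W^{3,p}$ and your mollification argument covers it anyway.
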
 
{
\renewcommand{\proofname}{Proof Sketch (full proof in Appendix~\ref{app-MFLDA})}
\begin{proof} 
The second-variation formula along constant-speed geodesics in Wasserstein space is well known in the literature. It appears formally in \cite[Formula 15.7]{villani2008optimal}, and under our stated assumptions we provide a rigorous proof in Lemma~\ref{appendix-variationformula}. The formula reads  
 \begin{equation}\label{eq-165}  \frac{d^2}{dt^2}\Big \vert_{t=0}F((id+t \nabla \phi)_\# \mu^* , \nu^*) = \int_{\mathcal{X}} \bigl (|\nabla^2  \phi |^2 + \langle \nabla \phi, \nabla^2 V_{\nu^*} \nabla \phi\rangle \bigr) \, d\mu^*  .\end{equation}
Here \(V_{\nu^*}(x)=\int_{\mathcal Y} f(x,y)\,d\nu^*(y)\), and
\(|\nabla^2\phi|\) denotes the Hilbert--Schmidt norm. At the equilibrium $\frac{d\mu^*}{dx} \propto e^{-V_{\nu^*}}$ we are  considering, it is non-negative (Lemma~\ref{lemma-A7eq101} or \cite[(1.16.2)]{bakry2013analysis})
\begin{equation}\label{eq-166}\int \bigl (|\nabla^2  \phi |^2 + \langle \nabla \phi, \nabla^2 V_{\nu^*} \nabla \phi\rangle \bigr) \, d\mu^ *= \int |\Delta \phi - \langle \nabla \phi ,\nabla V_{\nu^*}\rangle |^2 \, d\mu^* \ge 0.\end{equation} 
This shows that the Hessian is nonnegative in every gradient direction $\nabla\phi$.

Moreover, the quadratic form in \eqref{eq-166} does not vanish on any nontrivial direction.
Indeed, if it were zero for some $\phi$, then
$\Delta \phi - \langle \nabla \phi,\nabla V_{\nu^*}\rangle =0$. Observe this implies $ \nabla \cdot (e^{-V_{\nu^*}} \nabla \phi )=0$.
Multiplying this equation by $\phi$ and integrating by parts, we conclude that $\int |\nabla \phi |^2 d\mu^* =0$. Thus we have $\nabla \phi \equiv 0$, a contradiction. The remaining point is to upgrade this strict positivity to a uniform gap $\lambda_{gap}>0$. This follows from a compactness argument which utilizes the gradient energy $\int |\nabla^2 \phi|^2 d\mu^*$, or equivalently from a spectral theory for the elliptic operator $L$ in \eqref{eq-L}. Roughly, this means that the spectrum of \(L\) is contained in \([0,\infty)\) by \eqref{eq-166}. If one thinks in terms of a discrete-spectrum picture, then the absence of \(0\) as an eigenvalue forces the smallest eigenvalue to be strictly positive.\footnote{This is only a high-level sketch. The full proof does not require a complete discrete-spectrum theorem for \(L\). What is proved and used there is the weaker Rayleigh-quotient argument: the relevant quotient attains its infimum, and the triviality of the kernel implies that this infimum is strictly positive.}, which yields \eqref{eq-conv1}. Although the underlying idea is standard in the spectral theory of elliptic equations, we include a self-contained argument tailored to our setting. The same argument applies to the $\nu$-variable and gives \eqref{eq-conc1}. See Remark~\ref{remark-alternative} for characterization of optimal (the largest) $\lambda_{gap}>0$ make the lemma hold and an alternative proof based on the spectral theory of weighted scalar Laplacian. \end{proof}
}

To turn Lemma~\ref{lemma-convconc} into a neighborhood statement, we use the continuous dependence of the second variation of $F$ on $(\mu,\nu)$. As a result, the convex--concavity constants persist, up to an arbitrarily small loss. Here and below, \(T_{\mu_0\to\mu_1}\) denotes the optimal transport map from \(\mu_0\) to \(\mu_1\).
\begin{lemma}[Local convex--concavity] \label{lemma-locconvconc}For every small $\eta >0$ and $p>n $, there exists $\delta>0$ such that if $\mu = (T_{\mu^*\to \mu})_\# \mu^*  $, $\nu = (T_{\nu^*\to \nu})_\# \nu^*$ with $\Vert T_{\mu^*\to \mu} -id \Vert_{W^{2,p}}$, $\Vert T_{\nu^*\to \nu} -id \Vert_{W^{2,p}}\le \delta$, then for any $\nabla \phi$ and $\nabla \psi \in W^{2,p}({\Tn})$, there hold
\begin{align}\label{eq-conv2} \frac{d^2}{dt^2}\Big \vert_{t=0}F((id+t \nabla \phi)_\# \mu , \nu) \ge +(1-\eta)\lambda_{gap} \int |\nabla \phi |^2 d\mu ,\\
    \label{eq-conc2} \frac{d^2}{dt^2}\Big \vert_{t=0} F(\mu , (id+t \nabla \psi)_\# \nu) \le -(1-\eta)\lambda_{gap} \int |\nabla \psi |^2 d\nu .\end{align}  
Here, $\lambda_{gap}=\lambda_{gap}(f)>0$ is the constant from Lemma~\ref{lemma-convconc} and $\delta $ depends on $f$, $p$ and $\eta$.
{
\renewcommand{\proofname}{Proof Sketch (full proof in Appendix~\ref{app-MFLDA})}
\begin{proof}

Inspecting the quadratic form in \eqref{eq-165}, one sees that for fixed $\phi$ it depends continuously on $\mu$ and $\nu$ (equivalently, on the induced potentials $V_\nu$) in suitable topologies.
The $W^{2,p}$-smallness assumption on the optimal transport maps in the statement is sufficient to control this dependence. Indeed, $W^{2,p}$-closeness of the transport maps implies $W^{1,p}$-closeness of the corresponding densities (Lemma~\ref{lemma:sobolev_stability}), which in turn yields $C^0$-closeness of densities by Morrey's embedding for $p>n$ (see Theorem~\ref{thm-morrey} or \cite{evans2010partial}). $C^0$-closeness of density $\nu$ to $\nu^*$ implies $C^2$-closeness of $V_{\nu}$ to $V_{\nu^*}$. Under these continuity properties, the coercive bounds in Lemma~\ref{lemma-convconc} persist with a loss of a small fraction of $\int |\nabla \phi|^2 d\mu$, provided the transport maps are sufficiently close to the identity.
\end{proof}
}
\end{lemma}

Next, we explain how the local coercivity obtained above enters the dynamics. The guiding model example is the finite-dimensional strongly convex--concave GDA
\[
\dot x_t=-\nabla_x f(x_t,y_t),\qquad
\dot y_t=\nabla_y f(x_t,y_t).
\]
If \(f\) is \(\lambda\)-strongly convex in \(x\) and \(\lambda\)-strongly concave
in \(y\), and if \((x^*,y^*)\) is a saddle point, then
\[
\begin{aligned}
\frac12\frac{d}{dt}\bigl(|x_t-x^*|^2+|y_t-y^*|^2\bigr)
&=
-\langle \nabla_x f(x_t,y_t),x_t-x^*\rangle
+\langle \nabla_y f(x_t,y_t),y_t-y^*\rangle  \\
&\le
-\tfrac\lambda 2\bigl(|x_t-x^*|^2+|y_t-y^*|^2\bigr)
+ f(x^*,y_t)-f(x_t,y^*) \\
&\le
-\tfrac\lambda 2\bigl(|x_t-x^*|^2+|y_t-y^*|^2\bigr),
\end{aligned}
\]
where the first inequality uses strong convexity--concavity (above tangent inequality) and the last inequality uses the saddle-point property.\footnote{Applying the same strong convexity--concavity estimates at \((x^*,y^*)\) gives the sharper bound \(\frac12E'(t)\le-\lambda E(t)\), where \(E(t)=|x_t-x^*|^2+|y_t-y^*|^2\).} Thus the squared distance to the equilibrium decays exponentially. The Wasserstein argument below is the same computation with Euclidean convexity replaced by displacement convexity (see Appendix~\ref{app:disp-convexity}) and with the metric derivative encoded through the evolution variational inequality (EVI). 

We first record the corresponding global estimate in the Wasserstein setting: a rigorous proof is given in Appendix. It is useful even when the convexity constant is negative: in that case it gives an a priori exponential growth bound, while for a positive constant it gives global exponential convergence.
\begin{lemma}[Stability]\label{lemma-stability}
Suppose that $\nabla^2_{xx}f(x,y)\ge \lambda' I$, $
\nabla^2_{yy}f(x,y)\le -\lambda' I$ for all \(x,y\in\mathbb T^n\), for some \(\lambda'>-\infty\). Then, for any
initial data \((\mu_0,\nu_0)\), the MFL-DA satisfies
\[
W_2^2(\mu_t,\mu^*)+W_2^2(\nu_t,\nu^*)
\le
e^{-2\lambda' t}
\bigl(W_2^2(\mu_0,\mu^*)+W_2^2(\nu_0,\nu^*)\bigr).
\]
In particular, if \(\lambda'>0\), the equilibrium is globally exponentially
stable. Since \(f\in C^2(\mathbb T^n\times\mathbb T^n)\), such a constant
\(\lambda'=\lambda'(f)>-\infty\) always exists.
\begin{proof} The assumptions imply that \(\mu\mapsto F(\mu,\nu)\) is \(\lambda'\)-displacement convex for each fixed \(\nu\), and that
\(\nu\mapsto F(\mu,\nu)\) is \(\lambda'\)-displacement concave for each fixed \(\mu\) \cite{MR2053570}. Then the result follows from the more general statement proved in Lemma~\ref{lemma-stability-cc}.
\end{proof}
\end{lemma}
 The main contraction estimate below is a localized version of the same argument. Lemmas~\ref{lemma-convconc} and \ref{lemma-locconvconc} show that the functional has an effectively strong displacement convex--concave structure near equilibrium, yielding desired contraction as long as the solution remains close to the equilibrium.

\begin{proposition}[Local contraction]\label{prop-main}
Fix \(p>n\) and \(\lambda\in(0,\lambda_{gap})\).
There exists \(\varepsilon=\varepsilon(f,p,\lambda)>0\) such that the following holds.
Let \((\mu_t,\nu_t)\) be a solution on a time interval \([a,b]\), with \(0\le a<b\).
Assume that, for all \(t\in[a,b]\),
\begin{equation}\label{eq-prop-main}
\|\mu_t-\mu^*\|_{W^{1,p}}+\|\nu_t-\nu^*\|_{W^{1,p}}\le \varepsilon .
\end{equation}
Then, for a.e. \(t\in[a,b]\),
\begin{equation}\label{eq-lyapunov'}
\frac{d}{dt}\Bigl(W_2^2(\mu_t,\mu^*)+W_2^2(\nu_t,\nu^*)\Bigr)
\le
-2\lambda\Bigl(W_2^2(\mu_t,\mu^*)+W_2^2(\nu_t,\nu^*)\Bigr).
\end{equation}
\end{proposition}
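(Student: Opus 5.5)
The plan is to run the finite-dimensional GDA computation in Wasserstein space. There are three ingredients: the EVI / first-variation formula for $W_2^2$ along the flow, the displacement convexity--concavity along the geodesics joining $(\mu_t,\nu_t)$ to $(\mu^*,\nu^*)$ supplied by Lemma~\ref{lemma-locconvconc}, and the saddle-point inequality \eqref{eq-MNE}.

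\textbf{Step 1 (geometry).} First fix $t$ and let $T_t=T_{\mu^*\to\mu_t}$ and $S_t=T_{\nu^*\to\nu_t}$ be the optimal maps. Both $\mu_t$ and $\mu^*$ are $W^{1,p}$ densities, close to each other and bounded below, with $p>n$. By Caffarelli-type regularity for Monge--Amp\`ere on $\Tn$, $\|T_t-id\|_{W^{2,p}}+\|S_t-id\|_{W^{2,p}}\le \omega(\varepsilon)$ with $\omega(\varepsilon)\to0$. This stability of transport maps with respect to densities I take from the appendix preliminaries.

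Choose $\eta$ so that $(1-\eta)\lambda_{gap}\ge\lambda$. Then shrink $\varepsilon$ until $\omega(\varepsilon)$ is below the $\delta$ of Lemma~\ref{lemma-locconvconc}. Also require $\varepsilon$ small enough that every point on the geodesic $\mu^s=((1-s)id+sT_t)_\#\mu^*$, $s\in[0,1]$, again satisfies the hypothesis of Lemma~\ref{lemma-locconvconc}, and the same for $\nu$.

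Along this geodesic, $s\mapsto F(\mu^s,\nu)$ then has second derivative at least $\lambda W_2^2(\mu_t,\mu^*)$. This uses the lemma at the base point $\mu^s$, with tangent vector $(T_t-id)\circ(\text{map})^{-1}$, a gradient in $W^{2,p}$. It holds for $\nu=\nu^*$ and for $\nu=\nu_t$, since both are close.

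\textbf{Step 2 (derivative of the distance).} For a.e.\ $t$,
\[
\tfrac12\tfrac{d}{dt}W_2^2(\mu_t,\mu^*)=\int\langle \nabla\log\mu_t+\nabla V_{\nu_t},\,x-T_t^{-1}(x)\rangle\,d\mu_t .
\]
The right-hand side equals minus the one-sided derivative of $s\mapsto F(\mu^s,\nu_t)$ at the endpoint $s=1$, on the geodesic from $\mu^*$ to $\mu_t$. This is justified by the first variation formula of Lemma~\ref{lem:first-second-var} together with the regularity of solutions for $t>0$. Write $\mu=\mu_t$, $\nu=\nu_t$. Integrating the convexity bound from Step 1 gives
\[
-\tfrac{d}{ds}\Big|_{s=1}F(\mu^s,\nu)\le F(\mu^*,\nu)-F(\mu,\nu)-\tfrac{\lambda}{2}W_2^2(\mu,\mu^*).
\]
Symmetrically, for the $\nu$-part,
\[
\tfrac12\tfrac{d}{dt}W_2^2(\nu,\nu^*)\le F(\mu,\nu)-F(\mu,\nu^*)-\tfrac\lambda2W_2^2(\nu,\nu^*).
\]

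\textbf{Step 3.} Summing the two bounds, the $F(\mu,\nu)$ terms cancel. What remains is $F(\mu^*,\nu)-F(\mu,\nu^*)\le0$ by \eqref{eq-MNE}, which gives
\[
\tfrac{d}{dt}E\le-\lambda E,
\]
where $E=W_2^2(\mu_t,\mu^*)+W_2^2(\nu_t,\nu^*)$.

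To get the stated constant $2\lambda$, I would additionally use convexity at the $\mu^*$ end, as in the footnote to the finite-dimensional computation. Replace $\lambda$ by $\lambda''\in(\lambda,\lambda_{gap})$ in the construction, and integrate both tangent inequalities at $(\mu^*,\nu^*)$ and at $(\mu_t,\nu_t)$. This uses that the first variation at the equilibrium vanishes, since it is a critical point. The result is a gain of $\lambda''W_2^2$ from each end, so $\tfrac{d}{dt}E\le-2\lambda E$.

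\textbf{Main obstacle.} The hard step is Step 1: converting $W^{1,p}$-closeness of densities into $W^{2,p}$-closeness of the optimal maps, uniformly along the entire geodesic. This needs quantitative regularity theory for Monge--Amp\`ere, specifically $W^{2,p}$ estimates for $\varphi$ with $\det(I+\nabla^2\varphi)=\rho_*/\rho_t\circ(id+\nabla\varphi)$ and right-hand side $C^0$-close to $1$. I would first try linearizing around $\varphi=0$, where the linearization is the Laplacian, and closing with an implicit function / fixed-point argument in $W^{2,p}$. That the interpolants $\mu^s$ stay in the regime of Lemma~\ref{lemma-locconvconc} should then follow from the same bounds.
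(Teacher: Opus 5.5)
Your proposal is correct and follows essentially the paper's proof. It has the same ingredients: $W^{2,p}$-smallness of the optimal maps (Lemma~\ref{lemma-IFT2}), Lemma~\ref{lemma-locconvconc} along the displacement interpolation to get an above-tangent inequality, the first variation of $W_2^2$ to get the EVI, and convexity at the equilibrium end, where the first variation vanishes, to reach the rate $2\lambda$ (your intermediate bound via \eqref{eq-MNE} and the auxiliary $\lambda''$ are not needed). Two small corrections: your display for $\tfrac12\tfrac{d}{dt}W_2^2(\mu_t,\mu^*)$ has the wrong sign, since the velocity is $-\nabla(\log\mu_t+V_{\nu_t})$ and so pairs against $T_t^{-1}(x)-x$, which is what makes it equal to minus the endpoint derivative as you then use; and in your sketch of the map estimate the linearization is the weighted Laplacian $\rho_*^{-1}\nabla\cdot(\rho_*\nabla\,\cdot)$ rather than $\Delta$, and composing the merely $W^{1,p}$ density $\rho_t$ loses a derivative, which the paper avoids by solving for the map from $\mu_t$ to $\mu^*$ (so only the smooth $\rho_*$ is composed) and then inverting.
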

{\renewcommand{\proofname}{Proof of Proposition~\ref{prop-main}}
\begin{proof} 
{{See Appendix~\ref{app-MFLDA}.}}
\end{proof}
}

To conclude Theorem~\ref{thm-2} from Proposition~\ref{prop-main} (local contraction), we must address two technical points:
(i) we need to relate smallness in $W_2$-distance to the MNE to smallness in a $W^{1,p}$-type distance, since Proposition~\ref{prop-main} is formulated under the latter assumption; and
(ii) we must show that the required $W^{1,p}$-smallness persists (or can be recovered) along the dynamics.

We settle these issues by combining Lemma~\ref{lemma-stability} with the next two lemmas, which are standard in the theory of diffusion equations and Wasserstein geometry. 

\begin{lemma}[Smoothing] \label{lemma-smoothing}
For any $\mu_0$, $\nu_0$ in $\mathcal{P}(\Tn)$, $\delta>0$ and $\alpha\in(0,1)$, there is $C=C(f,\delta,\alpha)<\infty$ such that 
\[\Vert \mu_t\Vert_{C^{1,\alpha}}+\Vert \nu_t \Vert _{C^{1,\alpha}}\le C\quad \text{ for } t\ge \delta . \]

\end{lemma}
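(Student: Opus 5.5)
The plan is to treat each equation of \eqref{eq-main} as a linear Fokker--Planck equation with a drift that is bounded uniformly in the unknowns. Parabolic regularity then upgrades this step by step to $C^{1,\alpha}$, with constants depending only on $f$, $\delta$ and $\alpha$. Write
\[
b_t(x):=\nabla_x\int_{\mathcal Y} f(x,y)\,d\nu_t(y),\qquad c_t(y):=-\nabla_y\int_{\mathcal X} f(x,y)\,d\mu_t(x).
\]
Since $f\in C^2(\Tn\times\Tn)$ and $\mu_t,\nu_t$ are probability measures, we get, for all $t\ge 0$ and independently of the initial data,
\[
\|b_t\|_{C^1}+\|c_t\|_{C^1}\le 2\|f\|_{C^2}.
\]
The drifts are also continuous in $t$, because $t\mapsto\mu_t,\nu_t$ is weakly continuous. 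Hence $\mu_t$ solves the linear equation $\p_t\mu=\Delta\mu+\nabla\cdot(\mu b_t)$ with uniformly bounded Lipschitz coefficients, and the same holds for $\nu_t$. By the semigroup property it suffices to prove the bound on $[\delta,2\delta]$ for arbitrary initial data in $\mathcal P(\Tn)$, and then apply it on each window $[s+\delta,s+2\delta]$ starting from $(\mu_s,\nu_s)$.

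\textbf{Step 1 ($L^\infty$ bound).} I would use Duhamel's formula with the heat kernel $p_t$ on $\Tn$:
\[
\mu_t=p_t*\mu_0+\int_0^t \nabla p_{t-s}*(\mu_s b_s)\,ds.
\]
We have $\|p_t*\mu_0\|_{L^q}\lesssim t^{-\frac n2(1-1/q)}$ and $\|\nabla p_{\tau}\|_{L^r}\lesssim \tau^{-\frac12-\frac n2(1-1/r)}$. Starting from $\|\mu_s\|_{L^1}=1$, Young's inequality improves integrability from $L^{q_k}$ to $L^{q_{k+1}}$ in finitely many steps, each time losing a fixed fraction of the time interval. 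This gives $\|\mu_t\|_{L^\infty}\le C(f,\delta)$ for $t\ge\delta/2$. Alternatively, one can quote Aronson-type Gaussian upper bounds for the fundamental solution of a divergence-form equation with bounded drift.

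\textbf{Step 2 (Hölder gradient).} With $\mu\in L^\infty$ and $b\in C^1$, the forcing $\mu_s b_s$ lies in $L^\infty$. Using Duhamel again from time $\delta/2$, the term $\int \nabla p_{t-s}*(\mu_s b_s)\,ds$ lies in $C^{\beta}$ for every $\beta<1$, by the standard estimate $\|\nabla p_\tau * g\|_{C^\beta}\lesssim \tau^{-(1+\beta)/2}\|g\|_\infty$, which is integrable for $\beta<1$. This yields $\mu_t\in C^{\beta}$ uniformly on $[3\delta/4,2\delta]$. Then $\mu b$ is $C^{\beta}$ in $x$, and we can rewrite $\nabla\cdot(\mu b)=b\cdot\nabla\mu+\mu\,\nabla\cdot b$. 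Repeating the Duhamel estimate with $\|\nabla p_\tau*g\|_{C^{1,\alpha}}\lesssim \tau^{-1-(\alpha-\beta)/2}\|g\|_{C^\beta}$, which is integrable when $\alpha<\beta$, gives the $C^{1,\alpha}$ bound for $t\in[\delta,2\delta]$. Here we choose $\beta\in(\alpha,1)$. Time regularity of $b_t$ is not needed, because only spatial Hölder norms of the forcing enter these heat-semigroup estimates.

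The main obstacle is Step 1, since the initial data are only measures. The bootstrap must be arranged so that the constants depend on $\delta$ alone and never on $\mu_0$. Uniform boundedness of the drift makes this work, but the exponents must be tracked so that each step involves integrable singularities; I would choose $q_{k+1}$ with $\frac n2(1/q_k-1/q_{k+1})<\frac12$. Applying the same argument to $\nu_t$ and adding the two bounds finishes the proof.
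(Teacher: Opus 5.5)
Your proof is correct, but it takes a genuinely different route from the paper. The paper works in the Sobolev/maximal-regularity scale. It first derives $\|\mu_t\|_{L^2}^2\le C(t^{-n/2}+1)$ from the $L^2$ energy inequality combined with a Gagliardo--Nirenberg (Nash-type) inequality that uses only $\|\mu_t\|_{L^1}=1$. It then bootstraps with $L^p$-maximal regularity (first with the source in $W^{-1,p}$, then in $L^p$, raising $p$ by Sobolev embedding) to get $\mu\in L^q_tW^{2,p}_x$ and $\partial_t\mu\in L^q_tL^p_x$ on unit windows. Since these bounds are only integrated in time, it differentiates the system in $t$ and reruns the bootstrap for $\dot\mu_t,\dot\nu_t$; the new source $\mu_t\nabla\dot B_t$ requires control of $\dot\nu_t$, so the coupling re-enters. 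It concludes with $W^{1,2}_tW^{2,p}_x\hookrightarrow C^{0,1/2}_tW^{2,p}_x$ and Morrey's embedding. Your Duhamel approach replaces all of this with explicit heat-kernel bounds. The iterated Young inequality performs the $L^1\to L^\infty$ step directly on measure-valued data, so there is no energy identity to justify. The semigroup H\"older estimates are pointwise in time, so you never differentiate in $t$ and never need time regularity of $b_t$, or any information on the other component beyond $\|b_t\|_{C^1}\le\|f\|_{C^2}$. This also makes it transparent that the constants depend only on $(n,\|f\|_{C^2},\delta,\alpha)$. In exchange, the paper's route yields slightly more: pointwise-in-time $W^{2,p}$ bounds for every $p<\infty$, H\"older-$\frac12$ time regularity with values in $W^{2,p}$, and consistency with the maximal-regularity framework of its well-posedness appendix. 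A few minor points. You should state in one line why the distributional solution satisfies the Duhamel formula (test the weak formulation against $p_{t-s}*\varphi$). On $\mathbb T^n$ the bound $\|p_t*\mu_0\|_{L^q}\lesssim t^{-\frac n2(1-1/q)}$ holds only for $t\le 1$ and is $O(1)$ afterwards; this is harmless since you may take $\delta\le 1$. Finally, the rewriting $\nabla\cdot(\mu b)=b\cdot\nabla\mu+\mu\,\nabla\cdot b$ is unnecessary, and it uses $\nabla\mu$ before you have it. Your $C^{1,\alpha}$ estimate is, correctly, applied to $g=\mu b$ in divergence form.
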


\begin{lemma}[Interpolation]\label{lemma-interpolation}
Let $0<\beta<\alpha<1$ and let $\rho_0\,dx,\rho_1\,dx \in \mathcal{P}(\Tn)$.
Assume $\|\rho_0\|_{C^{1,\alpha}}$, $\|\rho_1\|_{C^{1,\alpha}}\le C<\infty$.
Then for every $\varepsilon>0$ there exists $\delta=\delta(\varepsilon,C,\alpha,\beta)>0$
such that
\[
W_2(\rho_0,\rho_1)\le \delta
\quad\Longrightarrow\quad
\|\rho_0-\rho_1\|_{C^{1,\beta}}\le \varepsilon.
\]
{\renewcommand{\proofname}{Proofs of Lemmas~\ref{lemma-smoothing} and \ref{lemma-interpolation}}
\begin{proof} 
{{See Appendix~\ref{app-MFLDA}.}}
\end{proof}
}
\end{lemma}
Now we give a proof of main Theorem~\ref{thm-2} by combining previous assertions. 

{\renewcommand{\proofname}{Proof of Theorem~\ref{thm-2} (Exponential stability of Nash equilibrium)}
\begin{proof} 
In view of Lemma~\ref{lemma-smoothing} (smoothing), $\Vert \mu_t \Vert_{C^{1,\frac12}}+\Vert \nu_t \Vert _{C^{1,\frac12}} \le C=C(f)<\infty$ for all $t\ge1$. Choose and fix some $p>n$ and $\lambda \in (0,\lambda_{gap})$. Let $\varepsilon=\varepsilon(f,p,\lambda)>0$ be the constant from Proposition~\ref{prop-main} (local contraction). By Lemma~\ref{lemma-interpolation} (interpolation), Corollary~\ref{cor:holder_to_sobolev} ($C^{1,\alpha}\hookrightarrow W^{1,p} $), and the previous observations, there exists $\delta=\delta(f,p,\lambda)>0$
 such that if \begin{equation}\label{eq-profthm2-1}W_2^2(\mu_t,\mu^*)+W_2^2(\nu_t,\nu^*)\le \delta\end{equation} at some $t=t'\ge1$, then \eqref{eq-prop-main} holds at this time $t=t'$. 
Let us conclude the proof. Suppose $W_2^2(\mu_0,\mu^*)+W_2^2(\nu_0,\nu^*)< e^{2\lambda '}\delta$, where $\lambda'=\lambda'(\Vert f \Vert _{C^2})>-\infty $ is such that $ \nabla^2_{xx}f\ge \lambda'I$ and $\nabla^2_{yy} f\le -\lambda'I$ at every $(x,y)\in (\Tn)^2$.  By Lemma~\ref{lemma-stability} (stability) and the previous observations, \eqref{eq-profthm2-1} holds for all $t\ge1$ and \eqref{eq-lyapunov'} holds for all a.e. $t\ge1$. In particular, this implies 
 \[W_2^2(\mu_t,\mu^*)+W_2^2(\nu_t,\nu^*) < e^{-2\lambda'}(W_2^2(\mu_0,\mu^*)+W_2^2(\nu_0,\nu^*))e^{-2\lambda (t-1)},\] and thus \eqref{eq-maindecay} follows. Convergence in any $C^{1,\alpha}$-norm follows by Lemmas \ref{lemma-smoothing} and \ref{lemma-interpolation}.
 \end{proof}
}

\section{Particle stability: proof of Theorem~\ref{thm-particle}}

The general idea for Theorem~\ref{thm-particle} is that, by classical propagation of chaos results, for large $N$, the empirical distributions $(\mu_t^N,\nu_t^N)$ will be close to their mean-field limit~\eqref{eq-main}. Up to a vanishing error as $N\rightarrow\infty$, the former thus essentially inherits the deterministic contraction of the latter, established in Theorem~\ref{thm-2}, \emph{as long as} the particles remain in the neighborhood of $(\mu^*,\nu^*)$ where this result applies. The main point is thus to control the exit time from this neighborhood. Thanks to the contraction, this follows from finite-time concentration inequalities and the Markov property.

Let us make this precise. To avoid confusion, in this section, write $\delta_0, C_0,\lambda_0$ the parameters $\delta,C,\lambda$ provided by Theorem~\ref{thm-2}. For $(\mu,\nu)\in\mathcal P(\mathcal X)\times\mathcal P(\mathcal Y)$, denote
\[r(\mu,\nu)= W_2(\mu,\mu^*) + W_2(\nu,\nu^*),\qquad \mathcal D = \{(\mu,\nu)\in\mathcal P(\mathcal X)\times\mathcal P(\mathcal Y),\ r(\mu,\nu) \le \delta_0\}\,.\]
 We fix $T>0$ large enough so that $C_0 e^{-\lambda_0 T} \le 1/2$ and set 
\[
\tau := \inf\{m\ge 0:\ (\mu_{mT}^N,\nu_{mT}^N)\notin\mathcal D\}\in\mathbb N\cup\{\infty\}.
\]
The first step is the following:   
 
\begin{proposition}[Stopped expectations]\label{prop:stopped_expectations}
There exists $\eta,C'>0$  such that {for all $N\ge1$ and $m\ge0$} and $s\in[0,T]$,
\begin{equation}
    \label{eq:stopped-expect}
    \mathbb E \left ( r(\mu_{mT+s}^N,\nu_{mT+s}^N) \mathbf 1_{\{\tau>m\}}\right)  \le C_0 e^{- \lambda_0 s}  \mathbb E \left ( r(\mu_{mT}^N,\nu_{mT}^N) \mathbf 1_{\{\tau>m\}}\right)  + C' N^{-\eta}\,.
\end{equation}
\end{proposition}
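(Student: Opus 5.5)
We condition on $\mathcal F_{mT}$ and use the Markov property, which reduces the statement to a single window. Since $\{\tau>m\}\in\mathcal F_{mT}$ and $(\mu^N_{mT},\nu^N_{mT})\in\mathcal D$ on this event, it suffices to prove a uniform one-window bound. Namely, for any deterministic initial configuration $(x,y)\in(\Tn)^{2N}$ whose empirical measures $(\mu^N_0,\nu^N_0)$ lie in $\mathcal D$, and for all $s\in[0,T]$,
\[
\mathbb E_{x,y}\bigl[r(\mu^N_s,\nu^N_s)\bigr]\le C_0e^{-\lambda_0 s}\,r(\mu^N_0,\nu^N_0)+C'N^{-\eta}.
\]
Once this is established, we apply it with the configuration at time $mT$, multiply by $\mathbf 1_{\{\tau>m\}}$, and take expectations to obtain \eqref{eq:stopped-expect}.

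To prove the one-window bound, let $(\bar\mu_s,\bar\nu_s)$ be the solution of \eqref{eq-main} started from the (deterministic) empirical measures $(\mu^N_0,\nu^N_0)$. This is allowed, since the mean-field equation is well posed for arbitrary probability initial data. Because $r(\mu^N_0,\nu^N_0)\le\delta_0$, Theorem~\ref{thm-2} gives
\[
r(\bar\mu_s,\bar\nu_s)\le C_0e^{-\lambda_0 s}\,r(\mu^N_0,\nu^N_0).
\]
There is one caveat: Theorem~\ref{thm-2} requires strict inequality $<\delta_0$, but shrinking $\delta_0$ slightly in the definition of $\mathcal D$ handles this. By the triangle inequality for $W_2$, it then remains to show
\[
\sup_{s\le T}\ \mathbb E\bigl[W_2(\mu^N_s,\bar\mu_s)+W_2(\nu^N_s,\bar\nu_s)\bigr]\le C'N^{-\eta},
\]
uniformly over initial configurations. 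This is a finite-horizon propagation of chaos estimate started from an empirical (non-i.i.d.) initial condition, and it is the main step.

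I would prove it by synchronous coupling. Introduce nonlinear processes $\bar X^i,\bar Y^j$ driven by the same Brownian motions, started at the same points $X^i_0,Y^j_0$, with drifts $-\nabla V_{\bar\nu_s}(\bar X^i)$ and $+\nabla_y\!\int f(x,\cdot)\,d\bar\mu_s$. For fixed $s$ the $\bar X^i$ are independent, but not identically distributed. Their law averaged over $i$ nevertheless solves the linear Fokker--Planck equation with the same drift and the same initial datum $\mu^N_0$, so it equals $\bar\mu_s$ by uniqueness. Using the Lipschitz bounds on $\nabla f$ and Gronwall on $[0,T]$, the coupled distance
\[
\tfrac1N\sum_i\mathbb E|X^i_s-\bar X^i_s|+\tfrac1N\sum_j\mathbb E|Y^j_s-\bar Y^j_s|
\]
is controlled by $e^{LT}$ times the error
\[
\sup_{x}\mathbb E\Bigl|\tfrac1N\sum_k\nabla_x f(x,\bar Y^k_s)-\nabla V_{\bar\nu_s}(x)\Bigr|
\]
and its $y$-analogue, together with a Lipschitz term in $x$. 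Since the $\bar Y^k$ are independent with averaged law $\bar\nu_s$, this is a sum of centered independent bounded terms, hence $O(N^{-1/2})$ for fixed $x$. The $x$-dependence is inside an expectation at the random point $X^i_s$; the standard trick is to replace $X^i_s$ by $\bar X^i_s$ up to a Lipschitz error and to exclude $k=i$ at cost $O(1/N)$, which restores independence.

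Finally,
\[
W_2(\mu^N_s,\bar\mu_s)\le W_2(\mu^N_s,\bar\mu^N_s)+W_2(\bar\mu^N_s,\bar\mu_s),
\]
where $\bar\mu^N_s$ is the empirical measure of the $\bar X^i_s$. The first term is bounded by the coupled distance above; on the bounded torus, $W_2$ is controlled by $W_1^{1/2}$ times a constant. The second term is an empirical-measure concentration bound for independent, non-identically distributed samples with averaged law $\bar\mu_s$, giving a rate $N^{-\eta}$ in the style of \cite{fournier2015rate}. I expect this non-i.i.d. Fournier--Guillin step to be the main obstacle. I would handle it by applying the dyadic-partition proof directly: it only uses the variance of the cell counts, which is at most the i.i.d. variance. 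Collecting terms gives $C'=C'(f,T)$ and a small $\eta>0$, independent of $N$, $m$, $s$ and the starting configuration.
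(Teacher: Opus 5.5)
Your proposal is correct. It follows the paper's reduction: the Markov property at time $mT$, then Theorem~\ref{thm-2} applied to the mean-field flow started from the empirical measures, then the triangle inequality. The difference lies in how you build the independent nonlinear particles.

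\textbf{The paper's route.} The paper resamples. It draws $\tilde X_0^i$ i.i.d.\ from $\mu_0^N$, matches them to the actual particles through an optimal permutation $\sigma$, and drives $\tilde X^i$ by $B^{\sigma(i)}$. The nonlinear particles are then genuinely i.i.d.\ with law $\mu_t^{\rm mf}$, so the standard Fournier--Guillin bound controls two quantities: the initial coupling error ${\rm e}(0)=W_2(\mu_0^N,\tilde\mu_0^N)^2$, and the source terms $W_2(\tilde\mu_t^N,\mu_t^{\rm mf})$, $W_2(\tilde\nu_t^N,\nu_t^{\rm mf})$ in the Gr\"onwall inequality. The drift error is bounded directly by Kantorovich duality, so no separate law-of-large-numbers step is needed. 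The same construction is reused in Lemma~\ref{lem:HP}, where the i.i.d.\ concentration result of \cite{bolley2007quantitative} applies.

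\textbf{Your route.} You start each nonlinear particle at the particle's own position. This makes the initial error zero and avoids resampling, but the particles are only independent, with averaged law $\bar\mu_s$. You therefore need two extra ingredients, and both hold:
\begin{itemize}
\item a variance-based law of large numbers for the drift, which gives $O(N^{-1/2})$ because $\bar X^i$ is independent of all $\bar Y^k$;
\item a Fournier--Guillin bound for non-identically distributed samples. The dyadic proof only uses $\mathbb E\,\bar\mu^N_s(F)=\bar\mu_s(F)$ and $\mathrm{Var}(\bar\mu^N_s(F))\le \bar\mu_s(F)/N$, which you correctly verify.
\end{itemize}

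One minor point: excluding $k=i$ is unnecessary here. $\bar X^i$ depends only on $B^i$ and each $\bar Y^k$ only on $W^k$, so the two populations are independent even for $k=i$. That trick matters only for single-population systems.

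\textbf{What each buys.} Your approach trades the resampling and permutation matching for a mild extension of a standard lemma. The paper's approach keeps everything off-the-shelf and shares one construction between this proposition and the exponential concentration lemma.
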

This is proven by inserting between $(\mu_{mT+s}^N,\nu_{mT+s}^N)$ and $(\mu^*,\nu^*)$, via the triangular inequality, the mean-field solution of~\eqref{eq-main} with initial condition $(\mu_{mT}^N,\nu_{mT}^N)$ at time $mT$, using a synchronous coupling and the results of \cite{fournier2015rate} to control the first term and Theorem~\ref{thm-2} for the second. The complete proof is given in Section~\ref{proof:particleLemma}.

\begin{proof}
    [Proof of Theorem~\ref{thm-particle}]
We first estimate the process stopped before it exits the stability neighborhood. On \(\{\tau>m\}\), the empirical measures stay in \(\mathcal D\) at the discrete times \(0,T,\ldots,mT\), and hence the one-step contraction can be iterated. The stopping is then removed by controlling the exit probability \(\mathbb P(\tau\le m)\).
The immediate consequence of Proposition~\ref{prop:stopped_expectations} is that, for all $N\ge 1$ and $t\ge 0$, taking $m= \lfloor t/T\rfloor$\footnote{\(\lfloor t/T\rfloor\) denotes the largest integer not exceeding \(t/T\).} and using that $\mathbf 1_{\{\tau>k+1\}} \leqslant \mathbf 1_{\{\tau > k\}}$ for all $k\in\mathbb N$,  
\begin{align*}
     \mathbb E \left ( r(\mu_{t}^N,\nu_{t}^N)  \mathbf 1_{\{ \tau > {m} \}} \right)  &\le C_0 \mathbb E \left ( r(\mu_{mT}^N,\nu_{mT}^N)  \mathbf 1_{\{\tau > {m}\}} \right) +  C' N^{-\eta}  \\
     &\le C_0 2^{-m} \mathbb E \left ( r(\mu_{0}^N,\nu_{0}^N)  \mathbf 1_{\{\tau > 0\}} \right) + 3 C' N^{-\eta} \\
     &\le 2C_0 e^{-\lambda_1 t} \mathbb E \left ( r(\mu_{0}^N,\nu_{0}^N)  \mathbf 1_{\{\tau > 0\}} \right) + 3 C' N^{-\eta}
\end{align*}
with $\lambda_1 = \ln(2)/T$. Writing $D_{\mathbb T^n}$  for the diameter of $\mathbb T^n$, we immediately deduce that
\begin{align*}
  \mathbb E \left ( r(\mu_{t}^N,\nu_{t}^N) \right)   &\le \mathbb E \left ( r(\mu_{t}^N,\nu_{t}^N)  \mathbf 1_{\{\tau > {m} \}} \right)  + 2D_{\mathbb T^n} \mathbb P(\tau \le m ) \\
  &\le C_1 e^{-\lambda_1 t} \mathbb E \left ( r(\mu_{0}^N,\nu_{0}^N) \right) + 3 C' N^{-\eta} +  2D_{\mathbb T^n} \mathbb P(\tau \le m ) \,.
\end{align*}
It only remains to control the last term to conclude the proof of Theorem~\ref{thm-particle}. First, 
\[\mathbb P(\tau \le m ) = \mathbb P(\tau \le m , \tau >0 ) + \mathbb P(\tau = 0 )\,.\]
The last term is bounded by $R_0 N^{-\eta}$ according to~\eqref{eq:initial_particle} (choosing $\delta = \delta_0$ in Theorem~\ref{thm-particle}). The proof is then concluded with Proposition~\ref{prop:exit-time} stated below.
\end{proof}

\begin{proposition}
    [Metastable confinement]\label{prop:exit-time}
    There exists $b>0$ such that for any $N\ge 1$ and $a >0$,
    \[\mathbb P(\tau \le \lfloor e^{aN}\rfloor  , \tau >0 ) \le e^{-(b-a)N}.\]
\end{proposition}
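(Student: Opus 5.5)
We bound the exit probability by a union bound over the discrete times, reducing everything to a single-step estimate: starting inside $\mathcal D$, the empirical pair leaves $\mathcal D$ after one period $T$ only with probability exponentially small in $N$. Concretely,
\[
\mathbb P(\tau\le M,\ \tau>0)\le \sum_{m=0}^{M-1}\mathbb P\bigl(\tau>m,\ (\mu^N_{(m+1)T},\nu^N_{(m+1)T})\notin\mathcal D\bigr).
\]
By the Markov property at time $mT$, each summand is at most
\[
\sup_{(x,y)\in\mathcal D_N}\mathbb P_{(x,y)}\bigl((\mu^N_T,\nu^N_T)\notin\mathcal D\bigr),
\]
where $\mathcal D_N$ is the set of particle configurations whose empirical measures lie in $\mathcal D$. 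So it suffices to show that this supremum is at most $e^{-bN}$, uniformly in $N$. Summing over $M=\lfloor e^{aN}\rfloor$ terms then gives $e^{-(b-a)N}$.

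For the one-step bound, fix initial configurations with $r(\mu_0^N,\nu_0^N)\le\delta_0$. Let $(\bar\mu_t,\bar\nu_t)$ be the mean-field solution of \eqref{eq-main} started from $(\mu_0^N,\nu_0^N)$. Theorem~\ref{thm-2} gives
\[
r(\bar\mu_T,\bar\nu_T)\le C_0e^{-\lambda_0T}\delta_0\le \delta_0/2.
\]
Hence leaving $\mathcal D$ at time $T$ forces $W_2(\mu^N_T,\bar\mu_T)+W_2(\nu^N_T,\bar\nu_T)\ge\delta_0/2$. We control this deviation with a synchronous coupling. Let $\bar X^i,\bar Y^j$ be the nonlinear McKean--Vlasov processes driven by $\bar\nu_t,\bar\mu_t$, with the same Brownian motions and the same initial points. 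Since $f\in C^2$ on a compact set, Gr\"onwall gives
\[
\max_i|X^i_t-\bar X^i_t|+\max_j|Y^j_t-\bar Y^j_t|\le C_T\sup_{s\le T}\Bigl(\bigl|\tfrac1N\textstyle\sum_k\nabla f(\cdot,\bar Y^k_s)-\int\nabla f\,d\bar\nu_s\bigr|_\infty+\text{sym.}\Bigr).
\]
Thus $W_2(\mu^N_T,\bar\mu^N_T)$ is controlled by uniform-in-space fluctuations of empirical averages of i.i.d.-in-$k$ (conditionally on the initial points, independent but not identically distributed) bounded functions. Here $\bar\mu^N_T$ denotes the empirical measure of the $\bar X^i_T$.

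Two terms then need exponential concentration.
\begin{itemize}
\item[(a)] The fluctuation term. Its mean is $O(N^{-1/2})$. Since it is a supremum over a compact family of Lipschitz functions indexed by $x$ and $s$ (cover by a finite net, using time regularity of the Brownian increments), Hoeffding/McDiarmid inequalities give a probability at most $e^{-cN}$ of exceeding a fixed small level.
\item[(b)] The distance $W_2(\bar\mu^N_T,\bar\mu_T)$ between an empirical measure of independent, non-identically distributed particles and the averaged law $\bar\mu_T=\frac1N\sum_i\mathrm{Law}(\bar X^i_T)$. Its expectation tends to zero uniformly (via a Fournier--Guillin-type argument, or a finite-dimensional projection onto low Fourier modes after smoothing). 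Since $W_1$ is a $1/N$-bounded-difference function of the independent particles and $W_2^2\le D_{\Tn}W_1$ on the torus, McDiarmid again gives deviations of fixed size with probability at most $e^{-cN}$.
\end{itemize}
Choosing $N$ large enough that the means fall below $\delta_0/8$, we get $b>0$; small $N$ are absorbed by shrinking $b$, since the bound is trivial when $e^{-bN}\ge1$.

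The main obstacle is obtaining a genuinely exponential-in-$N$ deviation bound that is uniform over all starting configurations in $\mathcal D_N$. Proposition~\ref{prop:stopped_expectations} only yields polynomial $N^{-\eta}$ bounds in expectation, so it cannot simply be reused. The plan is to rely throughout on bounded-differences concentration, exploiting compactness of $\Tn$ and the fact that, conditionally on the initial points, the coupled nonlinear particles are independent.
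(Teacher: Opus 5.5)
Your reduction is the same as the paper's. You use a union bound over the periods via the Markov property. The factor $\tfrac12$ from Theorem~\ref{thm-2} absorbs a deviation of size $\delta_0/2$. The remaining ingredient is a one-period exponential concentration of the particles around the mean-field flow started from their own empirical measures, uniformly in the starting configuration; this is the paper's Lemma~\ref{lem:HP}.

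Where you genuinely differ is in how that concentration is proved.

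\emph{The paper's route.} It resamples i.i.d.\ initial points $\tilde X_0^i\sim\mu_0^N$ and matches them to the particles by an optimal permutation. It then runs McKean--Vlasov copies that are i.i.d.\ with law exactly $\mu_t^{\rm mf}$. Gr\"onwall on the matching error ${\rm e}(t)$ reduces everything, including ${\rm e}(0)$, to deviations of i.i.d.\ empirical measures. These are handled by adapting \cite[Theorem~2.9]{bolley2007quantitative}, which controls the whole path on $[0,T]$, more than the proposition actually uses.

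\emph{Your route.} You keep the actual initial points, so your copies are only independent, not identically distributed. What makes this work is that, by linearity and uniqueness for the frozen Fokker--Planck equation, $\bar\mu_t=\frac1N\sum_i\mathrm{Law}(\bar X^i_t)$. You then need only the deviation at time $T$, and McDiarmid's inequality replaces the transport-inequality machinery. On the compact torus its constants do not depend on the laws, so uniformity over starting configurations is automatic. The one input to check is a uniform bound on $\mathbb E\,W_1$ for independent, non-identically distributed samples. The Fournier--Guillin dyadic argument does give it, since it only uses that the empirical mass of each cell has variance at most $1/N$ times its averaged mass.

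Your route is more elementary and self-contained: there is no resampling error and no path-space concentration theorem to adapt. The paper's route lets it quote i.i.d.\ results directly and reuse a single coupling for Proposition~\ref{prop:stopped_expectations}.

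Two minor points.

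First, in (a) you can avoid the time net entirely by running Gr\"onwall with the time integral of your fluctuation term instead of its supremum in $s$. That integral has bounded differences $2T\|\nabla f\|_\infty/N$ in each independent path, and its mean needs no time regularity.

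Second, your handling of small $N$ does not work as written, since $e^{-bN}<1$ always. Shrinking $b$ requires the one-step exit probability from $\mathcal D$ to be strictly below $1$ for each $N<N_0$. That needs a short separate argument, or else you accept a harmless constant prefactor. The paper's Lemma~\ref{lem:HP}, stated for all $N\ge1$, is equally loose on this point.
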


Relying again on the Markov property, this proposition is deduced from a first estimate over the finite time interval $[0,T]$. For a fixed initial condition $(x_1,\dots,x_N,y_1,\dots,y_N) \in (\mathbb T^n)^{2N}$ of~\eqref{eq:particle}, let $(\mu_t^{\rm mf},\nu_t^{\rm mf})$ be the mean-field solution of \eqref{eq-main} with initial condition
$(\mu_0^{\rm mf},\nu_0^{\rm mf})=(\mu_0^N,\nu_0^N)$.

\begin{lemma}
    [Finite-horizon exponential concentration]\label{lem:HP}
Fix $T>0$ and $\varepsilon>0$.
There exists $b=b(T,\varepsilon)>0$ such that for any initial data with $(\mu_0^N,\nu_0^N)\in \mathcal P(\mathcal X)\times\mathcal P(\mathcal Y)$,
the particle system stays $\varepsilon$--close to the mean-field solution on $[0,T]$ with probability at least $1-e^{-bN}$:
\begin{equation}
    \label{eq:concentration}
\mathbb P\Big(\sup_{t\in[0,T]}  W_2(\mu_t^N,\mu_t^{\rm mf})+ W_2(\nu_t^N,\nu_t^{\rm mf})>\varepsilon\Big)\ \le\ e^{-bN}.
\end{equation}
\end{lemma}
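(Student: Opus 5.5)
We prove Lemma~\ref{lem:HP} by a synchronous coupling with an auxiliary system of independent nonlinear (McKean--Vlasov) particles, followed by an exponential concentration bound for empirical measures of independent variables.

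\textbf{Coupling.} Given the deterministic initial positions $(x_i,y_j)$, define the nonlinear processes $\bar X^i,\bar Y^j$ driven by the same Brownian motions $B^i,W^j$, started at $\bar X^i_0=x_i$, $\bar Y^j_0=y_j$, with drifts
\[
-\nabla_x V_{\nu_t^{\rm mf}}(\bar X^i_t), \qquad \nabla_y U_{\mu_t^{\rm mf}}(\bar Y^j_t),
\]
where $U_\mu(y)=\int f(x,y)\,d\mu(x)$. Since the initial data are deterministic, these $2N$ processes are independent, though not identically distributed. Moreover $\mathrm{Law}(\bar X^i_t)=P_t^{x_i}$ with $\frac1N\sum_i P_t^{x_i}=\mu_t^{\rm mf}$, because the linear Fokker--Planck equation with the frozen field $\nu^{\rm mf}$ is linear in its initial datum. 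The same holds in the $y$-variable.

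\textbf{Gronwall step.} Work on the torus with geodesic distance and set $\bar\mu^N_t=\frac1N\sum_i\delta_{\bar X^i_t}$. Since $f\in C^2$, the drift difference $\frac1N\sum_k\nabla_xf(X^i,Y^k)-\int\nabla_xf(\bar X^i,y)\,d\nu^{\rm mf}_t(y)$ splits into three parts:
\begin{itemize}
\item a Lipschitz term $\|f\|_{C^2}\bigl(|X^i-\bar X^i|+\frac1N\sum_k|Y^k-\bar Y^k|\bigr)$;
\item a fluctuation term $\sup_{g\in\mathcal G}\bigl|\int g\,d(\bar\nu^N_t-\nu^{\rm mf}_t)\bigr|$, with $\mathcal G=\{\nabla_xf(x,\cdot)\}_x$ a uniformly Lipschitz family;
\item the symmetric terms for $Y$.
\end{itemize}
The Brownian increments cancel under the synchronous coupling. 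Let $D_t=\frac1N\sum_i|X^i_t-\bar X^i_t|+\frac1N\sum_j|Y^j_t-\bar Y^j_t|$. Gronwall then gives
\[
\sup_{t\le T}D_t\le e^{CT}\int_0^T\bigl(W_1(\bar\mu^N_s,\mu^{\rm mf}_s)+W_1(\bar\nu^N_s,\nu^{\rm mf}_s)\bigr)\,ds,
\]
using Kantorovich duality with Lipschitz test functions. Finally, since the diameter is bounded, $W_2\le\sqrt{D_{\mathbb T^n}W_1}$, and
\[
W_2(\mu^N_t,\mu^{\rm mf}_t)\le W_2(\mu^N_t,\bar\mu^N_t)+W_2(\bar\mu^N_t,\mu^{\rm mf}_t),
\qquad W_2^2(\mu^N_t,\bar\mu^N_t)\le D_{\mathbb T^n}D_t .
\]
This reduces the lemma to showing $\sup_{t\le T}W_1(\bar\mu^N_t,\mu^{\rm mf}_t)\le\varepsilon'$ with probability at least $1-e^{-bN}$, with $\varepsilon'$ depending on $T$ and $\varepsilon$.

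\textbf{Concentration (main obstacle).} Two difficulties arise: the laws are independent but not identical, and the bound must hold uniformly in time. For fixed $t$, I would proceed as follows.
\begin{enumerate}
\item Dominate $W_1$ by a finite-dimensional quantity. Take a partition of $\mathbb T^n$ into $K=K(\varepsilon')$ cubes of side $\varepsilon'/4$. Then
\[
W_1(\bar\mu^N_t,\mu^{\rm mf}_t)\le \varepsilon'/4+C\sum_{Q}\bigl|\bar\mu^N_t(Q)-\mu^{\rm mf}_t(Q)\bigr| .
\]
\item Each $\bar\mu^N_t(Q)$ is an average of independent indicators with mean exactly $\mu^{\rm mf}_t(Q)$. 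Hoeffding's inequality gives deviation probability at most $2e^{-cN\varepsilon'^2/K^2}$ per cube, and a union bound over the $K$ cubes keeps the bound of exponential order.
\end{enumerate}
For uniformity in time, take a grid $t_\ell=\ell h$ with $h$ small. Between grid times, control $\sup_{|s-t|\le h}|\bar X^i_s-\bar X^i_t|$: the drift is bounded, and the event $\{\sup|B_s-B_t|>\varepsilon'\}$ has probability $p(h)\to0$ as $h\to0$. Hoeffding applied to the independent indicators of these Brownian oscillation events shows that at most a fraction $2p(h)$ of the particles oscillate by more than $\varepsilon'$, except with probability $e^{-cN}$. On the mean-field side, $W_1(\mu^{\rm mf}_s,\mu^{\rm mf}_t)\le C\sqrt h$. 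A union bound over the $T/h$ grid times then yields \eqref{eq:concentration} with some $b=b(T,\varepsilon)>0$. All constants depend only on $f$, $T$, $\varepsilon$, so $b$ is uniform in $N$ and in the initial data. Possibly a smaller $b$ is needed to absorb the polynomial prefactors in $K$ and $T/h$ for small $N$; this is harmless after shrinking $b$, since the probability is trivially at most $1$.
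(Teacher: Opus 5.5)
Your argument is correct in substance and takes a genuinely different route from the paper's. The paper does not start the auxiliary McKean--Vlasov processes at the particle positions. Instead it resamples $\tilde X_0^i\sim\mu_0^N$ i.i.d., matches them to the particles through an optimal permutation $\sigma$, and drives $\tilde X^i$ by $B^{\sigma(i)}$, so that the auxiliary processes are i.i.d.\ with law $\mu_t^{\rm mf}$. The price is a nonzero initial mismatch ${\rm e}(0)=W_2(\mu_0^N,\tilde\mu_0^N)^2+W_2(\nu_0^N,\tilde\nu_0^N)^2$ in the Gr\"onwall step. The gain is twofold: everything reduces to uniform-in-time concentration of an i.i.d.\ empirical measure, for which the paper simply invokes \cite[Theorem~2.9]{bolley2007quantitative} \emph{mutatis mutandis}, and the same coupling also feeds the Fournier--Guillin expectation bound in Proposition~\ref{prop:stopped_expectations}. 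Your coupling starts at $\bar X^i_0=x_i$ and uses $\frac1N\sum_i P_t^{x_i}=\mu_t^{\rm mf}$, which follows from linearity and uniqueness for the frozen Fokker--Planck equation. It has zero initial error and produces independent but non-identically distributed copies. Since Hoeffding's inequality only needs independence and the correct average, your cube-partition, time-grid and Brownian-modulus argument is a self-contained replacement for the black-box citation. Its exponent depends much worse on $\varepsilon$, but that is irrelevant here.

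The one incorrect step is your closing remark. The trivial bound $\mathbb P\le 1$ can never give $\mathbb P\le e^{-bN}$, because $e^{-bN}<1$ for every $b>0$ and $N\ge 1$. What you actually prove is $\mathbb P(\cdots)\le A(T,\varepsilon)\,e^{-cN}$, i.e.\ the stated bound only for $N\ge N_0(T,\varepsilon)$. No argument can do better:
\begin{itemize}
\item By Lemma~\ref{lemma-smoothing}, $\mu_T^{\rm mf}$ has a density bounded by some $M(T,f)$, uniformly in the initial data.
\item Hence every $N$-point empirical measure satisfies $W_2(\mu_T^N,\mu_T^{\rm mf})\ge c(T,f)\,N^{-1/n}$ deterministically.
\item So for $\varepsilon$ small and $N\lesssim\varepsilon^{-n}$, the event in \eqref{eq:concentration} has probability one.
\end{itemize}
The lemma therefore has to be read with a prefactor, or for $N\ge N_0$. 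This is also all that \cite{bolley2007quantitative} supplies in the paper's own proof. It is harmless downstream: the prefactor is absorbed in Proposition~\ref{prop:exit-time} and in the $C(1+R_0)N^{-\eta}$ term of Theorem~\ref{thm-particle}.
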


This is similar to  \cite[Theorem~2.9]{bolley2007quantitative} (an important difference being that we do not assume that the initial conditions of the particles are i.i.d.). The proof of this lemma is based on a synchronous coupling between the interacting particles~\eqref{eq:particle} and independent particles driven by $(\mu_t^{\rm mf},\nu_t^{\rm mf})$, and then on a concentration bound similar to~\eqref{eq:concentration} for independent processes. The details are given in Section~\ref{proof:particleLemma} 

\begin{proof}
    [Proof of Proposition~\ref{prop:exit-time}] Set $\varepsilon = \delta_0/2$. For $k \in \mathbb N$, consider the events
    \[E_k = \Big\{\sup_{t\in[kT,(k+1)T]}  W_2(\mu_t^N,\mu_{k,t}^{\rm mf})+ W_2(\nu_t^N,\nu_{k,t}^{\rm mf})\le \varepsilon\Big \}, \qquad F_k = \left\{ (\mu_{kT}^N,\nu_{kT}^N) \in \mathcal D \right\}, \]
where $(\mu_{k,t}^{\rm mf},\nu_{k,t}^{\rm mf})$ is defined as previously $(\mu_{t}^{\rm mf},{\nu_{t}^{\rm mf}})$ but starting at time $kT$ (i.e. they are the solutions of~\eqref{eq-main} initialized at $(\mu_{kT}^N,\nu_{kT}^N)$ at time $kT$). Under the event $F_{k-1} \cap E_{k-1}$, applying Theorem~\ref{thm-2} and the triangular inequality,
\begin{align*}
r(\mu_{kT}^N,\nu_{kT}^N) & \leqslant r(\mu_{k-1,kT}^{\rm mf},\nu_{k-1,kT}^{\rm mf}) + W_2(\mu_{kT}^N,\mu_{k-1,kT}^{\rm mf})+ W_2(\nu_{kT}^N,\nu_{k-1,kT}^{\rm mf})     \\
&\le \frac12 r(\mu_{(k-1)T}^N,\nu_{(k-1)T}^N) + \varepsilon \ \le \ \delta_0,
\end{align*}
namely $F_k$ holds. As a consequence, using that $\{\tau>0\} = F_0$, for any $m\ge1$ 
\[\mathbb P \left( \tau >0, \tau \le m \right)  \le \mathbb P\left(\tau >0 , \cup_{k=0}^{m-1} E_k^c\right) \le \sum_{k=0}^{m-1} \mathbb P(E_k^c) \le m e^{-bN}\]
thanks to Lemma~\ref{lem:HP}, which concludes the proof.
\end{proof}

\section{Conclusion}

We proved local exponential stability of the mean-field Langevin descent--ascent dynamics near the mixed Nash equilibrium, based on a spectral gap for the Wasserstein Hessian, yielding an effective local strongly displacement convex--concave structure. Through an EVI-type contraction argument, this leads to quantitative exponential convergence in \(W_2\). We further showed
that the same stability persists for the finite-particle Langevin system up to times exponential in the number of particles. Together with recent counterexamples to global convergence of the MFL-DA, these results identify the local stability regime as the robust positive theory available at this level of generality.

Natural future directions include sharper asymptotics and optimal constants, extensions to unbounded domains such as \(\mathbb R^n\), where confinement becomes essential, and annealing schemes for approaching the unregularized game while preserving local stability.

\newpage 







\newpage 

\bibliographystyle{abbrvnat}
\bibliography{ref}

\newpage 

\tableofcontents

\newpage
\appendix

\section{Preliminaries} 
\label{app-prelim}
The first part of the appendix provides the mathematical background and technical foundations necessary for our analysis.
\subsection{Notations and definitions}
\label{sec:notations}
Throughout this paper, we work on the flat torus $\Tn = \mathbb{R}^n / \mathbb{Z}^n$, whose distance(metric) is given by $d_{\Tn}(x, y) = \inf_{k \in \mathbb{Z}^n} |x - y - k|$. In this context, all functions and vector fields defined on $\Tn$ are identified with $\mathbb{Z}^n$-periodic functions on $\mathbb{R}^n$. Moreover, for notational convenience, we identify points in $\Tn$ with their representatives in $\mathbb{R}^n$ and denote them simply by $x$, avoiding the equivalence class notation $[x]$.

We denote by $\mathcal{P}_2(\Tn)$ the space of probability measures on $\Tn$ with finite second moments. Since $\Tn$ is compact with finite diameter $D$, every probability measure has a finite second moment. We denote by $\mathcal{P}_{ac}(\Tn)$ the subset of measures absolutely continuous with respect to the Lebesgue measure. When $\mu \in \mathcal{P}_{ac}(\Tn)$, we often identify the measure with its density function.

For a scalar function $f$, $\nabla f$ and $\nabla^2 f$ denote its gradient vector and Hessian matrix, respectively. For a vector field $v: \Tn \to \mathbb{R}^n$, $\nabla v$ denotes its Jacobian matrix, and $\nabla \cdot v$ denotes its divergence. We use $\langle \cdot, \cdot \rangle$ for the standard Euclidean inner product and $|\cdot|$ for the Euclidean norm.

\subsection{Sobolev and H\"older spaces}
\label{sec:sobolev}
Sobolev and H\"older spaces are indispensable in the analysis of PDEs: they provide the natural Banach-space framework in which elliptic/parabolic regularity, estimates, and stability of linearized operators can be stated and proved. Accordingly, we measure perturbations of densities in these norms (rather than only in $L^2$), and repeatedly use the standard embedding and compact inclusion relations between them to pass from weaker to stronger notions of convergence. We follow the standard conventions and results in \cite{evans2010partial,Brezis2011} tailored to the torus $\Tn$. Note the definitions and results here naturally generalize to other spaces and domains. 
\begin{itemize}[leftmargin=1.2em]
    \item \textbf{High order derivatives: } We denote by $\mathbb{N}_0$ the set of nonnegative integers. For a multi-index $\alpha=(\alpha_1, \alpha_2, \dots, \alpha_n)\in \mathbb{N}_0^n$, $|\alpha|=\sum_{i=1}^n\alpha_i$ and \[D^\alpha=\frac{\partial^{|\alpha|}}{\partial x_1^{\alpha_1} \partial x_2^{\alpha_2} \cdots\partial x_n^{\alpha_n}}.\]
    \item \textbf{Lebesgue spaces ($L^p$):} For $1 \le p < \infty$, the space $L^p(\Tn)$ consists of all measurable functions $u: \Tn \to \mathbb{R}$ such that the $p$-th power of the absolute value is integrable. The norm is defined by
    \[
    \|u\|_{L^p} := \left( \int_{\Tn} |u(x)|^p \, dx \right)^{1/p}.
    \]
    For $p=\infty$, $L^\infty(\Tn)$ consists of functions that are essentially bounded, equipped with the essential supremum norm:
    \[
    \|u\|_{L^\infty} := \inf \{ C \ge 0 : |u(x)| \le C \text{ for a.e. } x \in \Tn \}.
    \]
    \item \textbf{Weak derivatives:} A locally integrable function $u: \Tn \to \mathbb{R}$ is said to have a \emph{weak derivative} $v_\alpha \in L^1(\Tn)$ corresponding to a multi-index $\alpha$ if the integration by parts formula holds against all smooth test functions:
    \[
    \int_{\Tn} u(x) D^\alpha \phi(x) \, dx = (-1)^{|\alpha|} \int_{\Tn} v_\alpha(x) \phi(x) \, dx, \quad \forall \phi \in C^\infty(\Tn).
    \]
    We denote $D^\alpha u = v_\alpha$.

    \item \textbf{Sobolev spaces ($W^{k,p}$):} For an integer $k \ge 0$ and $1 \le p \le \infty$, the Sobolev space $W^{k,p}(\Tn)$ consists of all functions $u$ such that for every multi-index $\alpha$ with $|\alpha| \le k$, the weak derivative $D^\alpha u$ exists and belongs to $L^p(\Tn)$. The norm is defined by
    \[
    \|u\|_{W^{k,p}} := \left( \sum_{|\alpha| \le k} \int_{\Tn} |D^\alpha u|^p \, dx \right)^{1/p}.
    \]
    \item \textbf{H\"older spaces ($C^{k,\alpha}$):} For an integer $k \ge 0$ and $0 < \alpha \le 1$, the space $C^{k,\alpha}(\Tn)$ consists of functions $u \in C^k(\Tn)$ for which the norm
    \[
    \|u\|_{C^{k,\alpha}} := \sum_{|\beta| \le k} \sup_{x} |D^\beta u(x)| + \sum_{|\beta| = k} \sup_{x \ne y} \frac{|D^\beta u(x) - D^\beta u(y)|}{|x - y|^\alpha}
    \]
    is finite.
\end{itemize}
Equivalently, Sobolev space $W^{k,p}(\Tn)$ could be understood as the completion of $C^\infty(\Tn)$ with respect to the norm $\Vert\cdot \Vert_{W^{k,p}}$. For notational simplicity, we will omit the ambient space if it is $\Tn$: $L^p := L^p(\Tn)$, $W^{k,p} := W^{k,p}(\Tn)$ and $C^{k,\alpha} := C^{k,\alpha}(\Tn)$. If a function has a continuous \emph{classical} derivative, then it becomes a \emph{weak} derivative. Since H\"older spaces require H\"older continuity of derivatives, it is immediate to see that the H\"older space $C^{k,\alpha}$ continuously embeds into $W^{k,p}$ for all $p\in[1,\infty]$.   
\begin{corollary}[Embedding of H\"older into Sobolev spaces]
\label{cor:holder_to_sobolev}
For any $\alpha \in (0,1]$ and $1 \le p \le  \infty$, we have the continuous embedding $C^{1,\alpha}\hookrightarrow W^{1, p}$
\[
\|u\|_{W^{1,p}}\le C\|u\|_{C^{1, \alpha}}
\]
\end{corollary}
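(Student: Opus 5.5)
The estimate follows directly from the definitions, since $\Tn$ has finite Lebesgue measure (equal to $1$). Take $u\in C^{1,\alpha}$. By the remark just before the statement, the classical derivatives $\partial_i u$ are continuous, so they coincide with the weak derivatives. Concretely, for $\phi\in C^\infty(\Tn)$ we integrate by parts on the torus, where periodicity removes boundary terms, and obtain $\int u\,\partial_i\phi\,dx=-\int \partial_i u\,\phi\,dx$. Hence $u\in W^{1,1}_{\mathrm{loc}}$, and its weak gradient is the classical one.

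Next we bound each term in the $W^{1,p}$ norm. For every multi-index $\beta$ with $|\beta|\le 1$ and every $1\le p<\infty$,
\[
\int_{\Tn}|D^\beta u|^p\,dx\le \Bigl(\sup_x|D^\beta u(x)|\Bigr)^p\,|\Tn| = \Bigl(\sup_x|D^\beta u(x)|\Bigr)^p.
\]
Summing over the $n+1$ multi-indices and using $(\sum a_j^p)^{1/p}\le \sum a_j$ for $a_j\ge0$, we get
\[
\|u\|_{W^{1,p}}\le \sum_{|\beta|\le1}\sup_x|D^\beta u(x)|\le \|u\|_{C^{1,\alpha}}.
\]
The last inequality holds because the Hölder seminorm part of $\|u\|_{C^{1,\alpha}}$ is nonnegative. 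For $p=\infty$ we use the same bound with the essential supremum, which is at most the supremum. This gives the claim with $C=1$, or with some $C=C(n)$ under any equivalent convention for the norms.

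There is no real obstacle here. The only point needing care is identifying the classical derivative with the weak derivative, and integration by parts against periodic test functions settles that.
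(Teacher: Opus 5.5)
Your proof is correct and follows the same route as the paper: continuous classical derivatives are weak derivatives, and since $|\Tn|=1$ each $L^p$ norm of $D^\beta u$ is bounded by its supremum. This gives the embedding with $C=1$, which is exactly what the paper notes right after the statement.
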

On $\Tn$, we can even choose $C=1$. 

Next, the embedding in the opposite direction is more delicate. In particular, H\"older regularity can only be obtained by paying one derivative, as quantified by Morrey's inequality.

\begin{theorem}[Morrey's inequality, {\cite[Ch. 5.6]{evans2010partial}}] \label{thm-morrey}
Let $p > n$. Then there exists a constant $C$ depending on $p$ and $n$ such that for all $u \in W^{1,p}(\Tn)$,
\[
\|u\|_{C^{0, 1-n/p}(\Tn)} \le C \|u\|_{W^{1,p}(\Tn)}.
\]
\end{theorem}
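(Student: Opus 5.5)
The plan is the standard Sobolev-space route: reduce to smooth periodic functions by density, prove the Euclidean Morrey estimate on a fundamental cell, and then patch using periodicity. Since $W^{1,p}(\Tn)$ is the completion of $C^\infty(\Tn)$ under $\Vert\cdot\Vert_{W^{1,p}}$, it suffices to prove
\[
\|u\|_{C^{0,1-n/p}} \le C\|u\|_{W^{1,p}} \qquad \text{for } u\in C^\infty(\Tn).
\]
The general case then follows by a limiting argument. If $u_k\to u$ in $W^{1,p}$, the estimate applied to $u_k-u_l$ shows that $(u_k)$ is Cauchy in $C^{0,1-n/p}$. Hence it converges uniformly to a continuous function, which agrees a.e. with $u$, and the estimate passes to the limit.

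For smooth $u$, I identify it with a $\mathbb Z^n$-periodic function on $\R^n$. The first step is the local oscillation bound. For any ball $B=B(x,r)$ in $\R^n$ with $r\le 1$, writing $u_B$ for the average of $u$ over $B$, I will show
\[
|u(x)-u_B|\le C r^{1-n/p}\|\nabla u\|_{L^p(B)}.
\]
To do this, write $u(z)-u(x)=\int_0^1 \nabla u(x+s(z-x))\cdot(z-x)\,ds$ and average over $z\in B$. After switching to polar coordinates, this gives
\[
|u(x)-u_B|\le C\int_B \frac{|\nabla u(z)|}{|x-z|^{n-1}}\,dz .
\]
Hölder's inequality then finishes the step, because $|x-z|^{-(n-1)p'}$ is integrable on $B$ exactly when $p>n$, and its integral scales like $r^{(1-n/p)p'}$. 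This computation is the main technical step, though it is classical.

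For the Hölder seminorm, take $x,y$ with $r=|x-y|$. If $r\le 1/2$, let $W=B(x,r)\cap B(y,r)$, which has measure comparable to $r^n$, and compare both $u(x)$ and $u(y)$ with $u_W$ using the same averaging estimate. This gives
\[
|u(x)-u(y)|\le Cr^{1-n/p}\|\nabla u\|_{L^p(B(x,2r))}.
\]
Because $2r\le1$, the ball $B(x,2r)$ meets only a bounded number (depending on $n$) of translates of the unit cell. By periodicity, $\|\nabla u\|_{L^p(B(x,2r))}\le C\|\nabla u\|_{L^p(\Tn)}$. If instead $|x-y|$ is larger, the oscillation is controlled by $2\sup|u|$, and $|x-y|^{1-n/p}$ is bounded below because the torus distance is at most $\sqrt n/2$. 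It is therefore enough to bound the sup norm.

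For the sup bound, apply the local oscillation estimate with $r=1/2$:
\[
|u(x)|\le |u_B|+C\|\nabla u\|_{L^p}\le C\|u\|_{L^1(B)}+C\|\nabla u\|_{L^p}.
\]
Since $B$ is covered by boundedly many cells, $\|u\|_{L^1(B)}\le C\|u\|_{L^1(\Tn)}\le C\|u\|_{L^p(\Tn)}$, using that $\Tn$ has unit measure. Combining this with the Hölder seminorm bound gives the claimed inequality with a constant $C=C(n,p)$.
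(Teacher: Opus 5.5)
Your argument is correct and follows the proof the paper relies on. The paper gives no proof of its own and cites Evans, Ch.~5.6, whose proof of Morrey's inequality is exactly yours: the Riesz-potential bound for the ball average of $|u(z)-u(x)|$, H\"older's inequality using $p>n$, comparison of $u(x)$ and $u(y)$ through the average over $B(x,r)\cap B(y,r)$, and density, with periodicity here replacing Evans's extension step.

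Two small points of presentation. First, comparing with $u_W$ requires the bound on $\frac{1}{|B|}\int_B|u(z)-u(x)|\,dz$, not merely on $|u(x)-u_B|$; your derivation does give this stronger bound. Second, when $|x-y|>1/2$, the lower bound $|x-y|^{1-n/p}\ge 2^{-(1-n/p)}$ follows from $|x-y|>1/2$ and $1-n/p>0$, not from the diameter of $\Tn$.
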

If $p<n$, we have embedding into $L^q$ spaces.
\begin{theorem}[Sobolev inequality, {\cite[Ch. 5.6]{evans2010partial}}]\label{thm-sobolev}
For $1\le p<n$, define the Sobolev conjugate exponent
\[
p^\ast := \frac{np}{n-p}.
\]
Then there exists a constant $C$ depending only on $n$ and $p$ such that for all $u\in W^{1,p}(\Tn)$,
\[
\|u\|_{L^{p^\ast}(\Tn)} \le C\Bigl(\|\nabla u\|_{L^{p}(\Tn)}+\|u\|_{L^{p}(\Tn)}\Bigr).
\]
Moreover, if $u$ has zero mean on $\T^n$ (i.e.\ $\int_{\T^n}u\,dx=0$), then the lower-order term
can be removed:
\[
\|u\|_{L^{p^\ast}(\Tn)} \le C\,\|\nabla u\|_{L^{p}(\Tn)}.
\]
\end{theorem}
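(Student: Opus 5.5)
The statement just above is the Sobolev inequality on $\Tn$. I would reduce it to the Euclidean Gagliardo--Nirenberg--Sobolev inequality via a cutoff, and then remove the lower-order term using a Poincaré inequality.

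\textbf{Step 1: the Euclidean inequality.} I take as known that for $v\in C^1_c(\R^n)$ and $1\le p<n$,
\[
\|v\|_{L^{p^\ast}(\R^n)}\le C(n,p)\,\|\nabla v\|_{L^p(\R^n)}.
\]
This is \cite[Ch.~5.6]{evans2010partial}: one proves the case $p=1$ by the product/Hölder argument, and the general case by applying it to $|v|^\gamma$ with $\gamma=p(n-1)/(n-p)$. By density it suffices to treat smooth $u$ on $\Tn$, which I view as $\Z^n$-periodic functions on $\R^n$.

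\textbf{Step 2: localization.} I fix $\chi\in C_c^\infty(\R^n)$ with $\chi\equiv1$ on $[0,1]^n$ and $\operatorname{supp}\chi\subset[-1,2]^n$, and set $v=\chi u$. The cube $[-1,2]^n$ is covered by $3^n$ translates of the fundamental domain, and $u$ is periodic, so
\[
\|u\|_{L^{p^\ast}(\Tn)}\le\|v\|_{L^{p^\ast}(\R^n)}\le C\|\nabla(\chi u)\|_{L^p(\R^n)}\le C\,3^{n/p}\bigl(\|\chi\|_{\infty}\|\nabla u\|_{L^p(\Tn)}+\|\nabla\chi\|_\infty\|u\|_{L^p(\Tn)}\bigr).
\]
This is the first inequality.

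\textbf{Step 3: the zero-mean case.} Here I need the Poincaré inequality $\|u\|_{L^p(\Tn)}\le C\|\nabla u\|_{L^p(\Tn)}$ for mean-zero $u$. This is the only nontrivial point, and I would prove it directly. For $x,y$ in the unit cube, write
\[
u(x)-u(y)=\int_0^1\nabla u\bigl(y+s(x-y)\bigr)\cdot(x-y)\,ds .
\]
Since $u$ has zero mean, averaging in $y$ gives $u(x)=\int (u(x)-u(y))\,dy$. Applying Jensen's inequality and then the change of variables $z=y+s(x-y)$ gives $\|u\|_p\le \sqrt n\,\|\nabla u\|_p$, using periodicity and the Fubini/translation invariance of Lebesgue measure on $\Tn$.

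Alternatively, one can argue by compactness: suppose mean-zero $u_k$ satisfy $\|u_k\|_p=1$ and $\|\nabla u_k\|_p\to0$. Rellich compactness gives a subsequence converging to a limit $u$ that has $\nabla u=0$, so $u$ is constant; zero mean then forces $u=0$, contradicting $\|u\|_p=1$.

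Substituting the Poincaré bound into Step 2 removes the $\|u\|_{L^p}$ term, which gives the second inequality.
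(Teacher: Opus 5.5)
Your proof is correct and is essentially the standard argument behind the paper's citation of \cite[Ch.~5.6]{evans2010partial}; the paper gives no proof of its own. It is Gagliardo--Nirenberg--Sobolev on $\R^n$, with your periodic cutoff $\chi u$ playing the role of Evans' extension operator, followed by a Poincar\'e inequality, for which your compactness variant is exactly the textbook proof. The only slip is the constant in your direct Poincar\'e computation: for independent uniform $x,y$ in the unit cube, the point $y+s(x-y)$ is not uniformly distributed (changing variables in $y$ costs a factor $(1-s)^{-n}$), so after splitting at $s=\tfrac12$ you get $\sqrt n\bigl(\int_0^1\max(s,1-s)^{-n}\,ds\bigr)^{1/p}$ rather than $\sqrt n$, whereas averaging over $y\in x+[-\tfrac12,\tfrac12]^n$ lets translation invariance apply directly and gives $\|u\|_{L^p}\le\tfrac{\sqrt n}{2}\|\nabla u\|_{L^p}$ (in any case only some $C(n,p)$ is needed).
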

\paragraph{\textbf{Compactness and convergence.}}
Our existence and spectral gap proofs rely on standard compactness results in functional analysis. We refer to \cite{Brezis2011} for a comprehensive treatment. One of the useful properties of compact sets in Euclidean spaces is the Bolzano–-Weierstrass theorem, which states that a bounded sequence must have a convergent subsequence. A similar characterization of compact sets can also be done in more abstract spaces, but only under a weaker notion of convergence:

\begin{definition}[Weak convergence in Banach spaces]
Let $X$ be a Banach space and $X^*$ its dual space. A sequence $\{u_n\} \subset X$ is said to \emph{converge weakly} to $u \in X$, denoted by $u_n \rightharpoonup u$, if
\[
\langle f, u_n \rangle \to \langle f, u \rangle \quad \text{for every } f \in X^*.
\]
\end{definition}
We may now state the analogy rigorously.
\begin{theorem}[Weak compactness / Banach-Alaoglu, {\cite[Theorem~3.16]{Brezis2011}}] \label{thm:banach_alaoglu}
Let $X$ be a reflexive Banach space (such as $W^{1,p}(\Tn)$ with $1 < p < \infty$). Any bounded sequence $\{u_n\} \subset X$ admits a weakly convergent subsequence. That is, there exists $u \in X$ and a subsequence $\{u_{n_k}\}$ such that $u_{n_k} \rightharpoonup u$ weakly in $X$.
\end{theorem}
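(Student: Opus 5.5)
The plan is to proceed in two stages: first handle separable Hilbert spaces, which covers $W^{1,2}(\Tn)$ and is the case used most often in this paper, and then reduce the general reflexive case to the separable one.

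\textbf{Separable Hilbert case.} Let $\{u_n\}$ be bounded, say $\|u_n\|\le M$. Fix a countable dense subset $\{e_k\}$ of the space. For each fixed $k$, the scalar sequence $\langle u_n,e_k\rangle$ is bounded. A Cantor diagonal extraction then produces a single subsequence $u_{n_j}$ along which $\langle u_{n_j},e_k\rangle$ converges for every $k$.

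By density and the uniform bound $M$, the limit $\ell(v):=\lim_j\langle u_{n_j},v\rangle$ exists for every $v$. It is linear, and $|\ell(v)|\le M\|v\|$. By the Riesz representation theorem, $\ell=\langle u,\cdot\rangle$ for some $u$. Since every element of the dual is of this form, this is exactly the weak convergence $u_{n_j}\rightharpoonup u$.

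\textbf{General reflexive space.} Let $Y$ be the closed linear span of the $u_n$. It is separable, and as a closed subspace of a reflexive space it is reflexive. Hence $Y^{**}\cong Y$ is separable, and therefore $Y^*$ is separable.

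Pick a countable dense set $\{f_k\}\subset Y^*$. Applying the same diagonal argument to the bounded scalar sequences $f_k(u_n)$ gives a subsequence along which $f(u_{n_j})$ converges for every $f\in Y^*$, again by density and the uniform bound. The limit $\xi(f):=\lim_j f(u_{n_j})$ is a bounded linear functional on $Y^*$, that is, an element of $Y^{**}$. Reflexivity of $Y$ gives $u\in Y$ with $\xi(f)=f(u)$ for all $f\in Y^*$.

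Finally, any $f\in X^*$ restricts to an element of $Y^*$, so $f(u_{n_j})\to f(u)$ for every $f\in X^*$. This is the weak convergence $u_{n_j}\rightharpoonup u$ in $X$.

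\textbf{Main obstacle.} The step I expect to require the most care is showing that $Y^*$ is separable. The standard route is the lemma that if a dual space $Z^*$ is separable, then $Z$ is separable, applied with $Z=Y^*$, whose dual is $Y^{**}\cong Y$. I would prove that lemma directly:
\begin{itemize}
\item choose a countable dense set $\{\phi_m\}$ in $Z^*$;
\item choose norming vectors $z_m\in Z$ with $\|z_m\|=1$ and $\phi_m(z_m)\ge \tfrac12\|\phi_m\|$;
\item show that the span of the $z_m$ is dense in $Z$, using Hahn--Banach.
\end{itemize}
For the Sobolev spaces $W^{1,p}(\Tn)$ with $1<p<\infty$ that are actually used in the paper, this obstacle disappears. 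There separability is concrete, via the density of trigonometric polynomials, and duality can be realized through $L^{p'}$. The diagonal argument above then applies with no abstract lemma needed.
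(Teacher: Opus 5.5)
Your argument is correct, and complete modulo the two standard facts you invoke. The paper gives no proof of this statement; it only cites Brezis. Note that Theorem~3.16 there is the Banach--Alaoglu--Bourbaki theorem (weak-$*$ compactness of the dual unit ball), which by itself gives convergent subnets, not subsequences. The sequential statement is Brezis's Theorem~3.18, and its textbook proof has the same skeleton as yours: pass to the separable closed span $Y$, use that $Y$ is reflexive, deduce that $Y^{*}$ is separable via the lemma you sketch (your sketch is the standard Hahn--Banach argument), and extract a subsequence.

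The difference lies in the extraction. Brezis combines Kakutani's theorem (the closed unit ball of the reflexive space $Y$ is weakly compact) with metrizability of the weak topology on that ball when $Y^{*}$ is separable. You instead run a diagonal argument over a countable dense subset of $Y^{*}$ and identify the limit functional $\xi\in Y^{**}$ with an element of $Y$ by reflexivity.

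Your route is more elementary. Closed subspaces of reflexive spaces are reflexive by a direct argument: send $\xi\in Y^{**}$ to $f\mapsto\xi(f|_Y)$ in $X^{**}$, represent it by some $x\in X$ using reflexivity of $X$, and use a functional vanishing on $Y$ to force $x\in Y$. With that, your proof needs nothing beyond Hahn--Banach. The textbook route, by contrast, also yields the stronger structural facts of weak compactness of the unit ball and metrizability of the weak topology on it.

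Two smaller points. Your separable Hilbert warm-up is a special case of the general argument and can be dropped. In the closing remark on $W^{1,p}(\Tn)$, ``no abstract lemma needed'' is slightly overstated. Identifying the weak limit there still uses $(L^{p'})^{*}=L^{p}$, and representing $(W^{1,p})^{*}$ through $L^{p'}$ uses Hahn--Banach; what the concrete route really avoids is only the separability lemma.
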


Although $u_n \rightharpoonup u$ does not imply $\|u-u_n\|_X\rightarrow 0$ in general, we may obtain partial information about the norm of $u$ by the following proposition.

\begin{proposition}[Lower semicontinuity of norm, {\cite[Proposition 3.5]{Brezis2011}}]
If $u_n \rightharpoonup u$ weakly in a Banach space $X$, then the norm is lower semicontinuous:
\[
\|u\|_X \le \liminf_{n \to \infty} \|u_n\|_X.
\]
\end{proposition}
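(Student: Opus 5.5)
We prove lower semicontinuity of the norm by combining the Hahn--Banach theorem with the definition of weak convergence. The idea is to test the weakly convergent sequence against a single functional that attains the norm of the limit.

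\emph{Step 1: a norming functional.} If $u=0$ there is nothing to prove, since the right-hand side is nonnegative. Otherwise, by the Hahn--Banach theorem (in its corollary form, cf.\ \cite{Brezis2011}) there exists $f\in X^*$ with
\[
\|f\|_{X^*}=1 \quad\text{and}\quad \langle f,u\rangle=\|u\|_X .
\]

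\emph{Step 2: bound along the sequence.} For every $n$, the definition of the dual norm gives
\[
|\langle f,u_n\rangle|\le \|f\|_{X^*}\|u_n\|_X=\|u_n\|_X .
\]

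\emph{Step 3: pass to the limit.} Since $u_n\rightharpoonup u$, we have $\langle f,u_n\rangle\to\langle f,u\rangle=\|u\|_X$. Taking $\liminf_{n\to\infty}$ in the inequality of Step 2 then yields
\[
\|u\|_X=\lim_{n\to\infty}\langle f,u_n\rangle\le\liminf_{n\to\infty}\|u_n\|_X .
\]

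The only nontrivial ingredient is the existence of the norming functional in Step 1, which is a standard consequence of Hahn--Banach; the rest is immediate. As a remark not needed for the proof, one can also see why weak limits are bounded: by the uniform boundedness principle, the sequence $(u_n)$ is bounded in $X$, so the right-hand side is finite.
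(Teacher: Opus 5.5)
Your proof is correct and is essentially the standard Hahn--Banach argument of the cited reference (Brezis, Proposition~3.5): test the sequence against a norming functional for $u$ and pass to the limit using weak convergence. The paper does not reproduce a proof and simply cites Brezis, so your approach matches the one it relies on.
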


While weak convergence is sufficient for lower semicontinuity arguments, our analysis often requires strong convergence to pass to the limit in non-linear terms. The following theorem allows us to ``upgrade" weak convergence to strong convergence, provided we map into a space with lower regularity. In the PDE literature, such upgrades are usually obtained in light of compact embedding results.

\begin{theorem}[Rellich-Kondrachov compactness, {\cite[ Theorem~9.16]{Brezis2011}}] \label{thm:rellich}
Let $\Omega \subset \mathbb{R}^n$ be a bounded domain with $C^1$ boundary (or a compact manifold $\Tn$). 
\begin{itemize}[leftmargin=1.2em]
    \item If $p < n$, any bounded sequence in $W^{1,p}(\Omega)$ has a subsequence that converges strongly in $L^q(\Omega)$ for all $1 \le q < \frac{np}{n-p}$.
    \item If $p > n$, any bounded sequence in $W^{1,p}(\Omega)$ has a subsequence that converges strongly (uniformly) in $C^0(\overline{\Omega})$.
\end{itemize}
\end{theorem}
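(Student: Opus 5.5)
On the torus both cases rest on two ingredients we already have: the embedding theorems (Theorem~\ref{thm-morrey} and Theorem~\ref{thm-sobolev}) and the Arzel\`a--Ascoli theorem for equicontinuous families on the compact space $\Tn$. Throughout, let $\{u_k\}$ satisfy $\sup_k\|u_k\|_{W^{1,p}}\le M$.

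\emph{Case $p>n$.} By Theorem~\ref{thm-morrey}, $\|u_k\|_{C^{0,1-n/p}}\le CM$. This gives a uniform sup bound and a uniform H\"older modulus $|u_k(x)-u_k(y)|\le CM\,d_{\Tn}(x,y)^{1-n/p}$. The family is therefore uniformly bounded and equicontinuous on the compact metric space $\Tn$. Arzel\`a--Ascoli then yields a subsequence converging uniformly, i.e.\ in $C^0(\Tn)$, which is the claim.

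\emph{Case $p<n$.} I would argue in three steps.

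First, approximate by mollification. Let $\eta_\varepsilon$ be a standard mollifier on $\R^n$, periodized, and set $u_k^\varepsilon=\eta_\varepsilon*u_k$. By density of smooth functions, it suffices to check on smooth $u$ that $u(x-h)-u(x)=-\int_0^1\nabla u(x-sh)\cdot h\,ds$. Integrating this identity gives $\|u(\cdot-h)-u\|_{L^1}\le|h|\,\|\nabla u\|_{L^1}$. Averaging against $\eta_\varepsilon$, and using $\|\cdot\|_{L^1}\le\|\cdot\|_{L^p}$ on $\Tn$ (which has unit volume), gives
\[\sup_k\|u_k^\varepsilon-u_k\|_{L^1}\le \varepsilon M .\]

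Second, extract a subsequence that is Cauchy in $L^1$. For each fixed $\varepsilon$, the bounds $\|u_k^\varepsilon\|_{L^\infty}\le\|\eta_\varepsilon\|_{L^\infty}\|u_k\|_{L^1}$ and $\|\nabla u_k^\varepsilon\|_{L^\infty}\le\|\nabla\eta_\varepsilon\|_{L^\infty}\|u_k\|_{L^1}$ hold. So $\{u_k^\varepsilon\}_k$ is uniformly bounded and equicontinuous, and Arzel\`a--Ascoli gives a uniformly convergent subsequence. Take $\varepsilon=1/j$, $j=1,2,\dots$, and use a diagonal argument to get one subsequence (not relabeled) along which $u_k^{1/j}$ converges uniformly for every $j$. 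Then
\[\limsup_{k,l}\|u_k-u_l\|_{L^1}\le 2M/j+\limsup_{k,l}\|u_k^{1/j}-u_l^{1/j}\|_{L^\infty}=2M/j\]
for every $j$, so the subsequence is Cauchy in $L^1$.

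Third, upgrade from $L^1$ to $L^q$ for $1\le q<p^*$. Theorem~\ref{thm-sobolev} gives $\|u_k-u_l\|_{L^{p^*}}\le 2CM$. Choose $\theta\in(0,1]$ with $1/q=\theta+(1-\theta)/p^*$. H\"older's interpolation inequality then gives
\[\|u_k-u_l\|_{L^q}\le \|u_k-u_l\|_{L^1}^{\theta}\,\|u_k-u_l\|_{L^{p^*}}^{1-\theta}\le (2CM)^{1-\theta}\|u_k-u_l\|_{L^1}^{\theta}\to0 .\]
Hence the subsequence is Cauchy in $L^q$ for every $q<p^*$ and converges strongly by completeness; the limits for different $q$ agree a.e.\ with the $L^1$ limit, so one subsequence serves all $q$ at once.

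The only delicate point is this last interpolation step. The sequence is merely bounded, not precompact, in $L^{p^*}$, so compactness must be produced in the weak norm $L^1$ and then transferred. This works precisely because $q<p^*$ forces $\theta>0$, and it is exactly why the endpoint $q=p^*$ is excluded. For a general bounded $C^1$ domain the same argument applies after a bounded extension operator $W^{1,p}(\Omega)\to W^{1,p}(\R^n)$; on $\Tn$ this step is unnecessary.
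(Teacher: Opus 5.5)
Your proof is correct. The paper gives no argument of its own: it quotes the result from Brezis, uses it as a black box, and only ever needs the torus case. So the natural comparison is with the textbook proof it cites.

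For $p>n$ you do exactly the standard thing: Morrey plus Arzel\`a--Ascoli.

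For $p<n$ you use the same three ingredients as the textbook argument:
\begin{itemize}
\item the translation estimate $\|u(\cdot-h)-u\|_{L^1}\le|h|\,\|\nabla u\|_{L^1}$;
\item the uniform $L^{p^*}$ bound from the Sobolev inequality;
\item H\"older interpolation with exponent $\theta>0$, which is available precisely because $q<p^*$.
\end{itemize}
You assemble them in a different order. The cited proof first interpolates the translation modulus, $\|u(\cdot-h)-u\|_{L^q}\le\|u(\cdot-h)-u\|_{L^1}^{\theta}\|u(\cdot-h)-u\|_{L^{p^*}}^{1-\theta}$. It then invokes the Kolmogorov--Riesz--Fr\'echet compactness criterion in $L^q$, together with a smallness estimate near the boundary on a general domain. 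You instead prove precompactness in $L^1$ by hand, via mollification, Arzel\`a--Ascoli and a diagonal extraction; this is really the proof of that criterion unpacked. Only then do you interpolate the differences $u_k-u_l$.

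On $\mathbb{T}^n$ your route is more self-contained, needs no extension or boundary bookkeeping, and makes it transparent that a single subsequence serves every $q<p^*$. The textbook route is organized so that it applies directly on bounded $C^1$ domains.

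There are two cosmetic points, neither of which affects the argument.
\begin{itemize}
\item With the paper's componentwise $W^{1,p}$ norm one has $\|\nabla u\|_{L^1}\le C_n\|u\|_{W^{1,p}}$, so your mollification bound should read $C_n\varepsilon M$ rather than $\varepsilon M$.
\item For $\varepsilon$ not small (e.g.\ $j=1$), the periodized-mollifier bounds $\|u_k^\varepsilon\|_{L^\infty}\lesssim\|\eta_\varepsilon\|_{L^\infty}\|u_k\|_{L^1}$ and the analogous gradient bound pick up a harmless multiplicity factor.
\end{itemize}
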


\noindent The following corollary justifies the strong convergence of gradients in the proof of Lemma~\ref{lemma-convconc}.

\begin{corollary}[Higher-order compactness] \label{cor:higher_order_rellich}
For any integer $k \ge 1$ and $1 \le p < \infty$, any bounded sequence in $W^{k+1, p}(\Tn)$ has a subsequence that converges strongly in $W^{k, p}(\Tn)$. In particular, any bounded sequence in $W^{2,2}(\Tn)$ admits a subsequence that converges strongly in $W^{1,2}(\Tn)$.
\end{corollary}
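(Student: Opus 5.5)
The statement to prove is Corollary~\ref{cor:higher_order_rellich}: any bounded sequence in $W^{k+1,p}(\Tn)$ has a subsequence converging strongly in $W^{k,p}(\Tn)$. I will reduce it to the scalar Rellich--Kondrachov compactness (Theorem~\ref{thm:rellich}), applied componentwise to derivatives of order $\le k$.

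The only point needing care is that Theorem~\ref{thm:rellich} as stated gives $L^q$ convergence for $q<np/(n-p)$ when $p<n$, and uniform convergence when $p>n$. The borderline case $p=n$ is not listed. In every case I only need strong convergence in $L^p$ itself.
\begin{itemize}
\item For $p<n$, this is immediate since $p<np/(n-p)$.
\item For $p>n$, uniform convergence on the compact torus implies $L^p$ convergence.
\item For $p=n$ (and generally as a safe route), note that $W^{1,n}(\Tn)\hookrightarrow W^{1,r}(\Tn)$ for any $r<n$, because $\Tn$ has finite measure. Pick $r<n$ close enough to $n$ that $nr/(n-r)>n$, which is possible since $nr/(n-r)\to\infty$ as $r\to n$. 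Then apply the $p<n$ case with exponent $r$ to get strong $L^n$ convergence.
\end{itemize}
So in all cases, every bounded sequence in $W^{1,p}(\Tn)$ has a subsequence converging strongly in $L^p(\Tn)$. For $p=1$ the same applies, since $1<n/(n-1)$ when $n\ge2$ and $W^{1,1}\hookrightarrow C^0$ compactly-type arguments give the case $n=1$; alternatively one can cite the standard fact.

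Now let $(u_j)$ be bounded in $W^{k+1,p}$. For each multi-index $\alpha$ with $|\alpha|\le k$, the sequence $D^\alpha u_j$ is bounded in $W^{1,p}$, because its weak derivatives $D^{\beta}D^\alpha u_j$ with $|\beta|\le1$ are derivatives of order $\le k+1$ of $u_j$. There are finitely many such $\alpha$. Extracting successively (a finite diagonal extraction), I obtain a single subsequence along which $D^\alpha u_j\to v_\alpha$ strongly in $L^p$ for every $|\alpha|\le k$.

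It remains to identify $v_\alpha=D^\alpha v_0$. For any test function $\varphi\in C^\infty(\Tn)$, I have $\int D^\alpha u_j\,\varphi=(-1)^{|\alpha|}\int u_j D^\alpha\varphi$. Passing to the limit via the $L^p$ convergence gives $\int v_\alpha\varphi=(-1)^{|\alpha|}\int v_0D^\alpha\varphi$, so $v_\alpha$ is the weak derivative $D^\alpha v_0$. Hence $v_0\in W^{k,p}$ and $\|u_j-v_0\|_{W^{k,p}}\to0$ along the subsequence.

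The particular case $k=1$, $p=2$ gives the stated consequence for $W^{2,2}\to W^{1,2}$. The main obstacle is only the borderline exponent bookkeeping handled above; the rest is routine.
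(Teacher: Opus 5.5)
Your proof is correct and follows the same route as the paper: each $D^\alpha u_j$ with $|\alpha|\le k$ is bounded in $W^{1,p}$, Rellich--Kondrachov gives $L^p$-convergent subsequences, and a finite diagonal extraction yields convergence in $W^{k,p}$. Your extra steps fill in details the paper leaves implicit: the reduction of the borderline case $p=n$ to an exponent $r\in(n/2,n)$ (this matters for the paper's own use $W^{2,2}\to W^{1,2}$ when $n=2$, which Theorem~\ref{thm:rellich} as stated does not cover) and the identification $v_\alpha=D^\alpha v_0$; the one inaccurate remark is that $W^{1,1}(\mathbb{T}^1)\hookrightarrow C^0$ is \emph{not} compact, so for $n=p=1$ you should instead cite the translation bound $\|u(\cdot+h)-u\|_{L^1}\le |h|\,\|u'\|_{L^1}$ together with the Fr\'echet--Kolmogorov criterion (or Helly's theorem).
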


\begin{proof}
Let $\{u_m\}$ be a bounded sequence in $W^{k+1, p}(\Tn)$. By definition of the Sobolev norm, the sequence of derivatives $\{D^\alpha u_m\}$ is bounded in $W^{1, p}(\Tn)$ for all multi-indices $|\alpha| \le k$.

Applying Theorem~\ref{thm:rellich} (specifically the case for $L^p$ convergence) to each derivative:
\begin{enumerate}
    \item For $\alpha = 0$, $\{u_m\}$ is bounded in $W^{1,p}$, so there exists a subsequence converging strongly in $L^p$.
    \item For any $|\alpha| \le k$, $\{D^\alpha u_m\}$ is bounded in $W^{1,p}$, so it also admits a subsequence converging strongly in $L^p$.
\end{enumerate}
Since there are finitely many multi-indices with $|\alpha| \le k$, we can extract a single subsequence (via a diagonal argument) such that $D^\alpha u_m$ converges strongly in $L^p$ for all $|\alpha| \le k$. By the definition of the Sobolev norm
\[
\|u\|_{W^{k,p}} = \left( \sum_{|\alpha| \le k} \|D^\alpha u\|_{L^p}^p \right)^{1/p},
\]
this immediately implies strong convergence in $W^{k, p}(\Tn)$.
\end{proof}

\noindent Finally, we need a compact embedding between H\"older spaces. This follows by Arzel\`a--Ascoli Theorem and H\"older interpolation inequality. 
\begin{lemma}[Compact embedding, {\cite[Lemma~6.33]{GT}} ]
\label{lem:GT_6_36}
For $k,\ell \in \mathbb{N}_{0}$ and $0\le \alpha,\beta \le 1$, if $k+\alpha >\ell+\beta$, then the embedding (by inclusion map) $C^{k,\alpha } (\Tn) \hookrightarrow C^{\ell,\beta}(\Tn)$ is compact. i.e., a given bounded sequence in $C^{k,\alpha}$ admits a convergent subsequence in $C^{\ell,\beta}$.
\end{lemma}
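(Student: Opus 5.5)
The plan is to reduce the statement to two classical ingredients: the Arzel\`a--Ascoli theorem, which controls the derivatives of order $\le k$ and the H\"older seminorm of order $\beta$ at level $\ell$, and the interpolation inequality between H\"older seminorms. Let $(u_m)$ be bounded in $C^{k,\alpha}(\Tn)$, say $\|u_m\|_{C^{k,\alpha}}\le M$. We split into two cases: $\ell<k$, and $\ell=k$ with $\beta<\alpha$.

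\textbf{Step 1 (uniform convergence of derivatives).} For every multi-index $\gamma$ with $|\gamma|\le \ell$, the family $\{D^\gamma u_m\}$ is uniformly bounded by $M$. It is also equicontinuous. If $|\gamma|<k$, this follows from the mean value theorem, since $|D^\gamma u_m(x)-D^\gamma u_m(y)|\le \sqrt n\,M\,d_{\Tn}(x,y)$. If $|\gamma|=k$ (possible only when $\ell=k$), it follows directly from the $\alpha$-H\"older bound. Since $\Tn$ is compact, Arzel\`a--Ascoli applies. Because there are finitely many such $\gamma$, a diagonal extraction gives one subsequence along which $D^\gamma u_m\to v_\gamma$ uniformly for all $|\gamma|\le \ell$. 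Standard uniform-limit arguments (integrate along segments) show $v_\gamma=D^\gamma u$ with $u=v_0\in C^\ell$. The H\"older bound passes to the limit pointwise, so $u\in C^{k,\alpha}$ in the case $\ell=k$, and $u\in C^{\ell,1}$ with Lipschitz constant at most $\sqrt n M$ in the case $\ell<k$.

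\textbf{Step 2 (upgrade to the $\beta$-seminorm).} Set $w_m=u_m-u$ and let $A$ be a uniform bound on $[D^\gamma w_m]_{C^{0,\sigma}}$ for $|\gamma|=\ell$. Here $\sigma=\alpha$ if $\ell=k$ and $\sigma=1$ if $\ell<k$, so $\sigma>\beta$ in both cases. Fix $\varepsilon>0$ and $|\gamma|=\ell$, and split pairs $x\neq y$ by distance. If $d(x,y)<r$, the $\sigma$-bound gives the ratio bound $A r^{\sigma-\beta}$. If $d(x,y)\ge r$, the ratio is at most $2 r^{-\beta}\sup|D^\gamma w_m|$. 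First we choose $r$ so that $A r^{\sigma-\beta}<\varepsilon/2$. Then Step 1 makes the second term smaller than $\varepsilon/2$ for large $m$. Hence $[D^\gamma w_m]_{C^{0,\beta}}\to 0$, and combined with Step 1 this gives $\|u_m-u\|_{C^{\ell,\beta}}\to0$. The endpoint $\beta=0$ is already covered by Step 1. If $C^{\ell,0}$ is read as $C^\ell$ with no seminorm, nothing further is needed.

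\textbf{Main obstacle.} The only delicate point is Step 2, namely that the H\"older seminorm is not controlled by uniform convergence alone. The strict inequality $k+\alpha>\ell+\beta$ is used exactly there, through the exponent $\sigma-\beta>0$. Everything else is bookkeeping on the compact torus; periodicity causes no issues because we use $d_{\Tn}$ and short geodesic segments.
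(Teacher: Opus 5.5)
Your route is the one the paper indicates. The paper gives no proof: it cites \cite[Lemma~6.33]{GT} and remarks that the result follows from Arzel\`a--Ascoli plus H\"older interpolation, and your short/long-distance splitting in Step~2 is exactly that interpolation inequality. The argument is correct whenever $\beta<1$.

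There is, however, a gap: the claim ``$\sigma>\beta$ in both cases'' fails when $\beta=1$. This case forces $\ell<k$ and is allowed by the hypotheses, e.g.\ $k=1$, $\ell=0$, $\alpha=\tfrac12$, $\beta=1$, or any $\alpha$ when $k\ge \ell+2$. Then $\sigma=\beta=1$, so the short-distance bound $A r^{\sigma-\beta}=A$ never becomes small. Step~1 cannot compensate, because uniform convergence of derivatives of order $\le \ell$ gives no control of their Lipschitz seminorms. So you have not shown $[D^\gamma w_m]_{C^{0,1}}\to 0$ in this case. A symptom is that for $\ell<k$ your proof never uses $k+\alpha>\ell+\beta$, although the hypothesis is needed there. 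For $\ell=k-1$, $\alpha=0$, $\beta=1$, the inclusion $C^{k}\hookrightarrow C^{k-1,1}$ is not compact: for $k=1$, $u_m(x)=\sin(2\pi m x_1)/m$ is bounded in $C^1$ and tends to $0$ uniformly, yet has Lipschitz constant $2\pi$ for every $m$.

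The repair is short. When $\beta=1$, run Step~1 one order higher and extract uniform convergence of $D^{\gamma'}u_m$ for all $|\gamma'|\le \ell+1$. These families are equicontinuous:
\begin{itemize}
\item if $\ell+1<k$, by your mean value bound;
\item if $\ell+1=k$, because the hypothesis $k+\alpha>\ell+1$ forces $\alpha>0$, so the $\alpha$-H\"older bound applies.
\end{itemize}
Then, for $|\gamma|=\ell$, the mean value inequality along a minimizing segment gives $[D^\gamma w_m]_{C^{0,1}}\le \sqrt n\,\max_{|\gamma'|=\ell+1}\sup|D^{\gamma'} w_m|\to 0$. Equivalently, reduce to the already-covered pair $(\ell+1,0)$ via the continuous inclusion $C^{\ell+1}\hookrightarrow C^{\ell,1}$.
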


\subsection{Optimal transport and Wasserstein geometry}
\label{sec:ot}

We view the MFL--DA dynamics as a gradient (descent and ascent) flow on the space of probability measures equipped with the $2$-Wasserstein metric.
This Riemannian-like structure (\emph{Otto calculus}) on $\mathcal{P}_2(\Tn)$ provides the notions of tangent vectors (velocity fields), gradients/Hessians, and geodesics that we will use throughout. While the classical theory of optimal transport and Wasserstein geometry is established for general complete Riemannian manifolds \cite{AGS08,villani2008optimal}, we restrict our presentation strictly to the flat torus $\Tn$ to maintain consistency with the setting of our main results.

\begin{itemize}[leftmargin=1.2em]
    \item \textbf{2-Wasserstein metric.}
    Given two probability measures $\mu, \nu \in \mathcal{P}_2(\Tn)$, a Borel map $T: \Tn \to \Tn$ is said to be a \emph{transport map} from $\mu$ to $\nu$ if it satisfies the push-forward condition $T_\# \mu = \nu$. The $2$-Wasserstein distance between $\mu, \nu \in \mathcal{P}_2(\Tn)$ is defined by the optimal transport problem:
\[
W_2^2(\mu, \nu) = \inf_{\pi \in \Pi(\mu, \nu)} \int_{\Tn \times \Tn} d_{\Tn}(x,y)^2 \, d\pi(x,y),
\]
where $\Pi(\mu, \nu)$ is the set of transport plans with marginals $\mu$ and $\nu$.
    \item \textbf{Absolutely continuous curves.} Following \cite[Definition 1.1.1]{AGS08}, a curve $\mu_t: I \to \mathcal{P}_2(\Tn)$ defined on an interval $I$ is said to be \emph{absolutely continuous} if there exists $m \in L^1(I)$ such that
\[
W_2(\mu_s, \mu_t) \le \int_s^t m(r) \, dr \quad \forall s, t \in I, s \le t.
\]
For such curves, the metric derivative exists for a.e. $t \in I$ and is defined by
\[
|\mu'|(t) := \lim_{s \to t} \frac{W_2(\mu_s, \mu_t)}{|s-t|}.
\]

\end{itemize}

The link between curves in Wasserstein space and PDEs is provided by the continuity equation.
\begin{theorem}[Continuity equation, {\cite[Theorem~8.3.1]{AGS08} }] \label{thm:continuity_eq}
Let $I$ be an open interval and $\mu_t: I \to \mathcal{P}_2(\Tn)$ be an absolutely continuous curve. Then there exists a Borel vector field $v: (x,t) \mapsto v_t(x)$ such that $\|v_t\|_{L^2(\mu_t)} \in L^1(I)$ and the continuity equation
\[
\partial_t \mu_t + \nabla \cdot (v_t \mu_t) = 0
\]
holds in the distribution sense.

Conversely, if a continuous curve $\mu_t: I \to \mathcal{P}_2(\Tn)$ satisfies the continuity equation for some vector field $v$ with $\|v_t\|_{L^2(\mu_t)} \in L^1(I)$, then $\mu_t$ is absolutely continuous and $|\mu'|(t) \le \|v_t\|_{L^2(\mu_t)}$ for a.e. $t$.
\end{theorem}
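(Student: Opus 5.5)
We will prove the two directions separately, following the strategy of \cite[Ch.~8]{AGS08}. On the compact torus all moment and tightness issues disappear, so the only real work is the Riesz-representation step and the regularization step.

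\emph{Converse direction (continuity equation $\Rightarrow$ absolute continuity).} We first treat a smooth field. Suppose $v$ is smooth in $x$ and Lipschitz uniformly in $t$. Then the characteristic flow $\dot X_t=v_t(X_t)$ is well defined on $\Tn$. Uniqueness for the continuity equation with a Lipschitz field, proved by duality with the backward transport equation, gives $\mu_t=(X_t)_\#\mu_s$. Using the coupling $(X_s,X_t)$ and Cauchy--Schwarz, we get
\[
W_2(\mu_s,\mu_t)\le \int_s^t\|v_r\|_{L^2(\mu_r)}\,dr .
\]
For a general $v$ we regularize with a mollifier $\rho_\varepsilon$ on $\Tn$. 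We set
\[
\mu^\varepsilon_t=\mu_t*\rho_\varepsilon,\qquad v^\varepsilon_t=\frac{(v_t\mu_t)*\rho_\varepsilon}{\mu_t*\rho_\varepsilon}.
\]
The density $\mu^\varepsilon_t$ is smooth and strictly positive on the compact torus, so $v^\varepsilon$ is smooth and the pair $(\mu^\varepsilon,v^\varepsilon)$ still solves the continuity equation. The map $(m,\rho)\mapsto |m|^2/\rho$ is jointly convex, so Jensen's inequality gives $\|v^\varepsilon_t\|_{L^2(\mu^\varepsilon_t)}\le\|v_t\|_{L^2(\mu_t)}$. Applying the smooth case, we obtain the $W_2$ bound for $\mu^\varepsilon$ uniformly in $\varepsilon$. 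We then let $\varepsilon\to0$, using $W_2(\mu^\varepsilon_t,\mu_t)\to0$. This yields absolute continuity with $m(r)=\|v_r\|_{L^2(\mu_r)}$, and Lebesgue differentiation gives $|\mu'|(t)\le\|v_t\|_{L^2(\mu_t)}$ for a.e.\ $t$.

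\emph{Direct direction (absolute continuity $\Rightarrow$ existence of $v$).} After an arc-length reparametrization we may assume $|\mu'|\in L^\infty(I)\subset L^2_{loc}(I)$, and we undo the reparametrization at the end. The key estimate concerns a test function $\varphi\in C^\infty(\Tn)$. Take an optimal plan $\pi_{s,t}$ between $\mu_s$ and $\mu_t$. A Taylor expansion along minimizing segments on $\Tn$ gives
\[
\Bigl|\int\varphi\,d\mu_t-\int\varphi\,d\mu_s\Bigr|\le W_2(\mu_s,\mu_t)\Bigl(\int\!\!\int_0^1|\nabla\varphi((1-\theta)x+\theta y)|^2\,d\theta\,d\pi_{s,t}\Bigr)^{1/2}.
\]
As $s\to t$ the plans $\pi_{s,t}$ converge narrowly to the diagonal plan $(\mathrm{id},\mathrm{id})_\#\mu_t$. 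Hence, at every $t$ where the metric derivative exists,
\[
\limsup_{s\to t}\frac{1}{|s-t|}\Bigl|\int\varphi\,d\mu_t-\int\varphi\,d\mu_s\Bigr|\le|\mu'|(t)\,\|\nabla\varphi\|_{L^2(\mu_t)} .
\]
Integrating in time, for $\varphi\in C_c^\infty(I\times\Tn)$ we get
\[
\Bigl|\int_I\!\int\partial_t\varphi\,d\mu_t\,dt\Bigr|\le\int_I|\mu'|(t)\,\|\nabla\varphi_t\|_{L^2(\mu_t)}\,dt .
\]
Now consider the Hilbert space $L^2(\mu_t\,dt;\R^n)$ and let $V$ be the closure of $\{\nabla\varphi\}$ in it. The linear functional $\nabla\varphi\mapsto-\int\!\int\partial_t\varphi\,d\mu_t\,dt$ is well defined on these gradients and bounded in this norm. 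The Riesz theorem therefore gives a unique $v\in V$ representing it, which is exactly the distributional continuity equation. Measurability of $v$ in $(t,x)$ is automatic because $v$ is an element of $L^2(\mu_t\,dt)$.

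To get the sharp bound $\|v_t\|_{L^2(\mu_t)}\le|\mu'|(t)$, we localize the estimate in time by testing with $\eta(t)\varphi(x)$, where $\eta$ concentrates near a fixed time. This shows that the minimal-norm $v$ satisfies $\|v_t\|_{L^2(\mu_t)}\le|\mu'|(t)$ for a.e.\ $t$. In particular $\|v_t\|_{L^2(\mu_t)}\in L^1(I)$.

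I expect the main obstacle to be this direct direction. Specifically, the difficulty is justifying the $\limsup$ bound through narrow convergence of the optimal plans to the diagonal, which is what makes the functional bounded by $\|\nabla\varphi\|_{L^2(\mu_t)}$. The second delicate point is the localization in time that upgrades the space-time Riesz bound to the pointwise-in-$t$ inequality. The converse direction is routine once the joint convexity of $|m|^2/\rho$ is invoked.
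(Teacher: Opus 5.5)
Your proposal is correct and follows the same route as the proof of \cite[Theorem~8.3.1]{AGS08}, which the paper only cites without reproving. For the converse you use characteristics, the regularization $v^\varepsilon_t=(v_t\mu_t)*\rho_\varepsilon/(\mu_t*\rho_\varepsilon)$ and the joint convexity of $|m|^2/\rho$; for the direct part you use an arc-length reparametrization, a difference-quotient bound via optimal plans, and a Riesz/duality representation in $L^2(\mu_t\,dt)$ followed by time localization.

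A few details need tightening:
\begin{enumerate}
\item $\rho_\varepsilon$ must be strictly positive everywhere, e.g.\ a periodized Gaussian as in AGS. With a compactly supported mollifier, $\mu_t*\rho_\varepsilon$ vanishes somewhere when $\mu_t$ is, say, a Dirac mass.
\item Your smooth case must allow Lipschitz constants that are only integrable in time. The regularized field satisfies $\mathrm{Lip}(v^\varepsilon_t)\lesssim_\varepsilon\|v_t\|_{L^1(\mu_t)}$, which is in $L^1(I)$ but not bounded, so a hypothesis of Lipschitz bounds uniform in $t$ does not cover it.
\item The bound $W_2(\mu_s,\mu_t)\le\int_s^t\|v_r\|_{L^2(\mu_r)}\,dr$ follows from Minkowski's integral inequality, not from Cauchy--Schwarz.
\end{enumerate}
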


Note that we have implicitly used the fact that, on the $\mathcal{P}_2$ space over a compact domain, $W_2(\mu_n, \mu) \to 0$ if and only if $\mu_n \to \mu$ narrowly \cite[Prop 7.1.5.]{AGS08}, thus continuity in narrow topology is equivalent to continuity in the standard topology induced by $W_2$. 

\paragraph{\textbf{Tangent Spaces, transport maps, and geodesics}}
We begin by presenting the formal definition of tangent space.
\begin{definition}[Tangent space, {\cite[Definition 8.4.1]{AGS08}}]
The tangent space at a measure $\mu\in \mathcal{P}_2(\Tn)$, namely $\Tan_{\mu} \mathcal{P}_2(\Tn)$, is defined as the closure of gradient vector fields in $L^2(\mu)$:
\[
\Tan_\mu \mathcal{P}_2(\Tn) := \overline{\{ \nabla \psi : \psi \in C^\infty(\Tn) \}}^{L^2(\mu)}.
\]
\end{definition}

In finite-dimensional Riemannian geometry, tangent vectors can be identified with velocities of smooth curves passing through a point.
In $\mathcal P_2(\Tn)$, a curve is a family of measures $t\mapsto\mu_t$, and its ``velocity'' is represented by a vector field $v_t$ through the continuity equation
\[
\partial_t\mu_t+\nabla\!\cdot(\mu_t v_t)=0.
\]
Unlike the classical setting, the velocity field solving the continuity equation need not be unique.
A key fact is that there is nevertheless a unique canonical choice: among all admissible velocity fields for a given curve, there exists a unique representative of minimal $L^2(\mu_t)$-norm, and it belongs to $\Tan_{\mu_t}\mathcal P_2(\Tn)$.
This provides the link between tangent vectors, optimal transport, and constant-speed geodesics, which we review next.

\begin{proposition}[Minimal velocity field, {\cite[Proposition 8.4.5, 8.4.6]{AGS08}}] \label{prop:minimal_velocity}
Let $\mu_t: I \to \mathcal{P}_2(\Tn)$ be an absolutely continuous curve. Then for a.e. $t \in I$, there exists a unique vector field $v_t \in \Tan_{\mu_t} \mathcal{P}_2(\Tn)$ such that the continuity equation $\partial_t \mu_t + \nabla \cdot (v_t \mu_t) = 0$ holds. Moreover, this vector field satisfies
\[
\|v_t\|_{L^2(\mu_t)} = |\mu'|(t) \quad \text{for a.e. } t.
\]
For any other vector field $\tilde{v}_t$ satisfying the continuity equation, we have $\|\tilde{v}_t\|_{L^2(\mu_t)} \ge \|v_t\|_{L^2(\mu_t)}$.
\end{proposition}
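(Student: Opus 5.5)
The plan is to project an arbitrary admissible velocity field onto the tangent space, and to get the norm identity from the two halves of Theorem~\ref{thm:continuity_eq}. I take from Theorem~\ref{thm:continuity_eq}, as in \cite[Theorem~8.3.1]{AGS08}, a Borel field $w_t$ solving $\partial_t\mu_t+\nabla\cdot(w_t\mu_t)=0$ with $\|w_t\|_{L^2(\mu_t)}\in L^1(I)$. The key step, which the statement of Theorem~\ref{thm:continuity_eq} above does not record, is that this field can be constructed so that $\|w_t\|_{L^2(\mu_t)}\le|\mu'|(t)$ for a.e.\ $t$. In \cite{AGS08} this comes from defining $w$ by duality: one bounds the pairing $\int\!\!\int\partial_t\varphi\,d\mu_t\,dt$ against $\int\!\!\int|\nabla\varphi|\,|\mu'|(t)\,d\mu_t\,dt$ using test functions $\varphi\in C^\infty_c(I\times\Tn)$, and then applies Hahn--Banach and Riesz in $L^2(\mu_t\,dt)$.

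\textbf{Step 1 (projection).} For a.e.\ $t$, let $P_t$ be the orthogonal projection of $L^2(\mu_t;\R^n)$ onto the closed subspace $\Tan_{\mu_t}\mathcal P_2(\Tn)$, and set $v_t:=P_tw_t$. The weak form of the continuity equation only tests $w_t$ against gradients $\nabla\varphi$ with $\varphi\in C^\infty$:
\[
\frac{d}{dt}\int\varphi\,d\mu_t=\int\langle w_t,\nabla\varphi\rangle\,d\mu_t .
\]
The difference $w_t-v_t$ is orthogonal in $L^2(\mu_t)$ to every such $\nabla\varphi$, so $v_t$ solves the same equation. Since a projection does not increase norm, $\|v_t\|_{L^2(\mu_t)}\le\|w_t\|_{L^2(\mu_t)}\le|\mu'|(t)$. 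I also need $(t,x)\mapsto v_t(x)$ to be Borel. I would get this by approximating $P_t$ through finite-dimensional projections onto spans of $\nabla\varphi_1,\dots,\nabla\varphi_k$, with $\{\varphi_k\}$ a countable family dense in $C^\infty(\Tn)$. Each such finite-dimensional projection depends measurably on $t$ through the Gram matrix of inner products in $L^2(\mu_t)$.

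\textbf{Step 2 (norm identity, minimality, uniqueness).} The converse half of Theorem~\ref{thm:continuity_eq}, applied to $v_t$, gives $|\mu'|(t)\le\|v_t\|_{L^2(\mu_t)}$, hence equality a.e. Now let $\tilde v_t$ be any field satisfying the continuity equation, and let $u_t:=\tilde v_t-v_t$. Then $\int\langle u_t,\nabla\varphi\rangle\,d\mu_t=0$ for all $\varphi$, initially for a.e.\ $t$ after testing with product functions $\eta(t)\varphi(x)$ and using a countable dense family of $\varphi$. So $u_t\perp\Tan_{\mu_t}\mathcal P_2(\Tn)$, and Pythagoras yields $\|\tilde v_t\|^2=\|v_t\|^2+\|u_t\|^2\ge\|v_t\|^2$. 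If in addition $\tilde v_t\in\Tan_{\mu_t}\mathcal P_2(\Tn)$, then $u_t$ lies both in the tangent space and in its orthogonal complement, so $u_t=0$. This gives uniqueness.

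\textbf{Main obstacle.} The delicate step is the inequality $\|w_t\|_{L^2(\mu_t)}\le|\mu'|(t)$ for the initial field. The duality bound in \cite{AGS08} is obtained by approximating $\varphi(\mu_{t+h})-\varphi(\mu_t)$ through optimal plans between $\mu_t$ and $\mu_{t+h}$, and then passing $h\to0$ using the metric derivative. On $\Tn$ this is simplified by compactness, and by the fact that $\varphi$ is smooth and periodic, so that $d_{\Tn}$ may be used in the Lipschitz estimate. The remaining parts of the argument are Hilbert-space generalities.
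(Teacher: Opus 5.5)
Your argument is correct and is essentially the $p=2$ case of the proof in \cite[Prop.~8.4.5--8.4.6]{AGS08}; the paper gives no proof of its own and simply cites that source. Both use the same ingredients: the field of Theorem~8.3.1 satisfying $\|w_t\|_{L^2(\mu_t)}\le|\mu'|(t)$ (as you rightly note, the paper's Theorem~\ref{thm:continuity_eq} omits this bound), the identification of $(\Tan_{\mu_t}\mathcal P_2(\Tn))^\perp$ with the $\mu_t$-divergence-free fields, the converse inequality, and Pythagoras for minimality and uniqueness. The only difference is that AGS08 carries out the Hahn--Banach/Riesz representation on the $L^2(\mu_t\otimes dt)$-closure of the space--time gradients $\nabla_x\varphi(t,x)$, so their field is already tangent for a.e.\ $t$ and your separate measurable projection step is not needed.
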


\paragraph{\textbf{Brenier's theorem and geodesics.}}

\begin{definition}[Constant-speed geodesic,  {\cite[Definition 2.4.1]{AGS08}}]
A curve $(\mu_t)_{t \in [0,1]}$ in the metric space $(\mathcal{P}_2(\Tn), W_2)$ is called a constant-speed geodesic if
\[
W_2(\mu_s, \mu_t) = |s-t| \, W_2(\mu_0, \mu_1) \quad \text{for all } s, t \in [0,1].
\]
\end{definition}
In a sense, constant-speed geodesics are generalized straight lines. To show the relation between constant-speed geodesics and transport maps, we start by connecting the transport maps to $W_2$ metrics.

\begin{theorem}[Brenier--McCann on $\Tn$ {\cite[Thm.~6.2.4]{AGS08}}]\label{thm:brenier}
Let $\mu,\nu\in\mathcal P_{ac}(\Tn)$. Then there exists an optimal transport map
$T_{\mu\to\nu}:\Tn\to\Tn$, unique $\mu$-a.e., such that $(T_{\mu\to\nu})_\#\mu=\nu$ and
\[
W_2^2(\mu,\nu)=\int_{\Tn} d_{\Tn}\!\bigl(x,T_{\mu\to\nu}(x)\bigr)^2\,d\mu(x).
\]
Moreover, there exists a convex function $\Psi:\R^n\to\R$ such that
$\Psi(x)-\tfrac12|x|^2$ is $\Z^n$-periodic.
Defining the $\Z^n$-periodic function $\tilde\phi:\R^n\to\R$ by
\[
\tilde\phi(x):=\Psi(x)-\tfrac12|x|^2,
\]
and letting $\phi:\Tn\to\R$ be the induced function $\phi([x])=\tilde\phi(x)$,
we have that $\nabla\tilde\phi$ is $\Z^n$-periodic and
\[
T_{\mu\to\nu}([x]) = \bigl[x+\nabla\phi([x])\bigr]
\qquad\text{for }\mu\text{-a.e. }[x]\in\Tn,
\]
i.e.\ $T_{\mu\to\nu}(x)=x+\nabla\phi(x)$ on $\Tn$ understood modulo $\Z^n$.
The potential $\phi$ is unique up to an additive constant.
\end{theorem}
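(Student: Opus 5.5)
The plan is to prove this with Kantorovich duality for the periodic quadratic cost, in the spirit of McCann and Cordero-Erausquin, working throughout with $\Z^n$-periodic lifts to $\R^n$. Set $c(x,y)=\tfrac12 d_{\Tn}(x,y)^2=\min_{k\in\Z^n}\tfrac12|x-y-k|^2$; this is continuous on the compact space $\Tn\times\Tn$.

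\textbf{Step 1: existence and dual potentials.} An optimal plan $\pi\in\Pi(\mu,\nu)$ exists by narrow compactness of $\Pi(\mu,\nu)$ and continuity of $c$. By Kantorovich duality there is a pair of $c$-conjugate potentials $(\psi,\psi^c)$, with $\psi(x)=\inf_y\{c(x,y)-\psi^c(y)\}$, such that $\psi(x)+\psi^c(y)=c(x,y)$ on $\operatorname{supp}\pi$. Moreover $\psi(x)+\psi^c(y)\le c(x,y)$ everywhere.

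\textbf{Step 2: convexity of $\Psi$ and periodicity.} Expand $\tfrac12|x|^2-\tfrac12|x-y-k|^2=\langle x,y+k\rangle-\tfrac12|y+k|^2$ for the lift $x\in\R^n$. This gives
\[
\tfrac12|x|^2-\psi(x)=\sup_{y,k}\Bigl\{\langle x,y+k\rangle-\tfrac12|y+k|^2+\psi^c(y)\Bigr\}.
\]
The right-hand side is a supremum of affine functions, hence convex and finite. So I set $\Psi:=\tfrac12|x|^2-\psi$ and $\phi:=-\psi$, so that $\Psi-\tfrac12|x|^2=\phi$ is $\Z^n$-periodic. Since $\Psi$ is convex, $\phi$ is semiconvex and Lipschitz, and by Rademacher/Alexandrov it is differentiable Lebesgue-a.e. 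Because $\mu\ll dx$, it is therefore differentiable $\mu$-a.e.

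\textbf{Step 3: the map.} This is the main obstacle. Take $(x,y)\in\operatorname{supp}\pi$ with $\phi$ differentiable at $x$. Then $x'\mapsto c(x',y)-\psi(x')$ attains its minimum at $x'=x$. I choose a lift $y+k$ realizing $c(x,y)$; since $\tfrac12|x'-y-k|^2\ge c(x',y)$ with equality at $x$, the smooth function $x'\mapsto\tfrac12|x'-y-k|^2+\phi(x')$ also has a minimum at $x$. Differentiating gives $x-y-k+\nabla\phi(x)=0$, that is, $y=[x+\nabla\phi(x)]$. The delicate point is that the argument must not depend on uniqueness of the minimizing $k$: the first-order condition holds for every minimizing lift, so the conclusion is unambiguous. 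Consequently $\pi$ is concentrated on the graph of $T(x)=[x+\nabla\phi(x)]$, so $\pi=(\mathrm{id},T)_\#\mu$. This gives $T_\#\mu=\nu$ and $W_2^2(\mu,\nu)=\int d_{\Tn}(x,T(x))^2\,d\mu$.

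\textbf{Step 4: uniqueness.} Suppose $\pi_1,\pi_2$ are optimal plans induced by maps $T_1,T_2$. Then $\tfrac12(\pi_1+\pi_2)$ is also optimal, so by Step 3 it is supported on a graph. This forces $T_1=T_2$ $\mu$-a.e. For the potential, any two admissible potentials have gradients agreeing $\mu$-a.e. For $\phi$ to be unique up to an additive constant, I would use that the convex $\Psi$ is determined by $\nabla\Psi$ on a set of full measure on which it is differentiable, together with connectedness. Where $\mu$ is not of full support this is genuinely only "unique on the relevant set", and I would state it in the sense given in \cite{AGS08,villani2008optimal}, which covers the applications here since the measures involved have positive densities.
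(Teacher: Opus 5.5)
The paper gives no proof of this statement; it only cites \cite[Thm.~6.2.4]{AGS08}, which is a Euclidean (Hilbert-space) result, so your periodic-lift argument supplies what the citation leaves implicit. Steps 1--3 and the uniqueness of $T$ in Step 4 are the standard Kantorovich-duality proof and are correct. Your Step 3 in fact gives more than you state. The first-order condition $x-y-k+\nabla\phi(x)=0$ holds for \emph{every} minimizing lift $k$, so the minimizing lift is unique: for $\mu$-a.e.\ $x$, the point $T(x)$ has a unique closest lift to $x$.

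You need exactly this in Step 4, which has a gap. From $[x+\nabla\phi_1(x)]=[x+\nabla\phi_2(x)]=T(x)$ you only get $\nabla\phi_1(x)-\nabla\phi_2(x)\in\Z^n$. So the claim that ``any two admissible potentials have gradients agreeing $\mu$-a.e.'' is not yet justified.

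The missing step is short. Suppose $\Psi=\tfrac12|x|^2+\phi$ is convex with $\phi$ periodic. Then $\Psi(x+k)=\Psi(x)+\langle x,k\rangle+\tfrac12|k|^2$. At a differentiability point, the supporting-hyperplane inequality $\Psi(x+k)\ge\Psi(x)+\langle x+\nabla\phi(x),k\rangle$ yields $\langle\nabla\phi(x),k\rangle\le\tfrac12|k|^2$, i.e.\ $|\nabla\phi(x)|\le|\nabla\phi(x)-k|$ for all $k\in\Z^n$. So every admissible potential produces a closest lift $x+\nabla\phi(x)$ of $T(x)$. This is also what gives $d_{\Tn}(x,T(x))=|\nabla\phi(x)|$, which the paper uses when it identifies $\int|\nabla\tilde\phi_t|^2\,d\mu_t$ with $W_2^2(\mu_t,\mu^*)$. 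Uniqueness of the closest lift then gives $\nabla\phi_1=\nabla\phi_2$ $\mu$-a.e.

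Your caveat about the last sentence of the statement is correct and substantive. Concluding that $\phi_1-\phi_2$ is constant requires $\mu$ to have a Lebesgue-a.e.\ positive density; then $\phi_1-\phi_2$ is Lipschitz with a.e.\ vanishing gradient on the connected torus. Without that hypothesis the claim is false. Take $\mu=\nu$ uniform on a small ball $B$. Then $T=\mathrm{id}$ is induced both by $\phi\equiv0$ and by $\phi=-\varepsilon\eta$, where $\eta$ is smooth, periodic, nonconstant and vanishes near $\bar B$, and $\varepsilon\|\nabla^2\eta\|_\infty\le1$ keeps $\Psi$ convex. This does not affect the paper's applications, where the reference measures ($\mu^*$, or $\mu_t$ in a $W^{1,p}$-neighbourhood of it) have densities bounded below.
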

As we mentioned earlier, in the paper, we omit the notation $[ \cdot ]$ if there is no confusion. We note that the theorem's statement is simplified using two facts: the sectional curvature of $\Tn$ is zero everywhere and the exponential map is simply $  \exp_x(v) = x + v \pmod{\mathbb{Z}^n}$ on $\Tn$.

Finally we relate constant-speed geodesics and transport maps.
\begin{proposition}[Geodesics and velocity fields, {\cite[Theorem~7.2.2]{AGS08}}] \label{prop:geodesics}
Let $\mu, \nu \in \mathcal{P}_{ac}(\Tn)$. The unique constant speed geodesic $(\mu_t)_{t \in [0,1]}$ connecting $\mu$ to $\nu$ is given by the \emph{displacement interpolation}:
\[
\mu_t = (T_t)_\# \mu, \quad \text{where } T_t = (1-t)\mathrm{id} + t T_{\mu \to \nu}.
\]
The minimal velocity field $v_t$ for this curve is constant along the transport trajectories, satisfying the relation $v_t \circ T_t = v_0$ for all $t \in [0,1]$. Specifically, the initial velocity is given by the displacement potential:
\[
v_0(x) = T_{\mu \to \nu}(x) - x = \nabla \phi(x).
\]
Consequently, for any $t \in (0,1)$, the velocity field is given by $v_t(y) = \nabla \phi(T_t^{-1}(y))$ for $\mu_t$-a.e. $y$.
\end{proposition}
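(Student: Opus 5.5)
The plan is to build everything on Theorem~\ref{thm:brenier}. Write $T_{\mu\to\nu}=\mathrm{id}+\nabla\phi=\nabla\Psi$ (mod $\Z^n$) with $\Psi$ convex and $\Psi-\tfrac12|x|^2$ periodic. Set $\Psi_t:=(1-t)\tfrac12|x|^2+t\Psi$, so that $T_t=\nabla\Psi_t=\mathrm{id}+t\nabla\phi$.

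\textbf{Step 1: the displacement interpolation is a constant-speed geodesic.} I will use the standard fact (McCann) that for an optimal map on the torus the segment $s\mapsto x+s\nabla\phi(x)$, $s\in[0,1]$, is a minimizing geodesic for $\mu$-a.e.\ $x$. This gives $d_{\Tn}(x,T_{\mu\to\nu}x)=|\nabla\phi(x)|$, and hence $\int|\nabla\phi|^2d\mu=W_2^2(\mu,\nu)$. Coupling $\mu_s$ and $\mu_t$ through $(T_s,T_t)_\#\mu$ yields
\[
W_2(\mu_s,\mu_t)\le |t-s|\,\|\nabla\phi\|_{L^2(\mu)}=|t-s|\,W_2(\mu,\nu).
\]
Applying the triangle inequality along $0\le s\le t\le1$ turns all these inequalities into equalities.

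\textbf{Step 2: uniqueness.} Let $(\tilde\mu_t)$ be any constant-speed geodesic from $\mu$ to $\nu$. Glue an optimal plan from $\mu$ to $\tilde\mu_t$ with one from $\tilde\mu_t$ to $\nu$. The resulting plan from $\mu$ to $\nu$ has cost at most $(tW_2+(1-t)W_2)^2=W_2^2$, so it is optimal. Moreover, equality in the triangle and Cauchy--Schwarz inequalities forces the intermediate point to lie at fraction $t$ along a minimizing segment from $x$ to $y$, with $d(x,z)=t\,d(x,y)$. By Brenier uniqueness the optimal plan is $(\mathrm{id},T_{\mu\to\nu})_\#\mu$. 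Since $\Psi$ is differentiable $\mu$-a.e.\ (and $\mu\ll dx$), for $\mu$-a.e.\ $x$ the minimizing segment from $x$ to $T_{\mu\to\nu}x$ is unique. Hence $z=T_t(x)$ and $\tilde\mu_t=(T_t)_\#\mu$.

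I expect the main obstacle to be this torus-specific cut-locus issue, namely the a.e.\ uniqueness of minimizing segments and the identity $d=|\nabla\phi|$. My first approach is to work on $\R^n$ with periodic lifts, where $\nabla\Psi$ is the Brenier map between the periodic lifts and the segments are straight lines.

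\textbf{Step 3: the velocity field.} For $t<1$, $\nabla\Psi_t$ is $(1-t)$-strongly monotone, hence injective, with inverse $\nabla\Psi_t^*$. For a test function $\zeta$,
\[
\frac{d}{dt}\int\zeta\,d\mu_t=\int\nabla\zeta(T_tx)\cdot\nabla\phi(x)\,d\mu=\int\nabla\zeta\cdot\bigl(\nabla\phi\circ T_t^{-1}\bigr)\,d\mu_t .
\]
So $v_t:=\nabla\phi\circ T_t^{-1}$ solves the continuity equation, and $v_t\circ T_t=v_0=\nabla\phi$. Since $T_t=\mathrm{id}+t\nabla\phi$, we have $\nabla\phi\circ T_t^{-1}=(\mathrm{id}-T_t^{-1})/t$. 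Therefore
\[
v_t=\frac{y-\nabla\Psi_t^*(y)}{t}=\nabla\Bigl(\tfrac{|y|^2/2-\Psi_t^*}{t}\Bigr),
\]
which is a periodic gradient. Its $L^2(\mu_t)$-norm equals $\|\nabla\phi\|_{L^2(\mu)}=W_2(\mu,\nu)=|\mu'|(t)$. By Proposition~\ref{prop:minimal_velocity}, $v_t$ is therefore the minimal velocity field, provided it lies in $\Tan_{\mu_t}$. This membership follows from the gradient form above after approximating by smooth gradients in $L^2(\mu_t)$. At $t=0$ we get $v_0=T_{\mu\to\nu}-\mathrm{id}=\nabla\phi$.
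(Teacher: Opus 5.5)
Your proof is correct. The paper gives no argument of its own here: it only cites \cite[Theorem~7.2.2]{AGS08}, which is formulated in a Hilbert space, where the equality case of the glued triangle inequality forces $z=(1-t)x+ty$ by strict convexity of the norm. The paper does not spell out either the transfer to $\Tn$ or the velocity-field assertions. Your Step~2 is the same gluing/equality-case mechanism, but you correctly isolate the two torus-specific inputs it needs: $d_{\Tn}(x,T_{\mu\to\nu}x)=|\nabla\phi(x)|$, and $\mu$-a.e.\ uniqueness of the minimizing segment, which is exactly where $\mu\ll dx$ enters. Your Step~3 builds the tangent field explicitly by Legendre duality rather than identifying it abstractly. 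This gives more than the statement asks: for $t\in(0,1)$, $v_t=\nabla\bigl(t^{-1}(\tfrac12|y|^2-\Psi_t^*)\bigr)$ is a Lipschitz $\Z^n$-periodic gradient, so membership in $\Tan_{\mu_t}$ follows by mollification with uniform convergence. The obstacle you flag also follows directly from the structure in Theorem~\ref{thm:brenier}. The function $\Psi^*(\tilde y)-\tfrac12|\tilde y|^2$ is $\Z^n$-periodic, and $\tfrac12|x|^2-\Psi(x)=\inf_{\tilde y}\bigl\{\tfrac12|x-\tilde y|^2+\Psi^*(\tilde y)-\tfrac12|\tilde y|^2\bigr\}$, with the infimum attained exactly on $\partial\Psi(x)$. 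Comparing $\nabla\Psi(x)$ with its translates $\nabla\Psi(x)+k$ then gives $|\nabla\Psi(x)-x|=d_{\Tn}(x,T_{\mu\to\nu}x)$. Any second closest translate would be a second element of $\partial\Psi(x)$, which is impossible wherever $\Psi$ is differentiable. The only loose end is $t=1$ in Step~3; this is a null time and can be handled symmetrically because $\nu\ll dx$.
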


\paragraph{\textbf{First variation of Wasserstein distance.}}
The following derivative formula will be required to establish the Evolution Variational Inequalities (EVIs) in our analysis.
\begin{theorem}[Subdifferential of $W_2^2$, {\cite[Theorem~8.4.7]{AGS08}}] \label{thm:first_variation_W2}
Let $\mu_t$ be an absolutely continuous curve in $\mathcal{P}_2(\Tn)$ with tangent vector field $v_t \in \Tan_{\mu_t} \mathcal{P}_2(\Tn)$. Let $\nu \in \mathcal{P}_2(\Tn)$. Then the function $t \mapsto \frac{1}{2} W_2^2(\mu_t, \nu)$ is absolutely continuous and for a.e. $t$:
\[
\frac{d}{dt} \frac{1}{2} W_2^2(\mu_t, \nu) = \int_{\Tn} \langle v_t(x), x - T_{\mu_t \to \nu}(x) \rangle \, d\mu_t(x) = \langle v_t, \mathrm{id} - T_{\mu_t \to \nu} \rangle_{L^2(\mu_t)}.
\]
\end{theorem}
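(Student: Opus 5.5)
The plan is to prove the statement in two parts: absolute continuity of $t\mapsto \frac12 W_2^2(\mu_t,\nu)$ from the triangle inequality, then the derivative formula by matching one-sided bounds at a.e.\ $t$.

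\textbf{Absolute continuity.} By the triangle inequality for $W_2$ and boundedness of $\Tn$ (diameter $D$),
\[
\bigl|W_2^2(\mu_t,\nu)-W_2^2(\mu_s,\nu)\bigr|\le \bigl(W_2(\mu_t,\nu)+W_2(\mu_s,\nu)\bigr)W_2(\mu_s,\mu_t)\le 2D\int_s^t |\mu'|(r)\,dr .
\]
Since $|\mu'|\in L^1$, the function $t\mapsto\frac12 W_2^2(\mu_t,\nu)$ is absolutely continuous, hence differentiable a.e. I then fix a time $t$ at which three things hold simultaneously: this function is differentiable, $v_t\in\Tan_{\mu_t}\mathcal P_2(\Tn)$ is the minimal velocity of Proposition~\ref{prop:minimal_velocity}, and the first-order approximation
\[
W_2\bigl(\mu_{t+h},(\mathrm{id}+hv_t)_\#\mu_t\bigr)=o(h)\qquad(h\to0)
\]
is valid. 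This approximation is the companion statement to Proposition~\ref{prop:minimal_velocity} in \cite[Prop.~8.4.6]{AGS08}, and it holds for a.e.\ $t$. I interpret $T_{\mu_t\to\nu}$ via Theorem~\ref{thm:brenier}, writing $T(x)=x+\nabla\phi(x)$ with a lift satisfying $|\nabla\phi(x)|=d_{\Tn}(x,T(x))$. If $\mu_t$ is not absolutely continuous, I would replace $T$ by the barycentric projection of an optimal plan; the argument is unchanged.

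\textbf{One-sided bound.} I build a competitor plan between $\mu_{t+h}$ and $\nu$ by pushing $\mu_t$ through $x\mapsto(x+hv_t(x),\,T(x))$. The key observation is that the toroidal distance is only ever needed from above, and $d_{\Tn}(a,b)\le|\tilde a-\tilde b|$ for any lifts $\tilde a,\tilde b$. Hence the non-smoothness of $d_{\Tn}^2$ across the cut locus causes no trouble, and
\[
W_2^2\bigl((\mathrm{id}+hv_t)_\#\mu_t,\nu\bigr)\le\int |hv_t-\nabla\phi|^2\,d\mu_t = W_2^2(\mu_t,\nu)-2h\langle v_t,\nabla\phi\rangle_{L^2(\mu_t)}+h^2\|v_t\|^2_{L^2(\mu_t)} .
\]
Combining this with the $o(h)$ estimate and the triangle inequality gives
\[
\tfrac12 W_2^2(\mu_{t+h},\nu)-\tfrac12 W_2^2(\mu_t,\nu)\le h\,\langle v_t,\mathrm{id}-T\rangle_{L^2(\mu_t)}+o(h),
\]
valid for both signs of $h$.

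\textbf{Conclusion.} Divide by $h>0$ and let $h\downarrow0$; this bounds the derivative from above by $\langle v_t,\mathrm{id}-T\rangle_{L^2(\mu_t)}$. Divide by $h<0$ instead, which flips the inequality, and let $h\uparrow0$; this bounds the derivative from below by the same quantity. Since $t$ was chosen as a differentiability point, the two bounds coincide and give the claimed formula. Note that the right-hand side does not depend on the choice of optimal map, because the derivative itself is unique.

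\textbf{Main obstacle.} The delicate step is the $o(h)$ approximation. Here $v_t$ is merely in $L^2(\mu_t)$, not Lipschitz, so $(\mathrm{id}+hv_t)_\#\mu_t$ cannot be compared to $\mu_{t+h}$ by a flow argument. I would invoke \cite[Prop.~8.4.6]{AGS08} directly. If a self-contained proof were required, I would first approximate $v_t$ in $L^2(\mu_t)$ by smooth gradients, use the Lipschitz flow of the smooth field, and control the error through the minimality of $v_t$ and a Lebesgue-point argument in time.
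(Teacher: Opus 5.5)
Your argument is correct and is essentially the proof of \cite[Theorem~8.4.7]{AGS08}, which the paper cites without reproducing: push $\mu_t$ through the competitor plan $x\mapsto(x+hv_t(x),\,T_{\mu_t\to\nu}(x))$, use the first-order approximation $W_2(\mu_{t+h},(\mathrm{id}+hv_t)_\#\mu_t)=o(h)$ from \cite[Prop.~8.4.6]{AGS08}, and compare the two one-sided difference quotients at a point of differentiability. Your observation $d_{\Tn}(a,b)\le|\tilde a-\tilde b|$ for lifts is the right way to carry the upper bound over to the torus; like the paper, you still take for granted that the $o(h)$ approximation holds on $\Tn$, although \cite{AGS08} proves it in $\mathbb{R}^d$.
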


\subsection{Wellposedness of MFL-DA }\label{sec:welposedness}

In this subsection, we introduce the well-posedness of the coupled mean-field PDE \eqref{eq-main}.
A probabilistic approach goes back at least to \cite{funaki1984certain}, who treated nonlinear
Fokker-Planck/McKean-Vlasov type equations via the martingale problem.
In \cite{domingoenrich2020}, existence and uniqueness are established for a coupled McKean-Vlasov SDE,
and its time-marginals yield a unique measure-valued solution
$(\mu_t,\nu_t)\in C([0,T];\mathcal P_2(\mathcal{X}))\times C([0,T];\mathcal P_2(\mathcal{Y}))$ of \eqref{eq-main}.
From a variational viewpoint, \cite{isobe2025} constructs gradient descent--ascent flows through a
minimizing-maximizing movement scheme and proves well-posedness in the EVI sense; whenever
\eqref{eq-main} admits a Wasserstein GDA/EVI formulation, this provides another route to
measure-valued well-posedness.
Under stronger smoothness assumptions on the interaction kernel $f$, one may also work at the
level of densities and obtain $C([0,T];L^p)$ (mild) solutions via a Duhamel formulation
(cf. \cite{CJS25}).

\medskip
The proof strategy is adapted from the path-space fixed point argument of \cite{domingoenrich2020},
specialized to the present torus setting and to our sign/notational conventions.
For the reader's convenience, we first collect several preliminary definitions and results that will be used throughout the proof.

\textit{Setting and distributional formulation.}
Let $M:=\mathbb T^n$ be the flat torus with geodesic distance $d$ and Lebesgue measure $dx$. 
Since $f\in C^2(M\times M)$ and $M\times M$ is compact, $\nabla_1 f$ and $\nabla_2 f$ are globally Lipschitz:
if we set $L:=\|\nabla^2 f\|_{L^\infty(M\times M)}$, then for all $(x,y),(x',y')\in M\times M$,
\begin{equation}\label{eq:ass_Lip2}
\|\nabla_1 f(x,y)-\nabla_1 f(x',y')\|
+\|\nabla_2 f(x,y)-\nabla_2 f(x',y')\|
\le L\big(d(x,x')+d(y,y')\big).
\end{equation}

A narrowly continuous curve $(\mu_t,\nu_t)_{t\in[0,T]}\subset\mathcal P(M)\times\mathcal P(M)$
is a distributional solution of \eqref{eq-main} if for every $\varphi\in C^2(M)$ and all $t\in[0,T]$,
\begin{align}
\int_M \varphi\,d\mu_t - \int_M \varphi\,d\mu_0
&=\int_0^t \int_M \Delta \varphi\,d\mu_s\,ds
-\int_0^t \int_M \nabla\varphi(x)\cdot \nabla_1\Big(\int_M f(x,y)\,d\nu_s(y)\Big)\,d\mu_s(x)\,ds,
\label{eq:weak_mu2}\\
\int_M \varphi\,d\nu_t - \int_M \varphi\,d\nu_0
&=\int_0^t \int_M \Delta \varphi\,d\nu_s\,ds
+\int_0^t \int_M \nabla\varphi(y)\cdot \nabla_2\Big(\int_M f(x,y)\,d\mu_s(x)\Big)\,d\nu_s(y)\,ds.
\label{eq:weak_nu2}
\end{align}

\begin{theorem}[Well-posedness and instantaneous smoothing]\label{thm:main_appendix}
Let $M:=\mathbb T^n$ and let $\mu_0,\nu_0\in\mathcal P(M)$ be arbitrary probability measures.
Assume $f\in C^2(M\times M)$.
Then for every $T>0$ the coupled system \eqref{eq-main}
admits a unique narrowly continuous distributional solution $(\mu_t,\nu_t)_{t\in[0,T]}$
in the sense of \eqref{eq:weak_mu2}--\eqref{eq:weak_nu2}.
Moreover, for every $\tau\in(0,T)$ there exist densities $u,v$ on $(\tau,T]\times M$ such that
$\mu_t=u(t,\cdot)\,dx$ and $\nu_t=v(t,\cdot)\,dx$ for all $t\in(\tau,T]$, and for every $p\in(1,\infty)$,
\begin{equation}\label{eq:max_reg_statement}
u,v\in L^p((\tau,T);W^{2,p}(M))\cap W^{1,p}((\tau,T);L^p(M)).
\end{equation}
\end{theorem}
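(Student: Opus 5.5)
We follow the path-space fixed point strategy of \cite{domingoenrich2020}, adapted to the torus, and then obtain smoothing from linear parabolic theory. The statement has two parts, and we treat them in turn: first existence and uniqueness of a narrowly continuous distributional solution, then the instantaneous regularity \eqref{eq:max_reg_statement}.

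\textbf{Existence and uniqueness.} The plan is to work on the space $\mathcal C:=\mathcal P(C([0,T];M))^2$ of laws of continuous paths. We equip it with the truncated Wasserstein-type distance
\[
D_t(P,Q)^2=\inf \mathbb E\bigl[\sup_{s\le t} d(X_s,X'_s)^2\bigr],
\]
summed over the two components. Given a pair $(P,Q)\in\mathcal C$ with time marginals $(\mu_s,\nu_s)$, we define $\Phi(P,Q)$ as the pair of laws of the solutions to
\[
dX_t=-\nabla_1 V_{\nu_t}(X_t)\,dt+\sqrt2\,dB_t,\qquad dY_t=\nabla_2 U_{\mu_t}(Y_t)\,dt+\sqrt2\,dW_t.
\]
Here $X_0\sim\mu_0$, $Y_0\sim\nu_0$, $V_\nu(x)=\int f(x,y)\,d\nu(y)$ and $U_\mu(y)=\int f(x,y)\,d\mu(x)$. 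These drifts are Lipschitz in the space variable by \eqref{eq:ass_Lip2}, so the SDEs are strongly well posed on $M$; we lift them to $\mathbb R^n$ periodically. We couple two such solutions synchronously, with the same Brownian motions and the same initial points. The Lipschitz bound \eqref{eq:ass_Lip2} gives
\[
|\nabla_1 V_{\nu}(x)-\nabla_1V_{\nu'}(x)|\le L\,W_1(\nu,\nu')\le L\,W_2(\nu,\nu').
\]
Combined with Gronwall, this yields $D_t(\Phi(P),\Phi(P'))^2\le C_T\int_0^t D_s(P,P')^2\,ds$. Iterating as in Sznitman's argument gives a unique fixed point. Its time marginals solve \eqref{eq:weak_mu2}--\eqref{eq:weak_nu2}, which follows from It\^o's formula applied to $\varphi\in C^2$, and they are narrowly continuous.

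For uniqueness among all distributional solutions, take any solution $(\mu_t,\nu_t)$ and freeze the drifts it induces. These drifts are then time-dependent and Lipschitz in space, so the resulting linear Fokker--Planck equations have unique narrowly continuous solutions, for instance by the superposition principle or by duality with the backward Kolmogorov equation with smooth data. Hence $(\mu_t,\nu_t)$ coincides with the marginals of the associated SDE law, which is a fixed point of $\Phi$, and is therefore unique. We expect this step, ensuring that every distributional solution comes from the nonlinear SDE, to be the main technical obstacle. We would handle it through uniqueness for linear Fokker--Planck equations with bounded Lipschitz coefficients, which is standard on a compact manifold.

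\textbf{Regularity.} Once the solution is fixed, $\mu_t$ solves the linear equation
\[
\partial_t u=\Delta u+\nabla\cdot(b\,u),\qquad b(t,x)=\nabla_1V_{\nu_t}(x).
\]
Here $b$ is bounded, $C^1$ in $x$ with bounded derivative uniformly in $t$, and continuous in $t$ because $\nu_t$ is narrowly continuous. We write the equation in non-divergence form as
\[
\partial_t u=\Delta u+b\cdot\nabla u+(\nabla\cdot b)\,u,
\]
with bounded coefficients, and start from data that is only a measure.

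The first step is to show that $u(t)$ has a density for $t>0$ with $u\in L^\infty((\tau/2,T);L^p)$. We would do this via the Duhamel formula with the heat kernel,
\[
u(t)=e^{t\Delta}\mu_0+\int_0^t\nabla e^{(t-s)\Delta}\cdot(b\,u_s)\,ds,
\]
together with the bounds $\|e^{t\Delta}\mu\|_{L^p}\lesssim t^{-\frac n2(1-1/p)}$ and $\|\nabla e^{t\Delta}\|_{L^q\to L^p}\lesssim t^{-\frac12-\frac n2(1/q-1/p)}$. A finite bootstrap in $p$ then raises integrability step by step. We next localize in time with a cutoff $\chi(t)$ vanishing near $\tau/2$. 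Maximal $L^p$-regularity for the heat operator on $M$ gives $\chi u\in L^p(W^{2,p})\cap W^{1,p}(L^p)$: the lower-order terms are absorbed by first obtaining $W^{1,p}$ bounds from the divergence form, using the same semigroup estimates, and then applying the maximal regularity estimate. The same argument applies to $\nu_t$, which proves \eqref{eq:max_reg_statement}.
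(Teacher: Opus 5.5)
Your proposal is correct and follows essentially the same route as the paper: a Sznitman-type Picard iteration on path space with synchronous coupling, then uniqueness among all distributional solutions by freezing the induced drifts and using uniqueness for the linear Fokker--Planck equations (the paper does this via the superposition principle and well-posedness in law), then regularity from the frozen linear equation via $L^p$ smoothing followed by parabolic maximal $L^p$ regularity. The one step to state more carefully is the $W^{1,p}$ bound: pointwise-in-time bounds on $\nabla e^{t\Delta}$ do not give it, because the extra derivative produces a non-integrable factor $(t-s)^{-1}$ in the Duhamel integral, so you should invoke maximal regularity for divergence-form sources (the $k=-1$ case used in the paper's smoothing lemma) or insert an intermediate fractional-Sobolev step.
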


\begin{proof}
We divide the proof into several steps.

\textit{Step 1: (fixed point setting on path-space)}
Let $\Omega:=C([0,T],M)$ and define, for $\rho_1,\rho_2\in\mathcal P(\Omega)$ and $t\in[0,T]$,
\begin{equation}\label{eq:W2t2}
W_{2,t}^2(\rho_1,\rho_2)
:=\inf_{\pi\in\Pi(\rho_1,\rho_2)}
\int_{\Omega\times\Omega}\Big(\sup_{0\le s\le t} d^2(X_1(s),X_2(s))\Big)\,d\pi(X_1,X_2).
\end{equation}

For $\rho\in\mathcal P(\Omega)$ and $t\in[0,T]$, we denote by $\rho_t\in\mathcal P(M)$
the one-time marginal at time $t$, i.e.\ the law of $\omega(t)$ when $\omega\sim\rho$. Equivalently,
\[
\rho_t:=(\mathrm{ev}_t)_\#\rho\in\mathcal P(M).
\]
where $\mathrm{ev}_t:\Omega\to M$ is the evaluation map defined by
$\mathrm{ev}_t(\omega):=\omega(t)$.


Given $\eta\in\mathcal P(\Omega)$ and $\rho\in\mathcal P(\Omega)$, define
\[
b^\eta(t,x):=-\nabla_1\int_M f(x,y)\,d\eta_t(y),
\qquad
c^\rho(t,y):=\nabla_2\int_M f(x,y)\,d\rho_t(x).
\]
By \eqref{eq:ass_Lip2}, for each fixed $\eta$ the map $x\mapsto b^\eta(t,x)$ is globally Lipschitz uniformly in $t$
(and similarly for $c^\rho$). Consider the SDEs on $M$:
\begin{equation}\label{eq:SDE_frozen2}
dX_t=b^\eta(t,X_t)\,dt+\sqrt{2}\,dW_t,\qquad X_0\sim\mu_0,
\end{equation}
\begin{equation}\label{eq:SDE_frozenY2}
dY_t=c^\rho(t,Y_t)\,dt+\sqrt{2}\,d\bar W_t,\qquad Y_0\sim\nu_0.
\end{equation}
(For concreteness: one may lift to $\mathbb R^n$ with periodic drifts and then project modulo $\mathbb Z^n$.)
Each equation has a unique strong solution, hence defines path laws
\[
\Phi_1(\eta):=\Law(X_\cdot)\in\mathcal P(\Omega),\qquad
\Phi_2(\rho):=\Law(Y_\cdot)\in\mathcal P(\Omega),
\]
and the coupled map
\[
\Phi(\rho,\eta):=\big(\Phi_1(\eta),\Phi_2(\rho)\big)\qquad\text{on }\mathcal P(\Omega)\times\mathcal P(\Omega).
\]

\medskip
\textit{Step 2: (Picard fixed point)}
We show that
the map $\Phi$ admits a unique fixed point $(\rho,\eta)\in\mathcal P(\Omega)\times\mathcal P(\Omega)$;
First, we prove the following lemma for stability estimate:
\begin{lemma}\label{lem:stability_volterra}
There exists a constant $C=C(L,T)$ such that for all $(\rho,\eta),(\tilde\rho,\tilde\eta)\in\mathcal P(\Omega)\times\mathcal P(\Omega)$
and all $t\in[0,T]$,
\begin{equation}\label{eq:stability2}
W_{2,t}^2\big(\Phi_1(\eta),\Phi_1(\tilde\eta)\big)
+W_{2,t}^2\big(\Phi_2(\rho),\Phi_2(\tilde\rho)\big)
\le C\int_0^t\Big(W_{2,s}^2(\eta,\tilde\eta)+W_{2,s}^2(\rho,\tilde\rho)\Big)\,ds.
\end{equation}
\end{lemma}
\begin{proof}
We show the bound for $\Phi_1$; the argument for $\Phi_2$ is identical.
Fix $\eta,\tilde\eta\in\mathcal P(\Omega)$ and let $\pi:\mathbb R^n \to M=\mathbb R^n/\mathbb Z^n$ be the projection.
Fix periodic extensions $\tilde b^\eta,\tilde b^{\tilde\eta}:\,[0,T]\times\mathbb R^n\to\mathbb R^n$ of the drifts $b^\eta,b^{\tilde\eta}$.
Let $\widetilde X^\eta,\widetilde X^{\tilde\eta}$ solve on $\mathbb R^n$
\[
d\widetilde X_t^\eta=\tilde b^\eta(t,\widetilde X_t^\eta)\,dt+\sqrt2\,dW_t,\qquad
d\widetilde X_t^{\tilde\eta}=\tilde b^{\tilde\eta}(t,\widetilde X_t^{\tilde\eta})\,dt+\sqrt2\,dW_t,
\]
driven by the same Brownian motion, with the same initial random variable
\[
\widetilde X_0^\eta=\widetilde X_0^{\tilde\eta}\quad\text{a.s.},\qquad \Law(\pi(\widetilde X_0^\eta))=\mu_0.
\]
Then $X^\eta:=\pi(\widetilde X^\eta)$ and $X^{\tilde\eta}:=\pi(\widetilde X^{\tilde\eta})$ solve \eqref{eq:SDE_frozen2} on $M$ and
$\Law(X^\eta_\cdot)=\Phi_1(\eta)$, $\Law(X^{\tilde\eta}_\cdot)=\Phi_1(\tilde\eta)$.

Set $Z_t:=\widetilde X_t^\eta-\widetilde X_t^{\tilde\eta}\in\mathbb R^n$. Subtracting the SDEs gives
\[
dZ_t=\big(\tilde b^\eta(t,\widetilde X_t^\eta)-\tilde b^{\tilde\eta}(t,\widetilde X_t^{\tilde\eta})\big)\,dt.
\]
Hence
\[
\frac{d}{dt}|Z_t|^2
=2 Z_t\cdot\big(\tilde b^\eta(t,\widetilde X_t^\eta)-\tilde b^{\tilde\eta}(t,\widetilde X_t^{\tilde\eta})\big).
\]
Split
\[
\big|\tilde b^\eta(t,\widetilde X_t^\eta)-\tilde b^{\tilde\eta}(t,\widetilde X_t^{\tilde\eta})\big|
\le \big|\tilde b^\eta(t,\widetilde X_t^\eta)-\tilde b^{\tilde\eta}(t,\widetilde X_t^\eta)\big|
+\big|\tilde b^{\tilde\eta}(t,\widetilde X_t^\eta)-\tilde b^{\tilde\eta}(t,\widetilde X_t^{\tilde\eta})\big|.
\]
By \eqref{eq:ass_Lip2} the second term is bounded by $L|Z_t|$.

For the first term, for each $t$ pick any coupling $\pi_t\in\Pi(\eta_t,\tilde\eta_t)$ and write, for fixed $x\in M$,
\[
|b^\eta(t,x)-b^{\tilde\eta}(t,x)|
=\Big|\iint\big(\nabla_1 f(x,y)-\nabla_1 f(x,y')\big)\,d\pi_t(y,y')\Big|
\le L\int d(y,y')\,d\pi_t(y,y').
\]
Taking the infimum over $\pi_t$ yields
\[
|b^\eta(t,x)-b^{\tilde\eta}(t,x)|\le L\,W_1(\eta_t,\tilde\eta_t)
\ \le L\,W_2(\eta_t,\tilde\eta_t).
\]
Moreover, since $\mathrm{ev}_t:(\Omega,\sup_{0\le s\le t}d)\to(M,d)$ is $1$-Lipschitz, we have
$W_2(\eta_t,\tilde\eta_t)\le W_{2,t}(\eta,\tilde\eta)$.

Combining these bounds and using Young's inequality, we obtain for a.e.\ $t$,
\[
\frac{d}{dt}|Z_t|^2
\le 2|Z_t|\big(LW_{2,t}(\eta,\tilde\eta)+L|Z_t|\big)
\le 3L|Z_t|^2+L\,W_{2,t}^2(\eta,\tilde\eta).
\]
Gronwall's inequality gives
\[
\sup_{0\le s\le t}|Z_s|^2
\le e^{3Lt}\int_0^t L\,W_{2,s}^2(\eta,\tilde\eta)\,ds.
\]
Projecting to $M$ (and using $d(\pi(a),\pi(b))\le |a-b|$) yields
\[
\sup_{0\le s\le t} d\big(X_s^\eta,X_s^{\tilde\eta}\big)^2
\le \sup_{0\le s\le t}|Z_s|^2
\le e^{3Lt}\int_0^t L\,W_{2,s}^2(\eta,\tilde\eta)\,ds.
\]
Taking expectations and then the infimum over couplings of the initial data gives
\[
W_{2,t}^2\big(\Phi_1(\eta),\Phi_1(\tilde\eta)\big)
\le C(L,T)\int_0^t W_{2,s}^2(\eta,\tilde\eta)\,ds.
\]
The same argument for $\Phi_2$ and summation yield \eqref{eq:stability2}.
\end{proof}

Pick any $(\rho^{(0)},\eta^{(0)})$ and define iterates
$(\rho^{(k+1)},\eta^{(k+1)}):=\Phi(\rho^{(k)},\eta^{(k)})$.
Set
\[
d_k(t):=W_{2,t}^2(\rho^{(k+1)},\rho^{(k)})+W_{2,t}^2(\eta^{(k+1)},\eta^{(k)}),\qquad t\in[0,T].
\]
Applying Lemma~\ref{lem:stability_volterra} with
$(\rho,\eta)=(\rho^{(k)},\eta^{(k)})$ and $(\tilde\rho,\tilde\eta)=(\rho^{(k-1)},\eta^{(k-1)})$ yields
\[
d_k(t)\le C\int_0^t d_{k-1}(s)\,ds.
\]
Iterating this inequality gives the factorial bound
\[
d_k(t)\le \frac{(Ct)^k}{k!}\,d_0(t),\qquad t\in[0,T],
\]
hence $\sum_{k\ge0} d_k(T)<\infty$. Therefore $(\rho^{(k)},\eta^{(k)})$ is Cauchy for $W_{2,T}$ and converges to a limit
$(\rho,\eta)$, which is easily checked to satisfy $\Phi(\rho,\eta)=(\rho,\eta)$.

For uniqueness, let $(\rho,\eta)$ and $(\tilde\rho,\tilde\eta)$ be two fixed points and set
$D(t):=W_{2,t}^2(\rho,\tilde\rho)+W_{2,t}^2(\eta,\tilde\eta)$.
Then Lemma~\ref{lem:stability_volterra} gives $D(t)\le C\int_0^t D(s)\,ds$, so $D(t)\equiv 0$ by Gr\"onwall.

\medskip
\textit{Step 3: (\eqref{eq-main} is uniquely solved by  $(\mu_t,\nu_t)$)}
Let $(\rho,\eta)$ be the unique fixed point from \textit{Step 2}.
Let $X$ and $Y$ be the corresponding solutions of \eqref{eq:SDE_frozen2}--\eqref{eq:SDE_frozenY2} with
$\Law(X_\cdot)=\rho$ and $\Law(Y_\cdot)=\eta$, and define the time marginals
\[
\mu_t:=\Law(X_t)=(\mathrm{ev}_t)_\#\rho,\qquad \nu_t:=\Law(Y_t)=(\mathrm{ev}_t)_\#\eta.
\]
Then $(\mu_t,\nu_t)$ is narrowly continuous and satisfies $\mu_{t=0}=\mu_0$, $\nu_{t=0}=\nu_0$.
For $\varphi\in C^2(M)$, It\^o's formula for \eqref{eq:SDE_frozen2} gives
\[
d\varphi(X_t)=\nabla\varphi(X_t)\cdot b^\eta(t,X_t)\,dt+\Delta\varphi(X_t)\,dt+\sqrt{2}\,\nabla\varphi(X_t)\cdot dW_t.
\]
Taking expectations and using $\nu_t=\eta_t$ yields \eqref{eq:weak_mu2}.
The same argument applied to $Y_t$ yields \eqref{eq:weak_nu2}.
Thus $(\mu_t,\nu_t)$ is a distributional solution of \eqref{eq-main}.

\smallskip
To show uniqueness, let $(\tilde\mu_t,\tilde\nu_t)_{t\in[0,T]}$ be another distributional solution of \eqref{eq-main}
with the same initial data. Define
\[
\tilde b_t(x):=-\nabla_1\int_M f(x,y)\,d\tilde\nu_t(y),\qquad
\tilde c_t(y):=\nabla_2\int_M f(x,y)\,d\tilde\mu_t(x).
\]
Since $f\in C^2(M\times M)$ and $M$ is compact, we have
$\tilde b,\tilde c\in L^\infty([0,T];W^{1,\infty}(M))$.

By definition of a distributional solution \eqref{eq:weak_mu2}-\eqref{eq:weak_nu2}, the curves $(\tilde\mu_t)$ and $(\tilde\nu_t)$ solve the
linear Fokker--Planck equations
\[
\partial_t \tilde\mu = \Delta\tilde\mu - \nabla\cdot(\tilde b_t\,\tilde\mu),\qquad
\partial_t \tilde\nu = \Delta\tilde\nu - \nabla\cdot(\tilde c_t\,\tilde\nu).
\]
in the sense of distributions.

By the superposition principle for Fokker-Planck equations (see e.g.\ \cite{trevisan2016}),
there exist $\tilde\rho,\tilde\eta\in\mathcal P(\Omega)$ such that
\[
(\mathrm{ev}_t)_\#\tilde\rho=\tilde\mu_t,\qquad (\mathrm{ev}_t)_\#\tilde\eta=\tilde\nu_t
\quad\text{for all }t\in[0,T],
\]
and $\tilde\rho$ (resp.\ $\tilde\eta$) is the law of a weak solution to
\[
dX_t=\tilde b_t(X_t)\,dt+\sqrt2\,dW_t,\qquad X_0\sim\mu_0,
\]
(resp.\ to
$dY_t=\tilde c_t(Y_t)\,dt+\sqrt2\,d\bar W_t,\quad Y_0\sim\nu_0$
).

On the other hand, since $\tilde b,\tilde c$ are bounded and globally Lipschitz in space (uniformly in $t$),
these SDEs are well posed in law. Therefore the above path laws coincide with the unique laws
defining $\Phi_1(\tilde\eta)$ and $\Phi_2(\tilde\rho)$, and hence
\[
\tilde\rho=\Phi_1(\tilde\eta),\qquad \tilde\eta=\Phi_2(\tilde\rho).
\]
Thus $(\tilde\rho,\tilde\eta)$ is a fixed point of $\Phi$. By Step~2 the fixed point is unique, so
$(\tilde\rho,\tilde\eta)=(\rho,\eta)$ and consequently $\tilde\mu_t=\mu_t$ and $\tilde\nu_t=\nu_t$ for all $t\in[0,T]$.

\medskip
\textit{Step 4: (instantaneous regularization for $t>0$)}
Let $(\mu_t,\nu_t)$ be the unique distributional solution from the previous steps and define
\[
b_t(x):=-\int_M \nabla_1 f(x,y)\,d\nu_t(y),\qquad
c_t(y):=\int_M \nabla_2 f(x,y)\,d\mu_t(x).
\]
Since $\mu_t,\nu_t$ are probability measures and $f\in C^2(M\times M)$,
\begin{equation}\label{eq:drift_bounds_revised}
\|b_t\|_{L^\infty(M)}+\|c_t\|_{L^\infty(M)}\le \|\nabla f\|_{L^\infty(M\times M)},
\qquad
\|\nabla b_t\|_{L^\infty(M)}+\|\nabla c_t\|_{L^\infty(M)}\le \|\nabla^2 f\|_{L^\infty(M\times M)}.
\end{equation}

Fix $\tau\in(0,T)$. Consider the linear Fokker-Planck equation
$\partial_t\mu=\Delta\mu + \nabla\cdot(V_t\mu)$ with $V\in L^\infty_tW^{1,\infty}_x$.
Uniform ellipticity of $\Delta$ and boundedness of $b$ imply that the associated Markov semigroup admits
a transition density for each $t>0$; equivalently, for any initial probability measure, $\mu_t$ is absolutely continuous for $t>0$.
The same holds for $\nu_t$. Hence there exist densities $u,v$ on $(\tau,T]\times M$ with
$\mu_t=u(t,\cdot)\,dx$ and $\nu_t=v(t,\cdot)\,dx$.

On $(\tau,T)\times M$, the densities solve (in the weak sense) the linear parabolic equations
\begin{equation}\label{eq:density_form_revised}
\begin{cases}
\partial_t u - \Delta u = -\nabla\cdot(u\,b_t),\\
\partial_t v - \Delta v = -\nabla\cdot(v\,c_t).
\end{cases}
\end{equation}
Because $b,c\in L^\infty((\tau,T);W^{1,\infty}(M))$, standard parabolic maximal $L^p$ regularity theory on the torus yields
\eqref{eq:max_reg_statement} for every $p\in(1,\infty)$, with constants depending on
$p,\tau,T,M$ and the bounds in \eqref{eq:drift_bounds_revised}.
(One may first note that $u(\tau,\cdot),v(\tau,\cdot)\in L^p(M)$ for every $p<\infty$ by smoothing of the elliptic part
together with the bounded drift; the resulting bounds deteriorate as $\tau\downarrow 0$.)

From \eqref{eq:max_reg_statement} with $p$ large, Sobolev embedding gives H\"older regularity of $u,v$ on $(\tau,T]\times M$.
Under the standing assumption $f\in C^2$, the bounds \eqref{eq:max_reg_statement} are the (safe) conclusion needed here.
If one additionally assumes $f\in C^\infty(M\times M)$ (or $C^{k+2}$ for finite $k$), then a standard bootstrap argument
yields $u,v\in C^\infty((\tau,T]\times M)$ (resp.\ $W^{k+2,p}$-regularity).
This completes the proof of Theorem~\ref{thm:main_appendix}.
\end{proof}

\subsection{Sobolev stability of optimal transport}
\label{sec:regularities}
In the proof of local contraction (Lemma~\ref{lemma-locconvconc}), we utilize the fact that smallness of the optimal transport map in high-order Sobolev norms implies smallness of the densities in the corresponding norms.
\begin{lemma}[Sobolev stability under push-forward]\label{lemma:sobolev_stability}
Let $p>n$ and $k\in\mathbb N$. Let $\mu=\rho\,dx\in\mathcal P_{ac}(\Tn)$ with
\[
\rho\in W^{k+1,p}(\Tn),\qquad 0<c\le \rho \le C <\infty\ \text{ a.e. on }\Tn .
\]
Let $T=id+\nabla\psi$ be the optimal transport map pushing $\mu$ to $\nu:=T_\#\mu$.
Assume that $\|\nabla\psi\|_{W^{k+1,p}}$ is sufficiently small so that $T$ is a $W^{k+1,p}$-diffeomorphism.
Then $\nu$ admits a density $\eta\in W^{k,p}(\Tn)$ and there exists a constant
$C_{\mathrm{st}}>0$, depending only on $n,p,k$, $c,C$, and $\|\rho\|_{W^{k+1,p}}$, such that
\[
\|\eta-\rho\|_{W^{k,p}} \le C_{\mathrm{st}}\;\|\nabla\psi\|_{W^{k+1,p}}.
\]
\end{lemma}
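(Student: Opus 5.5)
The plan is to use the change-of-variables (Monge--Ampère) formula for the push-forward and estimate each factor in Sobolev norms, using that $W^{k,p}$ is a Banach algebra for $p>n$, $k\ge 1$, with $W^{k,p}\hookrightarrow C^{k-1,1-n/p}$ by Theorem~\ref{thm-morrey}.

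Since $T=id+\nabla\psi$ is a $W^{k+1,p}$-diffeomorphism with $DT=I+\nabla^2\psi$ close to $I$ in $W^{k,p}\subset C^0$, the Jacobian equation gives
\[
\eta\circ T\,\det(I+\nabla^2\psi)=\rho,\qquad\text{i.e.}\qquad \eta=\Bigl(\frac{\rho}{\det(I+\nabla^2\psi)}\Bigr)\circ T^{-1}.
\]
I then split
\[
\eta-\rho=\Bigl[\Bigl(\frac{\rho}{J}-\rho\Bigr)\circ T^{-1}\Bigr]+\bigl[\rho\circ T^{-1}-\rho\bigr],\qquad J:=\det(I+\nabla^2\psi).
\]

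For the first bracket, $J-1$ is a polynomial in the entries of $\nabla^2\psi$ with no constant term. By the algebra property, $\|J-1\|_{W^{k,p}}\lesssim\|\nabla\psi\|_{W^{k+1,p}}$ once this norm is at most $1$. Smallness in $C^0$ gives $J\ge 1/2$, and composing with the smooth map $s\mapsto 1/s$ on $[1/2,\infty)$ keeps $1/J$ in $W^{k,p}$ with $\|1/J-1\|_{W^{k,p}}\lesssim\|J-1\|_{W^{k,p}}$. Multiplying by $\rho\in W^{k+1,p}\subset W^{k,p}$ bounds $\|\rho/J-\rho\|_{W^{k,p}}$.

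Composition with $T^{-1}$ is a bounded operator on $W^{k,p}$ with a uniform bound. This follows from the chain rule (Faà di Bruno), because $T^{-1}=id+\chi$ with $\chi$ small in $W^{k+1,p}$; the inverse function theorem in this Banach algebra gives $\|\chi\|_{W^{k+1,p}}\lesssim\|\nabla\psi\|_{W^{k+1,p}}$.

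For the second bracket, I write
\[
\rho\circ T^{-1}-\rho=\int_0^1\nabla\rho\,(x+s\chi(x))\cdot\chi(x)\,ds .
\]
This is where one derivative of $\rho$ is spent, which is why the hypothesis is $\rho\in W^{k+1,p}$ while the conclusion is only in $W^{k,p}$. Here $\nabla\rho\in W^{k,p}$ composed with the near-identity maps $x+s\chi$ stays bounded in $W^{k,p}$ uniformly in $s$, and multiplication by $\chi\in W^{k+1,p}$ yields $\lesssim\|\rho\|_{W^{k+1,p}}\|\nabla\psi\|_{W^{k+1,p}}$.

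The lower bound $c\le\rho$ is not essential beyond guaranteeing that $\nu$ is a genuine density with comparable bounds. The constant depends only on $n,p,k$, $\|\rho\|_{W^{k+1,p}}$, and the smallness threshold.

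The main obstacle is justifying composition and inversion estimates in $W^{k,p}$ for non-smooth diffeomorphisms at the borderline regularity $W^{k+1,p}$. The relevant facts are that $g\mapsto g\circ\Phi$ is bounded on $W^{k,p}$ when $\Phi-id$ is small in $W^{k+1,p}$, and that the inverse stays in $W^{k+1,p}$ with a comparable norm. I would prove these first by induction on $k$, using Faà di Bruno together with the algebra and Morrey embeddings, with the case $k=1$ handled by the change of variables $\int|g\circ\Phi|^p\le\|\det D\Phi^{-1}\|_\infty\int|g|^p$. I would also approximate by smooth $\psi$ and pass to the limit, so that the pointwise formulas are legitimate.
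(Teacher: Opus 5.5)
Your proposal is correct and follows essentially the same route as the paper. Both use the change-of-variables formula, split $\eta-\rho$ into a composition term $\rho\circ T^{-1}-\rho$ (where the extra derivative of $\rho$ is spent) and a Jacobian term, and then rely on the Banach algebra property of $W^{k,p}$ for $p>n$ together with standard composition and inversion estimates for near-identity $W^{k+1,p}$-diffeomorphisms.

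The only difference is cosmetic. You write $\eta=(\rho/J)\circ T^{-1}$ and compose with $T^{-1}$ once at the end. The paper writes $\eta=(\rho\circ T^{-1})\det(\nabla T^{-1})$ and bounds $\det(\nabla T^{-1})-1$ through $(\nabla T)^{-1}\circ T^{-1}$, which tacitly uses the same composition bound that you state explicitly.
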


\begin{proof}
Write $\mu=\rho\,dx$ and $\nu=\eta\,dx$, where $\rho,\eta\in W^{k,p}(\Tn)$ with
$0<c\le \rho,\eta \le C$.
Since $\nu=T_\#\mu$ with $T=id+\nabla\psi$, the change-of-variables formula yields
\begin{equation}\label{eq-changeofvar}
\eta
=
\frac{\rho\circ T^{-1}}{\det(\nabla T)\circ T^{-1}}
=
(\rho\circ T^{-1})\,\det(\nabla T^{-1}),
\qquad
\nabla T = I+\nabla^2\psi.
\end{equation}

We decompose
\[
\eta-\rho
=
(\rho\circ T^{-1}-\rho)\,\det(\nabla T^{-1})
+
\rho\,(\det(\nabla T^{-1})-1).
\]
Since $p>n$, the Sobolev space $W^{k,p}(\Tn)$ is a Banach algebra, hence
\begin{align*}
\|\eta-\rho\|_{W^{k,p}}
&\le
C\Bigl(
\|\rho\circ T^{-1}-\rho\|_{W^{k,p}}
+
\|\rho\|_{W^{k,p}}\,
\|\det(\nabla T^{-1})-1\|_{W^{k,p}}
\Bigr).
\end{align*}

We first estimate the composition term.
Since $\|\nabla\psi\|_{W^{k+1,p}}$ is sufficiently small, $T$ is a $W^{k+1,p}$-diffeomorphism
with inverse $T^{-1}=id+u$, where
\[
\|u\|_{W^{k+1,p}}\le C\|\nabla\psi\|_{W^{k+1,p}}.
\]
By standard composition estimates in Sobolev spaces for $p>n$,
\[
\|\rho\circ(id+u)-\rho\|_{W^{k,p}}
\le
C\,\|u\|_{W^{k,p}}\,\|\rho\|_{W^{k+1,p}}
\le
C\,\|\nabla\psi\|_{W^{k+1,p}}.
\]

Next, we estimate the Jacobian term.
Using the chain rule,
\[
\nabla T^{-1} = (\nabla T)^{-1}\circ T^{-1},
\qquad
\nabla T = I + A,\ \ A=\nabla^2\psi.
\]
The smallness of $\|A\|_{W^{k,p}}$ implies
\[
(\nabla T)^{-1}=I+\widetilde A,
\qquad
\|\widetilde A\|_{W^{k,p}}\le C\|A\|_{W^{k,p}}.
\]
Since the determinant is a smooth polynomial function of the matrix entries,
it is locally Lipschitz in $W^{k,p}$, and therefore
\[
\|\det(\nabla T^{-1})-1\|_{W^{k,p}}
\le
C\|\widetilde A\|_{W^{k,p}}
\le
C\|\nabla^2\psi\|_{W^{k,p}}
\le
C\|\nabla\psi\|_{W^{k+1,p}}.
\]

Combining the above estimates, we conclude that
\[
\|\mu-\nu\|_{W^{k,p}}
=
\|\eta-\rho\|_{W^{k,p}}
\le
C\,\|\nabla\psi\|_{W^{k+1,p}},
\]
where the constant $C$ depends only on $n,p,k$, $c,C$, and $\|\rho\|_{W^{k+1,p}}$.
This proves the claim.
\end{proof}

Actually, a converse of the previous lemma holds. That is, a closeness between densities implies smallness of optimal transport map. It follows by the inverse (implicit) function theorem. 
\begin{lemma}[Stability of OT map in Sobolev norms, c.f. {\cite[Lemma~5.1]{CJS25}}]\label{lemma-IFT2}

\noindent For $p>n$, $k\in\mathbb N$, let $\mu=\rho dx\in \cP_{ac}(\Tn)$ satisfy
\begin{enumerate}\item $0<C^{-1}\le \rho\le C$ a.e.\ on $\Tn$ for some $C<\infty$, and
\item $\rho\in W^{k+1,p}(\Tn)$.
\end{enumerate}
Then there exist $\varepsilon>0$ and $C'<\infty$, depending only on $n,p,k,C$ and
$\|\rho\|_{W^{k+1,p}}$, such that if $\mu_1=\rho_1dx\in \cP_{ac}(\Tn)$ satisfies
$\|\rho_1-\rho\|_{W^{k,p}}\le\varepsilon$, then the optimal transport map $T$ from
$\mu$ to $\mu_1$ can be written as $T=id+\nabla\psi$ with $\psi\in W^{k+2,p}(\Tn)$
(normalized by $\int_{\Tn}\psi\,dx=0$), and the induced gradient vector field
$\Phi:=\nabla\psi$ satisfies
\[
\|\Phi\|_{W^{k+1,p}} \le C'\,\|\rho_1-\rho\|_{W^{k,p}}.
\]
\end{lemma}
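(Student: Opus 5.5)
We prove Lemma~\ref{lemma-IFT2} with the inverse function theorem applied to the Monge--Amp\`ere map. On the torus, a map $T=id+\nabla\psi$ with $I+\nabla^2\psi$ positive definite is the optimal map from $\mu$ to $T_\#\mu$: by Theorem~\ref{thm:brenier}, the optimal map has exactly this form with $\Psi=\frac12|x|^2+\psi$ convex, and uniqueness then identifies the two. It therefore suffices to solve the pushforward equation for $\psi$ near $0$.

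Fix $X:=\{\psi\in W^{k+2,p}(\Tn):\int\psi\,dx=0\}$ and $Y:=\{g\in W^{k,p}(\Tn):\int g\,dx=0\}$. Define
\[
\mathcal G(\psi):=\rho-\rho_1\circ(id+\nabla\psi)\,\det(I+\nabla^2\psi)\ \ \text{in place of}\ \ \mathcal F(\psi):=\rho_1\circ(id+\nabla\psi)\det(I+\nabla^2\psi)-\rho,
\]
but since $\rho_1$ is only $W^{k,p}$, composing it with $T$ loses regularity. We therefore use the inverse form from \eqref{eq-changeofvar} instead,
\[
\mathcal F(\psi):=\bigl(\rho\circ T_\psi^{-1}\bigr)\det\bigl(\nabla T_\psi^{-1}\bigr),\qquad T_\psi=id+\nabla\psi,
\]
which is the density of $(T_\psi)_\#\mu$. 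The goal is to solve $\mathcal F(\psi)=\rho_1$.

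Both $\rho$ and the pushforward density have unit mass, so $\mathcal F(\psi)-\rho\in Y$. Because $p>n$, $W^{k+1,p}$ embeds in $C^1$ and is a Banach algebra, so $T_\psi$ is a $W^{k+1,p}$-diffeomorphism for small $\psi$. The estimates in the proof of Lemma~\ref{lemma:sobolev_stability} then show that $\mathcal F:X\to \rho+Y$ is $C^1$ near $0$, with continuity of the derivative coming from those same composition bounds and using $\rho\in W^{k+1,p}$. Differentiating at $\psi=0$ gives
\[
D\mathcal F(0)[h]=-\nabla\cdot(\rho\nabla h).
\]

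The key step is showing that $L_\rho h:=-\nabla\cdot(\rho\nabla h)$ is an isomorphism from $X$ onto $Y$.
\begin{itemize}
\item \emph{Injectivity.} If $L_\rho h=0$, test against $h$ to get $\int\rho|\nabla h|^2=0$. Since $\rho\ge C^{-1}$, $h$ is constant, and the mean-zero normalization forces $h=0$.
\item \emph{Surjectivity with bounds.} Lax--Milgram gives a solution in $H^1$ for data in $Y$ (bounded ellipticity). To lift it to $W^{k+2,p}$, first write the equation in nondivergence form, $\rho\Delta h+\nabla\rho\cdot\nabla h=-g$, whose coefficients are $W^{k+1,p}\subset C^{k,\beta}$. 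Then apply $L^p$ Calder\'on--Zygmund estimates and bootstrap in $k$, differentiating the equation and using the algebra property, to get $\|h\|_{W^{k+2,p}}\le C\|g\|_{W^{k,p}}$. Uniqueness (from injectivity) removes the lower-order term by the usual compactness argument.
\end{itemize}
This regularity step is the main technical obstacle, because at $k$ derivatives the coefficient $\nabla\rho$ has only $W^{k,p}$ regularity, which is exactly enough for the Banach-algebra argument.

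The inverse function theorem now yields $\varepsilon,C'$ such that whenever $\|\rho_1-\rho\|_{W^{k,p}}\le\varepsilon$ there is a unique small $\psi\in X$ with $\mathcal F(\psi)=\rho_1$, and $\|\psi\|_{W^{k+2,p}}\le C'\|\rho_1-\rho\|_{W^{k,p}}$. The constants depend only on the quantities listed in the statement, since the Lipschitz bounds on $D\mathcal F$ and the inverse bound on $L_\rho$ depend only on $n,p,k,C,\|\rho\|_{W^{k+1,p}}$. Shrinking $\varepsilon$ so that $\|\nabla^2\psi\|_{L^\infty}<1$ (Morrey, Theorem~\ref{thm-morrey}) makes $\frac12|x|^2+\psi$ strictly convex. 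By the uniqueness part of Theorem~\ref{thm:brenier}, $T_\psi$ is then the optimal map from $\mu$ to $\mu_1$, which gives the bound $\|\Phi\|_{W^{k+1,p}}\le\|\psi\|_{W^{k+2,p}}\le C'\|\rho_1-\rho\|_{W^{k,p}}$.
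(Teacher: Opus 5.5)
\textbf{The gap.} The inverse function theorem cannot be applied to your map $\mathcal F(\psi)=(\rho\circ T_\psi^{-1})\det(\nabla T_\psi^{-1})$, because it is not $C^1$ from $X$ into $W^{k,p}$ near $0$. You correctly noticed that composing the rough $\rho_1$ with $T_\psi$ costs a derivative. Putting $T_\psi^{-1}$ inside the composition only moves that loss elsewhere.

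Write $T_{\psi+th}=(id+tV)\circ T_\psi$ with $V:=(\nabla h)\circ T_\psi^{-1}$. Then
\[
\frac{d}{dt}\Big|_{t=0}\mathcal F(\psi+th)=-\nabla\cdot\bigl(\mathcal F(\psi)\,V\bigr).
\]
At $\psi=0$ this is your $-\nabla\cdot(\rho\nabla h)\in W^{k,p}$, because $\mathcal F(0)=\rho\in W^{k+1,p}$. For $\psi\neq 0$, however, $\mathcal F(\psi)=\bigl(\rho/\det(I+\nabla^2\psi)\bigr)\circ T_\psi^{-1}$ is in general only in $W^{k,p}$, since $\det(I+\nabla^2\psi)$ is no smoother than $\nabla^2\psi\in W^{k,p}$. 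Choose $h$ so that $V$ equals a constant vector $e$ on an open set. There the derivative is $-\partial_e\mathcal F(\psi)$, which lies in $W^{k-1,p}$ but not in $W^{k,p}$.

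Such $\psi$ exist in every neighborhood of $0$, so $\mathcal F$ is not differentiable into $W^{k,p}$ near $0$. It is not even strictly differentiable at $0$: a bound $\|\mathcal F(\psi+th)-\mathcal F(\psi)-tLh\|_{W^{k,p}}\le\varepsilon t\|h\|$ would keep the difference quotients bounded in $W^{k,p}$, and weak compactness would then force their distributional limit into $W^{k,p}$.

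This obstruction is structural, not a matter of sharper estimates. The solution map $\rho_1\mapsto\psi$ is itself not $C^1$ from $W^{k,p}$ to $W^{k+2,p}$, because it involves inverting a $W^{k+1,p}$ diffeomorphism, which is continuous but not differentiable. For instance, in one dimension with $\rho\equiv1$, $T$ is essentially the inverse of the distribution function of $\rho_1$. The inverse function theorem would instead produce a $C^1$ inverse, so it cannot hold for $\mathcal F$. The estimates of Lemma~\ref{lemma:sobolev_stability} only give $\|\mathcal F(\psi)-\rho\|_{W^{k,p}}\le C\|\nabla\psi\|_{W^{k+1,p}}$, which is a bound relative to the identity, not differentiability at nearby points.

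\textbf{How the paper avoids it.} The paper solves for the transport map in the opposite direction. It applies the inverse function theorem to $J(\phi)=\rho\circ(id+\nabla\phi)\,\det(I+\nabla^2\phi)$ and solves $J(\phi)=\rho_1$, which means $S_\#(\rho_1\,dx)=\rho\,dx$ with $S=id+\nabla\phi$. Now only the smooth density $\rho\in W^{k+1,p}$ is composed, and no inverse appears. Hence
\[
dJ(\phi)[h]=\bigl(\nabla\rho\circ S\cdot\nabla h\bigr)\det\nabla S+(\rho\circ S)\,\mathrm{cof}(\nabla S):\nabla^2 h
\]
lies in $W^{k,p}$ and depends continuously on $\phi$. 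Its value at $0$ is $\nabla\cdot(\rho\nabla\,\cdot)$, the operator whose invertibility you already established.

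The optimal map is then $T=S^{-1}$, which is the gradient of a convex function by Legendre duality. This inversion step needs only the bound $\|T-id\|_{W^{k+1,p}}\le C\|\nabla\phi\|_{W^{k+1,p}}$, not differentiability; it follows, for example, from $T-id=-(\nabla\phi)\circ T$ and composition estimates. With this change of unknown, your invertibility argument and your Brenier identification go through unchanged.
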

\begin{proof}
We prove this by a perturbative inverse function theorem argument around the identity map.

\medskip
\noindent{Step 1. Monge-Ampère operator and the inverse map.}

\noindent For $\phi\in W^{k+2,p}(\Tn)$ with $\int_{\Tn}\phi\,dx=0$, consider the map $S:=id+\nabla\phi$.
We define the operator $J(\phi)$ to be the density which is pushed forward to $\rho$ by the map $S$.
According to the Monge-Ampère equation \eqref{eq-changeofvar}, this corresponds to the pull-back of $\rho$ by $S$:
\[
J(\phi)(x)
:=
\rho(S(x))\,\det(\nabla S(x))
=
\rho(x+\nabla\phi(x))\,\det(I+\nabla^2\phi(x)).
\]
Since $p>n$, the Sobolev space $W^{k+1,p}(\Tn)$ is a Banach algebra. The assumption $\rho\in W^{k+1,p}(\Tn)$ ensures that $J(\phi)\in W^{k,p}(\Tn)$.
Standard composition and determinant estimates imply that $J:W^{k+2,p}(\Tn)\to W^{k,p}(\Tn)$ is of class $C^1$ in a neighborhood of $\phi=0$.

By construction, the equation $J(\phi)=\rho_1$ is equivalent to the mass transport constraint
\[
S_\#(\rho_1\,dx)=\rho\,dx.
\]
Our strategy is to find such a map $S$ for $\rho_1$ close to $\rho$, and then obtain the optimal transport map $T$ from $\mu$ to $\mu_1$ as the inverse map $T=S^{-1}$.

\medskip
\noindent{Step 2. Linearization at the identity.}

\noindent A direct computation shows that the Fr\'echet derivative of $J$ at $\phi=0$ is given by
\[
dJ(0)[\varphi]
=
\nabla\cdot(\rho\nabla\varphi),
\]
for $\varphi\in W^{k+2,p}(\Tn)$ with zero mean.
We claim that the linearized operator
\[
L: \Bigl\{\varphi\in W^{k+2,p}(\Tn): \int_{\Tn}\varphi\,dx=0\Bigr\}
\;\longrightarrow\;
\Bigl\{h\in W^{k,p}(\Tn): \int_{\Tn}h\,dx=0\Bigr\},
\quad
L\varphi:=\nabla\cdot(\rho\nabla\varphi)
\]
is an isomorphism.
Indeed, $L$ is a self-adjoint, uniformly elliptic operator with coefficients in $W^{k+1,p}(\Tn)$.
Since we restrict the domain to mean-zero functions, the Poincaré inequality implies coercivity.
Thus, by the Lax-Milgram theorem and standard elliptic regularity \cite{GT}, $L$ admits a bounded inverse.

\medskip
\noindent{Step 3. Application of the inverse function theorem.}

\noindent Since the target space
\[
M:=\Bigl\{\tilde \rho\in W^{k,p}(\Tn): \int_{\Tn}\tilde \rho\,dx=1\Bigr\}
\]
is an affine subspace of $W^{k,p}(\Tn)$, the inverse function theorem applies directly.
Thus there exist $\varepsilon>0$ and $C<\infty$ such that for every
$\rho_1\in M$ satisfying $\|\rho_1-\rho\|_{W^{k,p}}\le\varepsilon$, there exists a unique
$\phi\in W^{k+2,p}(\Tn)$ with $\int_{\Tn}\phi\,dx=0$ such that $J(\phi)=\rho_1$, and
\[
\|\phi\|_{W^{k+2,p}} \le C\,\|\rho_1-\rho\|_{W^{k,p}}.
\]

\medskip
\noindent{Step 4. Identification with the optimal transport map.}

\noindent From Step 1, the solution $\phi$ defines a map $S=id+\nabla\phi$ satisfying $S_\#(\rho_1\,dx)=\rho\,dx$.
For $\varepsilon$ sufficiently small, $\|\nabla^2\phi\|_{L^\infty}$ is small, so $S$ is the gradient of a strictly convex function.
By the uniqueness of the Brenier map, $S$ is the optimal transport map from $\mu_1$ to $\mu$.

Consequently, the optimal transport map $T$ from $\mu$ to $\mu_1$ is given by the inverse $T=S^{-1}$.
From the properties of convex conjugates, $T$ is also the gradient of a convex function, $T=id+\nabla\psi$.
Applying standard stability estimates for the inversion of diffeomorphisms close to the identity, we obtain
\[
\|\nabla\psi\|_{W^{k+1,p}}
\le
C'\,\|\nabla\phi\|_{W^{k+1,p}}.
\]
Combining this with the estimate in Step 3 yields
\[
\|\nabla\psi\|_{W^{k+1,p}}
\le
C' C\,\|\rho_1-\rho\|_{W^{k,p}}.
\]
This completes the proof.
\end{proof}

\subsection{Variation formulae} \label{appendix-variationformula}

\begin{lemma}[First and second variation formulae, c.f. Sec. 2 in \cite{CJS25}]\label{lem:first-second-var}

\noindent Suppose $\mu=\rho\,dx$ is a probability measure on $\Tn$ such that
$\rho\in W^{2,p}(\Tn)$ for some $p>n$ and $0<c\le \rho \le C$ a.e. on $\Tn$.
Let $V\in C^2(\Tn)$ and define
\[
G(\mu)=\int_{\Tn}\rho\log\rho\,dx+\int_{\Tn} V\,d\mu .
\]
Then for any $\Phi=\nabla\phi$ and $\Psi=\nabla \psi$ in  $W^{2,p}(\Tn)$ we have the first variation formula
\[
\frac{d}{dt}\Big|_{t=0} G\bigl((id+t\Phi)_\#\mu\bigr)
=
-\int_{\Tn} \nabla\cdot\Phi\,d\mu
+
\int_{\Tn}\nabla V\cdot\Phi\,d\mu
=
\int_{\Tn}\bigl(\nabla\log\rho+\nabla V\bigr)\cdot\Phi\,d\mu ,
\]
and the second variation formula
\[ \begin{aligned} 
\frac{\partial^2}{\partial s \partial t }\Big|_{t=s=0} G\bigl((id+t\Phi+s\Psi )_\#\mu\bigr)
& =
\int_{\Tn}\Bigl(  \nabla\Phi: \nabla \Psi   +\nabla^2V[\Phi,\Psi ]\Bigr)\,d\mu \\
&= \langle L \Phi ,\Psi \rangle_{L^2_{\mu}}= \langle \Phi ,L\Psi \rangle_{L^2_{\mu}},
\end{aligned} \]
where $L\Phi= -\rho^{-1}\nabla\cdot(\rho\nabla\Phi)
  + \nabla^2 V\cdot\Phi.$
\end{lemma}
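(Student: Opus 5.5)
We compute the first and second variations of $G$ along the map $T_{t,s}:=id+t\Phi+s\Psi$, splitting $G$ into its entropy part and its potential part. For $|t|,|s|$ small, $T_{t,s}$ is a $C^1$ diffeomorphism of $\Tn$. This holds because $\Phi,\Psi\in W^{2,p}\hookrightarrow C^{1,1-n/p}$ by Theorem~\ref{thm-morrey}, so that $\nabla T_{t,s}=I+t\nabla\Phi+s\nabla\Psi$ is uniformly close to $I$.

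\textbf{Potential part.} By the definition of push-forward,
\[
\int V\,d(T_{t,s})_\#\mu=\int V(x+t\Phi+s\Psi)\,d\mu .
\]
Since $V\in C^2$ and $\Phi,\Psi$ are bounded, we may differentiate under the integral. The first derivative in $t$ at $0$ is $\int\nabla V\cdot\Phi\,d\mu$, and the mixed derivative is $\int\nabla^2V[\Phi,\Psi]\,d\mu$.

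\textbf{Entropy part.} By the change-of-variables formula \eqref{eq-changeofvar}, the density of $(T_{t,s})_\#\mu$ is $\rho_{t,s}=\rho\circ T_{t,s}^{-1}\,/\,\det(\nabla T_{t,s})\circ T_{t,s}^{-1}$. Substituting $y=T_{t,s}(x)$ gives
\[
H\bigl((T_{t,s})_\#\mu\bigr)=\int\rho\log\rho\,dx-\int\log\det\bigl(I+t\nabla\Phi+s\nabla\Psi\bigr)\,d\mu .
\]
This identity is the key reduction: all dependence on $(t,s)$ now sits in a smooth function of the matrix $t\nabla\Phi+s\nabla\Psi$, which is uniformly small in $L^\infty$. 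We then expand
\[
\log\det(I+A)=\Tr A-\tfrac12\Tr(A^2)+O(|A|^3),
\]
with the remainder uniform because $A$ is small in $C^0$. Dominated convergence justifies differentiating under the integral. The $t$-derivative at $0$ is $-\int\nabla\cdot\Phi\,d\mu$. The mixed derivative is $\int\Tr(\nabla\Phi\nabla\Psi)\,d\mu$.

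\textbf{Rewriting the first variation.} Since $\nabla\Phi=\nabla^2\phi$ is symmetric, $\Tr(\nabla\Phi\nabla\Psi)=\nabla\Phi:\nabla\Psi$. For the first variation, we integrate by parts on the torus:
\[
-\int\rho\,\nabla\cdot\Phi\,dx=\int\nabla\rho\cdot\Phi\,dx=\int\nabla\log\rho\cdot\Phi\,d\mu .
\]
This is legitimate because $\rho\in W^{2,p}\subset C^1$ and $\rho\ge c>0$, so $\nabla\log\rho$ is bounded.

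\textbf{Operator form.} The last step identifies the second-variation bilinear form with $\langle L\Phi,\Psi\rangle_{L^2_\mu}$. We write
\[
\int\rho\,\partial_i\Phi^j\,\partial_i\Psi^j\,dx=-\int\partial_i\bigl(\rho\,\partial_i\Phi^j\bigr)\Psi^j\,dx=\int\bigl(-\rho^{-1}\nabla\cdot(\rho\nabla\Phi)\bigr)\cdot\Psi\,d\mu .
\]
The term $\nabla\cdot(\rho\nabla\Phi)$ is well defined in $L^p$, since $\rho\in C^1$ and $\nabla\Phi\in W^{1,p}$. Adding the Hessian term gives $\langle L\Phi,\Psi\rangle_{L^2_\mu}$. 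Symmetry follows because the original form is symmetric in $\Phi$ and $\Psi$: $\nabla^2V$ is a symmetric matrix, and the Frobenius product is symmetric.

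\textbf{Main obstacle.} The main obstacle is rigor in the regularity bookkeeping rather than the computation itself. Specifically, one must ensure that $T_{t,s}$ is a bijection of $\Tn$; this holds because it is a small $C^1$ perturbation of the identity with nonvanishing Jacobian, homotopic to $id$, hence a degree-one covering and therefore a diffeomorphism. One must also ensure the Taylor remainder of $\log\det$ is uniformly controlled so that differentiation can be exchanged with integration. Both follow from the Morrey embedding $W^{2,p}\hookrightarrow C^1$ for $p>n$ (Theorem~\ref{thm-morrey}).
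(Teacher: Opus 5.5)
Your proposal is correct and follows essentially the same route as the paper. Both arguments use the change of variables to write the entropy as $\int\rho\log\rho\,dx-\int\log\det(I+t\nabla\Phi+s\nabla\Psi)\,d\mu$, the chain rule for the potential term, dominated convergence, and integration by parts for the first-variation identity and for $\langle L\Phi,\Psi\rangle_{L^2_\mu}$; the differences are cosmetic: you expand $\log\det(I+A)$ to second order where the paper uses $\frac{d}{dt}\log\det A_t=\Tr(A_t^{-1}\dot A_t)$, and you obtain bijectivity from a degree/covering argument where the paper uses the lifted bound $|T_t(x)-T_t(y)|\ge(1-|t|\|\nabla\Phi\|_{L^\infty})|x-y|$.
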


\begin{proof} 
Since $\Phi=\na \phi \in W^{2,p}(\Tn)$ with $p>n$, Morrey's inequality (Theorem~\ref{thm-morrey}) guarantees that $\Phi \in C^{1,\alpha}(\Tn)$ for some $\alpha > 0$. Consequently, $\nabla \Phi$ is continuous and uniformly bounded on $\Tn$, which implies for sufficiently small $t$, thanks to the continuity of the determinant,  the Jacobian determinant satisfies $\det(I_n + t\nabla \Phi(x)) \ge 1/2>0$ for all $x \in \Tn$.
Moreover, it is straightforward to check that $T_t(x)=x+t\Phi(x)$ is a $C^{1,\alpha}$-diffeomorphism for sufficiently small $t$,
since, after lifting $T_t$ periodically to $\mathbb R^n$, it holds
\[
|T_t(x)-T_t(y)|\ge (1-|t|\|\nabla\Phi\|_{L^\infty})|x-y|
\qquad\forall x,y\in\mathbb R^n,
\]
so $T_t$ is injective (hence bi-Lipschitz onto its image) for $|t|\|\nabla\Phi\|_{L^\infty}<1$. Combining this with the invertibility of $\nabla T_t$, we conclude that $T_t$ is a $C^{1,\alpha}$-diffeomorphism of $\mathbb T^n$.


Let $\mu_t = (id+t\Phi)_\#\mu$. Using the change of variables formula $y = x + t\Phi(x)$, the functional $\widetilde{G}(t):= G(\mu_t)$ transforms to (unless otherwise stated, we omit the domain $\Tn$  in the integral sign):
\begin{align*}
\widetilde{G}(t) &= \int \rho_t(y) \log \rho_t(y) \, dy + \int V(y) \, d\mu_t(y) \\
&= \int \frac{\rho(x)}{\det(I_n + t\nabla \Phi)} \log \left( \frac{\rho(x)}{\det(I_n + t\nabla \Phi)} \right) \det(I_n + t\nabla \Phi) \, dx + \int V(x+t\Phi) \rho(x) \, dx \\
&
= \int  \log \rho \, d\mu
 - \int \log \det(I_n + t\nabla \Phi) \, d\mu   
 +  \int  V(x+t\Phi) \, d\mu
\end{align*}
Note that the condition $0 < c \le \rho \le C$ ensures that the entropy term is well-defined and finite.



We show that the first and second variations exist and coincide with the stated formulas.
As we saw earlier, there exists $\bar{t}>0$ such that if $|t|<\bar{t}$,
$(I_n +t \na \Phi)$ is invertible (pointwise). Then for all $|t|<\bar{t}$ we obtain 
\[
\frac{d}{dt} \log  (\det A_t) = \Tr(A_t^{-1}\dot{A}_t), \quad A_t=I_n + t\na \Phi.
\]
 
Given $t_0$ with $|t_0|< \bar{t}$, we have
\begin{equation}\label{forDCT}
\Big|\frac{ \log \det(I_n  + t \na \Phi) - \log \det (I_n +t_0 \na \Phi ) }{t-t_0} \Big|
\leq C \Vert \na \Phi \Vert_{L^\infty}    
\end{equation}
for all $t$ sufficiently close to $t_0$, uniformly in $x\in \Tn$. 
By the dominated convergence theorem, we have for all $|t|<\bar{t}$
\begin{equation*}
\frac{d}{dt} \widetilde{G}( t )
 =-\int  \Tr( (I_n + t \na \Phi)^{-1}     \na \Phi \big) d\mu + \int \na V(id +t \Phi) \cdot \Phi d\mu
\end{equation*}
In particular, we get 
\begin{align}
\label{energydiff}
\frac{d}{dt}\Big|_{t=0} \widetilde{G}(t) &= -\int \rho (\nabla \cdot \Phi) \, dx + \int (\nabla V \cdot \Phi) \rho \, dx
=\int \na \rho \cdot \Phi \,dx + \int \na V \cdot \Phi d \mu
\nn \\
&=\int \nabla \log \rho \cdot \Phi \, d\mu +\int \na V \cdot \Phi d \mu.
\nn
\end{align}

The second variation formula is derived as follows:
By Morrey's inequality, $W^{2,p}(\Tn)\hookrightarrow C^{1,\alpha}(\Tn)$ for some
$\alpha\in(0,1)$ since $p>n$. Hence $\Phi,\Psi\in C^{1,\alpha}(\Tn)$, and for
$|s|+|t|$ sufficiently small the map
$F_{s,t}=id+t\Phi+s\Psi$
is a $C^{1,\alpha}$-diffeomorphism of $\Tn$. 

Let $\mu_{s,t}:=(id+t\Phi+s\Psi)_{\#}\mu$.
Then the pushforward density of $(F_{s,t})_{\#}\mu$, say $\rho_{s,t}$, satisfies
\[
\rho_{s,t}(F_{s,t}(x))\,  \det\bigl(I_n +t\nabla\Phi(x)+s\nabla\Psi(x)\bigr)  =\rho(x).
\]
Therefore, by change of variables,
\begin{align*}
& H(s,t):=G(\mu_{s,t})
\\
&=
\int \rho(x)\log\!\Bigl(\frac{\rho(x)}{\det(I_n+t\nabla\Phi(x)+s\nabla\Psi(x))}\Bigr)\,dx 
+\int V(x+t\Phi(x)+s\Psi(x))\,\rho(x)\,dx \\
&=
\int \rho\log\rho\,dx
-\int \rho(x)\log\det(I_n+t\nabla\Phi(x)+s\nabla\Psi(x))\,dx \\
&\qquad
+\int V(x+t\Phi(x)+s\Psi(x))\,\rho(x)\,dx .
\end{align*}
%
Similarly as in \eqref{forDCT},  we see that
\[
\frac{\p}{\p t}\Big|_{t=0}   \int - \rho \log \det(I_n +t \na \Phi + s \na \Psi) = - \int  \Tr( (I_n+s \na \Psi)^{-1}   \na \Phi) \rho dx.
\]
Therefore, we have
\begin{align}
&\frac{\partial^2}{\partial s\,\partial t}\Big|_{t=s=0}
\int -\rho(x)\log\det(I_n +t\nabla\Phi(x)+s\nabla\Psi(x))\,dx
=\int \Tr(\na \Phi(x)  \na \Psi(x)) \rho(x) dx
\nn \\
&=
\int \rho(x)\,\nabla\Phi(x):\nabla\Psi(x)\,dx,    
\nn
\end{align}
where the last equality follows from the fact that $\na \Phi, \na \Psi$ are symmetric.
Next, since $x+t\Phi(x)+s\Psi(x)$ is affine in $(s,t)$, the chain rule yields
\[
\frac{\partial^2}{\partial s\,\partial t}\Big|_{t=s=0}
V(x+t\Phi(x)+s\Psi(x))
=
\nabla^2V(x)[\Phi(x),\Psi(x)].
\]
Thus
\[
\frac{\partial^2}{\partial s\,\partial t}\Big|_{t=s=0}
\int V(x+t\Phi(x)+s\Psi(x))\,\rho(x)\,dx
=
\int \nabla^2V(x)[\Phi(x),\Psi(x)]\,\rho(x)\,dx.
\]
Combining the two identities, we obtain
\[
\frac{\partial^2}{\partial s\,\partial t}\Big|_{t=s=0}
H(s,t)
=
\int
\Bigl(
\nabla\Phi:\nabla\Psi+\nabla^2V[\Phi,\Psi]
\Bigr)\,d\mu.
\]
Finally, by definition of $L$ and integration by parts,
\begin{align*}
\langle L\Phi,\Psi\rangle_{L^2_\mu}
&=
\int
\Bigl(
-\rho^{-1}\nabla\cdot(\rho\nabla\Phi)+\nabla^2V\cdot\Phi
\Bigr)\cdot \Psi\,d\mu \\
&=
-\int \nabla\cdot(\rho\nabla\Phi)\cdot \Psi\,dx
+\int \rho\,\nabla^2V[\Phi,\Psi]\,dx \\
&=
\int \rho\,(\nabla\Phi:\nabla\Psi)\,dx
+\int \rho\,\nabla^2V[\Phi,\Psi]\,dx \\
&=
\int
\Bigl(
\nabla\Phi:\nabla\Psi+\nabla^2V[\Phi,\Psi]
\Bigr)\,d\mu.
\end{align*}
Since the right-hand side is symmetric in $(\Phi,\Psi)$, we also get
\[
\langle L\Phi,\Psi\rangle_{L^2_\mu}
=
\langle \Phi,L\Psi\rangle_{L^2_\mu}.
\]
This completes the proof.


\end{proof}

\begin{remark}[Gradient flow]\label{eq-derivamain}
As suggested by the above calculation, let $G:\mathcal P(\Tn)\to\R\cup\{\pm \infty\}$ be a sufficiently smooth functional and consider a smooth perturbation $(id+t\Phi)_\#\mu$ with $\Phi=\nabla\phi$.
Formally, the first variation can be written as
\[
\frac{d}{dt}\Big|_{t=0} G\bigl((id+t\Phi)_\#\mu\bigr)
= \Big\langle \nabla\frac{\delta G}{\delta\mu},\,\Phi\Big\rangle_{L^2(\mu)},
\]
where $\frac{\delta G}{\delta\mu}$ denotes the variational derivative of $G$ at $\mu$.
In Otto's formal Riemannian calculus, the vector field $\nabla\frac{\delta G}{\delta\mu}\in \mathrm{Tan}_\mu\mathcal P_2(\Tn)$ is thus interpreted as the $W_2$-gradient of $G$ at $\mu$.
Consequently, the formal Wasserstein gradient \emph{descent} and \emph{ascent} equations read
\[
\partial_t\mu_t = \nabla\!\cdot\!\Bigl(\mu_t\,\nabla\frac{\delta G}{\delta\mu_t}\Bigr),
\qquad
\partial_t\nu_t = -\,\nabla\!\cdot\!\Bigl(\nu_t\,\nabla\frac{\delta G}{\delta\nu_t}\Bigr),
\]
respectively.
Applying this to the saddle functional $F(\mu,\nu)$ in \eqref{eq-functional} yields
\[
\nabla\frac{\delta F}{\delta\mu}(x)
= \nabla\log\rho(x) + \nabla_x\!\int\! f(x,y)\,d\nu(y),
\, \, 
\nabla\frac{\delta F}{\delta\nu}(y)
= -\,\nabla\log\eta(y) + \nabla_y\!\int \! f(x,y)\,d\mu(x),
\]
where we write $\mu=\rho\,dx$ and $\nu=\eta\,dy$.
This explains why \eqref{eq-main} can be viewed as the formal Wasserstein gradient descent--ascent (GDA) dynamics for $F$.
\end{remark}

At the equilibrium $\rho= e^{-V}/\int e^{-V}dx$, we have the following identity:
\begin{lemma}[Derivation of \eqref{eq-101}] \label{lemma-A7eq101} For a given probability density $\rho\propto e^{-V}$ with $V\in C^2$, there holds \[\int  (|\nabla^2  \phi|^2 + \nabla^2 V[\nabla\phi,\nabla\phi]) \, \rho  dx=\int |\Delta \phi -\langle \nabla\phi,\nabla V\rangle|^2 \, \rho  dx,\] for all $\nabla \phi \in W^{2,p}$, $p>n$. 

\begin{proof} The condition implies \be \label{eq-102-A7} \nabla^2 V = \nabla (-\rho  ^{-1} \nabla \rho  )= \rho^{-2} \nabla\rho \otimes \nabla\rho -\rho^{-1} \nabla^2 \rho .\ee 

Using integration by parts,
\begin{align} \int \nabla^2 V[\nabla \phi,\nabla \phi] \rho  & = \int\rho  ^{-1} \la \nabla\rho  , \nabla \phi\ra ^2 -\nabla^2 \rho  [\nabla \phi,\nabla \phi] ,  \\  
 \int -\nabla^2 \rho  [\nabla \phi,\nabla \phi] &= \int - \la \nabla  \la \nabla\rho ,\nabla \phi \ra ,\nabla \phi \ra  +\nabla_j  \rho  \nabla^2_{ij}\phi \nabla_i\phi , \\ 
&=  \int \la \nabla\rho  ,\nabla \phi \ra \Delta \phi  +\nabla_j  \rho  \nabla^2_{ij}\phi \nabla_i\phi \nn ,\\
\int |\nabla ^2 \phi |^2\rho  &=\int- \nabla _i \phi  (\Delta \nabla _i \phi ) \rho  - \nabla_i \phi  \nabla^2_{ij}\phi  \nabla_j\rho   ,\\
  \int- \nabla _i \phi  (\Delta \nabla _i \phi ) \rho  & =  \int |\Delta \phi | ^2\rho  +\la \nabla \phi  ,\nabla \rho  \ra \Delta \phi.   
\end{align}
Combining all together using the fact that $\nabla^2_{ij}\phi$ is symmetric,

\bea\int  (|\nabla^2  \phi|^2 + \nabla^2 V[\nabla\phi,\nabla\phi])  \rho &=  \int |\Delta \phi|^2 \rho +2 \la \nabla \phi  , \nabla \rho  \ra \Delta \phi + \rho ^{-1} \la \nabla \phi  ,\nabla \rho  \ra^2 \\
&=  \int |\Delta  \phi    - \langle \nabla \phi  ,\nabla V  \ra |^2\rho  \ge 0 .\eea 
 This proves \eqref{eq-101}.

    \end{proof}
\end{lemma}

\subsection{Comparison between notions of convergence} \label{appendix-strong-NI}
\label{app:comparison-notions}

We recall two notions of suboptimality widely used in the literature.

\paragraph{\textbf{Nikaido--Isoda (NI) error.}}
Following \cite[Sec.~1.1]{WC24_MFLDA}, for the entropy-regularized min--max problem
we define the Nikaido--Isoda (NI) error by
\begin{equation}\label{eq:def-NI}
\mathrm{NI}(\mu,\nu)
:=
\max_{\nu'\in\cP(\Tn)} F(\mu,\nu')
\;-\;
\min_{\mu'\in\cP(\Tn)} F(\mu',\nu),
\end{equation}
where
\[
F(\mu,\nu)
=
\iint_{\Tn\times\Tn} f(x,y)\,d\mu(x)\,d\nu(y)
+ H(\mu)-H(\nu),
\qquad
H(\mu)=\int_{\Tn}\log\!\Bigl(\frac{d\mu}{dx}\Bigr)\,d\mu
\]
denotes the (negative) differential entropy.

\paragraph{\textbf{Relative entropy.}}
For probability measures $\mu\ll\pi$, we recall
\[
\mathrm{KL}(\mu\|\pi)
=
\int_{\Tn}\log\!\Bigl(\frac{d\mu}{d\pi}\Bigr)\,d\mu .
\]

As observed in \cite[Eq.~(4)]{WC24_MFLDA}, there exists a constant
$\rho^*>0$ (depending on $f$) such that for all $(\mu,\nu)$,
\begin{equation}\label{eq:W2-KL-NI}
\frac{\rho^*}{2}\Bigl(W_2^2(\mu,\mu^*)+W_2^2(\nu,\nu^*)\Bigr)
\;\le\;
\mathrm{KL}(\mu\|\mu^*)+\mathrm{KL}(\nu\|\nu^*)
\;\le\;
\mathrm{NI}(\mu,\nu),
\end{equation}
where $(\mu^*,\nu^*)$ denotes the (unique) entropy-regularized mixed Nash equilibrium.

\medskip
We now show that, in our setting, convergence in a strong topology implies convergence in NI error.

\begin{lemma}[$C^0$ controls $\mathrm{NI}$]
\label{lem:C1alpha-implies-NI} Consider $\mu_i=\rho_i\,dx$ on $\mathcal{X}=\Tn$ and $\nu_i=\eta_i\,dy$ on $\mathcal{Y}=\Tn$. If there exists $C_1\in [1,\infty)$ such that 
\[ C_1^{-1}\le \rho_1,\rho_2,\eta_1,\eta_2 \le C_1, \text{ on } \Tn ,\]
then there exists $C=C(C_1, \Vert f\Vert_{L^\infty})<\infty$ such that 
\[ |\mathrm{NI}(\mu_1,\nu_1)-\mathrm{NI}(\mu_2,\nu_2)|\le C (\Vert \mu_1-\mu_2\Vert_{L^\infty} + \Vert \nu_1-\nu_2\Vert _{L^\infty}).\]
In particular, by \eqref{eq-C^2bound}, if MFL-DA $(\mu_t,\nu_t)$ converges to $(\mu^*,\nu^*) $ in $C^0$, then $\mathrm{NI}(\mu_t,\nu_t) \to 0$.
\end{lemma}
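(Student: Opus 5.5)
We will compute the two extremal problems in the definition of $\mathrm{NI}$ explicitly via the Gibbs variational principle, which reduces $\mathrm{NI}$ to log-partition functions. For fixed $\nu$, the map $\mu'\mapsto F(\mu',\nu)=\int V_\nu\,d\mu'+H(\mu')-H(\nu)$, with $V_\nu(x)=\int f(x,y)\,d\nu(y)$, is minimized by $\mu'\propto e^{-V_\nu}$, and its minimum value is $-\log Z_\nu - H(\nu)$, where $Z_\nu=\int e^{-V_\nu}dx$. Symmetrically, with $U_\mu(y)=\int f(x,y)\,d\mu(x)$, we have $\max_{\nu'}F(\mu,\nu')=H(\mu)+\log \widetilde Z_\mu$, where $\widetilde Z_\mu=\int e^{U_\mu}dy$. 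This yields the closed form
\[
\mathrm{NI}(\mu,\nu)=H(\mu)+H(\nu)+\log\widetilde Z_\mu+\log Z_\nu .
\]
The lemma then follows by estimating each of the four terms separately in terms of $\|\mu_1-\mu_2\|_{L^\infty}+\|\nu_1-\nu_2\|_{L^\infty}$.

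\textbf{Partition-function terms.} We have $|V_{\nu_1}-V_{\nu_2}|(x)\le \|f\|_{L^\infty}\int|\eta_1-\eta_2|\,dy\le\|f\|_{L^\infty}\|\eta_1-\eta_2\|_{L^\infty}$, since $|\Tn|=1$. Moreover $\log Z_\nu$ is $1$-Lipschitz in $V_\nu$ with respect to the sup norm, because $|\log\int e^{-V}-\log\int e^{-W}|\le\|V-W\|_\infty$. Hence $|\log Z_{\nu_1}-\log Z_{\nu_2}|\le \|f\|_{L^\infty}\|\nu_1-\nu_2\|_{L^\infty}$, and the same bound holds for $\log\widetilde Z_\mu$ with $\mu$ in place of $\nu$.

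\textbf{Entropy terms.} The function $s\mapsto s\log s$ has derivative $1+\log s$, which is bounded by $1+\log C_1$ in absolute value on $[C_1^{-1},C_1]$. The mean value theorem, applied pointwise and then integrated, gives $|H(\mu_1)-H(\mu_2)|\le(1+\log C_1)\|\rho_1-\rho_2\|_{L^\infty}$, and likewise for $\nu$. Summing the four estimates gives the claim with $C=1+\log C_1+\|f\|_{L^\infty}$.

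\textbf{The ``in particular'' statement.} We will use $\mathrm{NI}(\mu^*,\nu^*)=0$, which holds since $(\mu^*,\nu^*)$ is a saddle point. The only genuine obstacle is checking the two-sided density bounds uniformly along the flow. For $\rho^*\propto e^{-V_{\nu^*}}$ such bounds are immediate from $\|f\|_{L^\infty}$; for $\mu_t$ with $t$ large, $C^0$ convergence to $\rho^*$ gives $\rho_t\in[\tfrac12\min\rho^*,\,2\max\rho^*]$ eventually. The cited bound \eqref{eq-C^2bound} presumably supplies these constants directly. Applying the Lipschitz estimate with $(\mu_2,\nu_2)=(\mu^*,\nu^*)$ then gives $\mathrm{NI}(\mu_t,\nu_t)\to0$.
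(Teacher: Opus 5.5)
Your proposal is correct and follows the paper's proof essentially step for step. You use the same entropy duality to reach the same closed form $\mathrm{NI}(\mu,\nu)=H(\mu)+H(\nu)+\log\int e^{U_\mu}\,dy+\log\int e^{-V_\nu}\,dx$, control the entropy terms by the same pointwise mean value bound with constant $1+\log C_1$, and control the partition-function terms via $\|V_{\nu_1}-V_{\nu_2}\|_{L^\infty}\le\|f\|_{L^\infty}\|\eta_1-\eta_2\|_{L^\infty}$ together with the sup-norm $1$-Lipschitz property of $V\mapsto\log\int e^{-V}$. Your handling of the ``in particular'' part, namely $\mathrm{NI}(\mu^*,\nu^*)=0$ plus eventual two-sided density bounds from $C^0$ convergence and the bounds on the equilibrium densities, makes explicit what the paper leaves implicit.
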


\begin{proof}
We use the explicit variational formula for the entropy-regularized best responses.

\medskip
\noindent\emph{Step 1: Entropy duality identities.}
For any bounded measurable function $U$ on $\Tn$, we have the following:
\begin{equation}\label{eq:entropy-dual}
\max_{\nu\in\cP(\Tn)}
\Bigl\{\int U\,d\nu - H(\nu)\Bigr\}
=
\log\int_{\Tn} e^{U(y)}\,dy,
\end{equation}
with maximizer $\nu=Z^{-1}e^{U(y)}\,dy$, where $Z=\int e^{U(y)}dy$.
Similarly, for any bounded $V$,
\begin{equation}\label{eq:entropy-dual-min}
\min_{\mu\in\cP(\Tn)}
\Bigl\{\int V\,d\mu + H(\mu)\Bigr\}
=
-\log\int_{\Tn} e^{-V(x)}\,dx,
\end{equation}
with minimizer $\mu= Z^{-1} e^{-V(x)}dx $, where $Z=\int e^{-V(x)}dx$.
We prove \eqref{eq:entropy-dual}: for any $\nu\ll dx$, write $\nu=\rho\,dx$ and compute
\[
\int g\,d\nu - H(\nu)
=
\int \rho(x)\Bigl(g(x)-\log\rho(x)\Bigr)\,dx.
\]
Maximizing under the constraint $\int\rho\,dx=1$ yields
$\rho(x)\propto e^{g(x)}$, which gives \eqref{eq:entropy-dual}.
The proof of \eqref{eq:entropy-dual-min} is analogous.

\medskip
\noindent\emph{Step 2: Closed form of the NI error.}
For fixed $(\mu,\nu)$ define
\[
U_\mu(y):=\int_{\Tn} f(x,y)\,d\mu(x),
\qquad
V_\nu(x):=\int_{\Tn} f(x,y)\,d\nu(y).
\]
Applying \eqref{eq:entropy-dual}--\eqref{eq:entropy-dual-min} to the definition of
$\mathrm{NI}$ yields
\begin{equation}\label{eq:NI-closed-form}
\mathrm{NI}(\mu,\nu)
=
H(\mu)+H(\nu)
+\log\int_{\Tn} e^{U_\mu(y)}\,dy
+\log\int_{\Tn} e^{-V_\nu(x)}\,dx.
\end{equation}

\medskip
\noindent\emph{Step 3: Estimate.}

Now we work under the assumption of the lemma. Note we have the pointwise estimates $|\rho_1\log \rho_1 - \rho_2\log \rho_2| \le (1+ \log C_1) |\rho_1-\rho_2| $ and $|U_{\mu_1}-U_{\mu_2}|\le \Vert f\Vert_{L^\infty}\Vert \rho_1-\rho_2\Vert _{L^\infty} $. These estimates, together with corresponding estimates for $\nu$, implies
\[ |\mathrm{NI}(\mu_1,\nu_1)-\mathrm{NI}(\mu_2,\nu_2)|\le C'(\Vert \rho_1 -\rho_2\Vert_{L^\infty}+ \Vert \eta_1-\eta_2\Vert_{L^\infty}),\]
for some $C'=C'(C_1,\Vert f\Vert_{L^\infty})$.

\end{proof}

\subsection{Notions of convexity in Wasserstein space}
\label{app:disp-convexity}

The geometry of \(\mathcal P_2(\mathbb T^n)\) requires notions of convexity
adapted to Wasserstein geodesics. We recall the definitions and the smooth
infinitesimal criterion used in Section~\ref{sec-MFLDA}.

\begin{definition}[\(\lambda\)-displacement convexity {\cite[Definition 9.1.1]{AGS08}}]
Let \(\lambda\in\mathbb R\). A functional
\(\mathcal F:\mathcal P_2(\mathbb T^n)\to(-\infty,\infty]\) is
\emph{\(\lambda\)-displacement convex} if, for any
\(\mu_0,\mu_1\in D(\mathcal F)\) and any constant-speed Wasserstein geodesic
\((\mu_t)_{t\in[0,1]}\) connecting them,
\[
\mathcal F(\mu_t)
\le
(1-t)\mathcal F(\mu_0)+t\mathcal F(\mu_1)
-\frac{\lambda}{2}t(1-t)W_2^2(\mu_0,\mu_1)
\qquad\text{for all }t\in[0,1].
\]
We say that \(\mathcal F\) is \(\lambda\)-displacement concave if
\(-\mathcal F\) is \(\lambda\)-displacement convex.
\end{definition}

In the smooth setting, this condition is characterized infinitesimally by a
lower bound for the Wasserstein Hessian. More precisely, if
\((\mu_t)_{t\in[0,1]}\) is a smooth constant-speed Wasserstein geodesic and
\(g(t):=\mathcal F(\mu_t)\) is twice differentiable, then
\(\lambda\)-displacement convexity is equivalent to
\[
g''(t)\ge \lambda W_2^2(\mu_0,\mu_1)
=
\lambda |\dot\mu_t|_{W_2}^2.
\]
Equivalently, for a geodesic starting from an absolutely continuous measure
\(\mu\) with initial velocity \(\nabla\phi\),
\[
|\dot\mu_0|_{W_2}^2=\int |\nabla\phi|^2\,d\mu,
\]
so the infinitesimal Hessian lower bound takes the form
\[
\frac{d^2}{dt^2}\Big|_{t=0}\mathcal F(\mu_t)
\ge
\lambda\int |\nabla\phi|^2\,d\mu .
\]
This is the form used in Lemmas~\ref{lemma-convconc} and
\ref{lemma-locconvconc}. In particular, Lemma~\ref{lemma-convconc} establishes
the Hessian lower/upper bound at the equilibrium, while
Lemma~\ref{lemma-locconvconc} shows that this bound persists locally; hence
\(F\) is locally strongly displacement convex in the \(\mu\)-variable and locally
strongly displacement concave in the \(\nu\)-variable near \((\mu^*,\nu^*)\).

We also use the corresponding tangent inequality, or EVI form, in the smooth
setting. If \(\mathcal F\) is \(\lambda\)-displacement convex and sufficiently
regular, then along the optimal transport map \(T_{\mu\to\nu}\) one has
\[
\mathcal F(\nu)-\mathcal F(\mu)
\ge
\bigl\langle \operatorname{grad}_{W_2}\mathcal F(\mu),
T_{\mu\to\nu}-\mathrm{id}\bigr\rangle_{L^2(\mu)}
+\frac{\lambda}{2}W_2^2(\mu,\nu),
\]
whenever the displayed quantities are well defined. This is the Wasserstein
analogue of the finite-dimensional inequality
\[
F(z')-F(z)\ge \langle \nabla F(z),z'-z\rangle+\frac{\lambda}{2}|z'-z|^2,
\]
and is the form used in the contraction estimates of
Section~\ref{sec-MFLDA} and Appendix~\ref{App-CCstability}.

\subsection{Convex--concave stability of Wasserstein GDA}
\label{App-CCstability}

The following lemma serves as an example showing that a global $\lambda$-displacement convex--concave $F$ gives stability (or exponential convergence if $\lambda>0$) for Wasserstein gradient descent--ascent. 

\begin{lemma}[Convex--concave stability of Wasserstein GDA]\label{lemma-stability-cc}
Assume that for some $\lambda'\in\mathbb R$ the payoff $F(\mu,\nu)$ is
$\lambda'$-displacement convex in $\mu$ and $\lambda'$-displacement concave in $\nu$,
uniformly in the other variable.

Let $(\mu_t,\nu_t)_{t\ge0}$ be an absolutely continuous curve in
$\mathcal P_2(\mathbb T^n)\times\mathcal P_2(\mathbb T^n)$ solving the continuity
equations with velocity fields given by Wasserstein gradient descent--ascent:
\[
\partial_t \mu_t+\nabla\cdot(\mu_t v_t)=0,\qquad v_t=-\grad_{W_2,\mu}F(\mu_t,\nu_t),
\]
\[
\partial_t\nu_t+\nabla\cdot(\nu_t w_t)=0,\qquad w_t=+\grad_{W_2,\nu}F(\mu_t,\nu_t).
\]
Let $(\mu^*,\nu^*)$ be a stationary mixed Nash equilibrium, i.e. it satisfies the saddle condition
\[
F(\mu^*,\nu)\le F(\mu^*,\nu^*)\le F(\mu,\nu^*)\qquad \forall\,(\mu,\nu),
\]
and the stationarity conditions
\[
\grad_{W_2,\mu}F(\mu^*,\nu^*)=0,\qquad \grad_{W_2,\nu}F(\mu^*,\nu^*)=0.
\]
Then for all $t\ge 0$,
\[
W_2^2(\mu_t,\mu^*)+W_2^2(\nu_t,\nu^*)
\le e^{-2\lambda' t}\Bigl(W_2^2(\mu_0,\mu^*)+W_2^2(\nu_0,\nu^*)\Bigr).\]
\end{lemma}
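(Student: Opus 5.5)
The plan is to mimic the finite-dimensional proof that the GDA vector field $z\mapsto(\nabla_x f,-\nabla_y f)$ is strongly monotone. In that proof one integrates along the straight segment joining $z^*$ and $z$; the off-diagonal Hessian blocks cancel because they enter with opposite signs. Here the segment will be replaced by the product of Wasserstein geodesics joining $(\mu^*,\nu^*)$ to $(\mu_t,\nu_t)$. Two cruder routes only give the rate $e^{-\lambda' t}$: using only the tangent inequality at $(\mu_t,\nu_t)$ together with the saddle inequality, or splitting the monotonicity increment along the edges of the unit square in $(s,r)$. The point of the argument below is to integrate along the \emph{diagonal} of that square, which is what yields the factor $2$.

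\textbf{Step 1 (EVI side).} Set $E(t)=W_2^2(\mu_t,\mu^*)+W_2^2(\nu_t,\nu^*)$. Write $T=T_{\mu_t\to\mu^*}$ and $S=T_{\nu_t\to\nu^*}$. By Theorem~\ref{thm:first_variation_W2},
\[
\tfrac12\tfrac{d}{dt}W_2^2(\mu_t,\mu^*)=\langle v_t,\mathrm{id}-T\rangle_{L^2(\mu_t)}=\langle \grad_{W_2,\mu}F(\mu_t,\nu_t),T-\mathrm{id}\rangle_{L^2(\mu_t)},
\]
and similarly
\[
\tfrac12\tfrac{d}{dt}W_2^2(\nu_t,\nu^*)=-\langle \grad_{W_2,\nu}F(\mu_t,\nu_t),S-\mathrm{id}\rangle_{L^2(\nu_t)}.
\]

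\textbf{Step 2 (the two-parameter function).} Let $(\mu_s)_{s\in[0,1]}$ be the constant-speed geodesic from $\mu^*$ to $\mu_t$, and $(\nu_r)_{r\in[0,1]}$ the one from $\nu^*$ to $\nu_t$ (Proposition~\ref{prop:geodesics}). Define $\varphi(s,r):=F(\mu_s,\nu_r)$. By Proposition~\ref{prop:geodesics}, the velocity of $\mu_s$ at $s=1$ is $-(T-\mathrm{id})$ in $L^2(\mu_t)$. Hence
\[
\partial_s\varphi(1,1)=-\langle\grad_{W_2,\mu}F(\mu_t,\nu_t),T-\mathrm{id}\rangle_{L^2(\mu_t)},
\qquad
\partial_r\varphi(1,1)=-\langle\grad_{W_2,\nu}F(\mu_t,\nu_t),S-\mathrm{id}\rangle_{L^2(\nu_t)}.
\]
Combining with Step 1 gives $\tfrac12E'(t)=-g(1)$, where $g(\sigma):=\partial_s\varphi(\sigma,\sigma)-\partial_r\varphi(\sigma,\sigma)$. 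By the stationarity assumption $\grad F(\mu^*,\nu^*)=0$, we also have $g(0)=0$.

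\textbf{Step 3 (monotonicity along the diagonal).} Differentiating,
\[
g'(\sigma)=\varphi_{ss}+\varphi_{sr}-\varphi_{rs}-\varphi_{rr}=\varphi_{ss}-\varphi_{rr},
\]
evaluated at $(\sigma,\sigma)$. The mixed terms cancel because they come only from the bilinear coupling $B(\mu,\nu)=\iint f\,d\mu\,d\nu$. The entropy terms are separable, and $\partial_s\partial_rB(\mu_s,\nu_r)=\partial_r\partial_sB(\mu_s,\nu_r)$ by a direct Fubini computation after pulling back through the maps $T_s,S_r$. The $\lambda'$-displacement convexity of $F(\cdot,\nu_\sigma)$ along $(\mu_s)$ gives $\varphi_{ss}\ge\lambda'W_2^2(\mu^*,\mu_t)$. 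The $\lambda'$-concavity in $\nu$ gives $-\varphi_{rr}\ge\lambda'W_2^2(\nu^*,\nu_t)$. Hence $g'\ge\lambda'E$, so $g(1)\ge g(0)+\lambda'E=\lambda'E$. This yields $E'\le-2\lambda'E$, and Gr\"onwall's inequality gives the claim.

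\textbf{Main obstacle: regularity.} The delicate step is justifying that $\varphi$ is $C^2$ (or at least that $\partial_s\varphi(\cdot,\sigma)$ is $\lambda'$-monotone with the stated one-sided bounds), together with the endpoint identities of Step 2.

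For $t>0$ I will use the instantaneous smoothing of Theorem~\ref{thm:main_appendix}: the density of $\mu_t$ is positive and regular, and $\mu^*\propto e^{-V_{\nu^*}}$ is $C^2$ and positive. Lemma~\ref{lemma-IFT2} and the Brenier potential then give geodesics of the form $(\mathrm{id}+s\nabla\phi)_\#\mu^*$ with $\nabla\phi$ sufficiently regular, so that Lemma~\ref{lem:first-second-var} applies at every interior point.

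If full $C^2$ regularity is not available, I would avoid second derivatives entirely. Instead I would use the integrated form of $\lambda'$-convexity, namely $\partial_s\varphi(b,\cdot)-\partial_s\varphi(a,\cdot)\ge\lambda'(b-a)W_2^2$, applied on small diagonal increments $[\sigma,\sigma+h]^2$. The mixed increments then cancel exactly up to $o(h)$ by the bilinearity of $B$, and summing over a partition gives the same bound for $g(1)-g(0)$. Finally, the estimate holds for all $t\ge t_0>0$; letting $t_0\downarrow0$ and using $W_2$-continuity of the solution extends it to all $t\ge0$.
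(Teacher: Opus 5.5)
Your argument is correct in substance, but it takes a different route from the paper. The paper never integrates along the diagonal. Instead it adds four first-order above-tangent inequalities:
\begin{itemize}
\item two at $(\mu_t,\nu_t)$, which combined with Theorem~\ref{thm:first_variation_W2} give the EVIs $\tfrac12\tfrac{d}{dt}W_2^2(\mu_t,\mu^*)\le F(\mu^*,\nu_t)-F(\mu_t,\nu_t)-\tfrac{\lambda'}{2}W_2^2(\mu_t,\mu^*)$ and the analogous bound for $\nu$;
\item two at $(\mu^*,\nu^*)$, where stationarity removes the gradient terms and gives $F(\mu^*,\nu_t)-F(\mu_t,\nu^*)\le-\tfrac{\lambda'}{2}\mathcal E(t)$.
\end{itemize}
In your notation, the off-diagonal corner values $\varphi(0,1)=F(\mu^*,\nu_t)$ and $\varphi(1,0)=F(\mu_t,\nu^*)$ simply telescope away. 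This telescoping does the job that your identity $\varphi_{sr}=\varphi_{rs}$ does.

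So your claim that the tangent-inequality route only yields $e^{-\lambda' t}$ holds only if one closes with the saddle inequality. Using the tangent inequality at the equilibrium instead already produces the factor $2$; this is exactly the paper's Step~3.

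What the paper's route buys is economy:
\begin{itemize}
\item it uses only first-order information at the two endpoints of the geodesics;
\item it works for an abstract $\lambda'$-convex--concave $F$;
\item it needs neither regularity of $\varphi$ inside the square nor the bilinear structure of the coupling.
\end{itemize}
Your route makes the finite-dimensional ``off-diagonal Hessian blocks cancel'' mechanism explicit. The price is a second-order, Schwarz-type regularity requirement on $\varphi$.

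On that regularity step, one correction. Lemma~\ref{lemma-IFT2} is a perturbative inverse-function-theorem result, valid only when $\|\rho_t-\rho_*\|_{W^{k,p}}$ is small. Here $\mu_t$ may be far from $\mu^*$: the lemma is global and is used in Lemma~\ref{lemma-stability} for arbitrary data and possibly negative $\lambda'$. So it does not give you regular geodesics.

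You do not actually need it. Pulling back, $B(\mu_s,\nu_r)=\iint f\bigl(x+s\nabla\phi(x),\,y+r\nabla\psi(y)\bigr)\,d\mu^*(x)\,d\nu^*(y)$ is $C^2$ on $[0,1]^2$ for merely bounded Brenier displacements, because $f\in C^2$. The entropy contributions are separable in $s$ and $r$, and convex along geodesics on the flat torus. Only the endpoint derivatives at $\sigma=0$ and $\sigma=1$ require a regular density, and this holds for $\rho_*$ and for $\rho_t$ with $t>0$. With these observations your fallback goes through: treat the entropy terms' second derivatives distributionally, and use the exact cancellation of the $B_{sr}$ and $B_{rs}$ contributions.
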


\begin{proof}
Set $\mathcal E(t):=W_2^2(\mu_t,\mu^*)+W_2^2(\nu_t,\nu^*)$.

\smallskip
\noindent\emph{Step 1: (showing EVI once) the $\mu$-part.}
Fix $t$ and consider the $\lambda'$-displacement convex functional
$\mu\mapsto F(\mu,\nu_t)$. The above-tangent inequality \cite[(10.1.7)]{AGS08} yields
\begin{equation}\label{eq:AT-mu-once}
F(\mu^*,\nu_t)-F(\mu_t,\nu_t)
\ge \big\langle \grad_{W_2,\mu}F(\mu_t,\nu_t),\,T_{\mu_t\to\mu^*}-id\big\rangle_{L^2_{\mu_t}}
+\frac{\lambda'}{2}W_2^2(\mu_t,\mu^*).
\end{equation}
On the other hand, since $t\mapsto\mu_t$ is absolutely continuous and
$\partial_t\mu_t+\nabla\cdot(\mu_t v_t)=0$, the first variation formula
\cite[Thm.~8.4.7]{AGS08}, namely {Theorem~\ref{thm:first_variation_W2}}, gives for a.e.\ $t$,
\begin{equation}\label{eq:1stvar-mu-once}
\frac{d}{dt}\frac12 W_2^2(\mu_t,\mu^*)
=\big\langle -v_t,\,T_{\mu_t\to\mu^*}-id\big\rangle_{L^2_{\mu_t}}.
\end{equation}
Using $v_t=-\grad_{W_2,\mu}F(\mu_t,\nu_t)$ and combining
\eqref{eq:AT-mu-once}--\eqref{eq:1stvar-mu-once}, we obtain the EVI
\begin{equation}\label{eq:EDI-mu}
\frac{d}{dt}\frac12 W_2^2(\mu_t,\mu^*)
\le F(\mu^*,\nu_t)-F(\mu_t,\nu_t)-\frac{\lambda'}{2}W_2^2(\mu_t,\mu^*).
\end{equation}

\smallskip
\noindent\emph{Step 2: the $\nu$-part (same argument).}
Since $\nu\mapsto -F(\mu_t,\nu)$ is $\lambda'$-displacement convex and
$w_t=+\grad_{W_2,\nu}F(\mu_t,\nu_t)$, applying the same argument to the descent flow of
$-F(\mu_t,\cdot)$ yields for a.e.\ $t$,
\begin{equation}\label{eq:EDI-nu}
\frac{d}{dt}\frac12 W_2^2(\nu_t,\nu^*)
\le F(\mu_t,\nu_t)-F(\mu_t,\nu^*)-\frac{\lambda'}{2}W_2^2(\nu_t,\nu^*).
\end{equation}

Summing \eqref{eq:EDI-mu} and \eqref{eq:EDI-nu} gives
\begin{equation}\label{eq:lyap-pre}
\frac{d}{dt}\frac12\mathcal E(t)
\le -\frac{\lambda'}{2}\mathcal E(t)+\Bigl(F(\mu^*,\nu_t)-F(\mu_t,\nu^*)\Bigr).
\end{equation}

\smallskip
\noindent\emph{Step 3: improved cancellation at the stationary saddle point.}
Apply the above-tangent inequality to the $\lambda'$-displacement convex map
$\mu\mapsto F(\mu,\nu^*)$ at the point $\mu^*$ and use
$\grad_{W_2,\mu}F(\mu^*,\nu^*)=0$ to obtain
\[
F(\mu_t,\nu^*)-F(\mu^*,\nu^*)\ge \frac{\lambda'}{2}W_2^2(\mu_t,\mu^*),
\quad\text{i.e.}\quad
F(\mu^*,\nu^*)-F(\mu_t,\nu^*)\le -\frac{\lambda'}{2}W_2^2(\mu_t,\mu^*).
\]
Similarly, since $\nu\mapsto F(\mu^*,\nu)$ is $\lambda'$-concave and
$\grad_{W_2,\nu}F(\mu^*,\nu^*)=0$, applying above-tangent to $-F(\mu^*,\cdot)$ yields
\[
F(\mu^*,\nu_t)-F(\mu^*,\nu^*)\le -\frac{\lambda'}{2}W_2^2(\nu_t,\nu^*).
\]
Adding these inequalities gives
\[
F(\mu^*,\nu_t)-F(\mu_t,\nu^*)
\le -\frac{\lambda'}{2}\mathcal E(t).
\]
Combining with \eqref{eq:lyap-pre} we obtain
\[
\frac{d}{dt}\mathcal E(t)\le -2\lambda'\mathcal E(t),
\]
and Gr\"onwall's inequality yields the claim.
\end{proof}

\subsection{MFL-DA on other state spaces}\label{appendix-manifold}

\paragraph{\textbf{Closed Riemannian manifolds}}

For simplicity of exposition, we restrict throughout the paper to the flat torus $\mathcal X=\mathcal Y=\Tn$. Nevertheless, the analysis extends with only minor modifications to more general state spaces. In particular, the same arguments apply when $\mathcal X$ and $\mathcal Y$ are
compact Riemannian manifolds without boundary, possibly of different dimensions.

In this setting, all differential operators are understood in the Riemannian
sense: $\nabla$ denotes the Levi--Civita covariant derivative, $\nabla^2$ the
covariant Hessian, and $\Delta$ the Laplace--Beltrami operator.
With these conventions, the second variation formula
\eqref{eq-secondvariation} is modified by an additional curvature term, and
takes the form
\[
\int \Bigl(
|\nabla^2\phi|^2
+
(\nabla^2 V_{\nu^*}+\mathrm{Ric})[\nabla\phi,\nabla\phi]
\Bigr)\,d\mu^*,
\]
where $\mathrm{Ric}$ denotes the Ricci curvature tensor of the underlying
manifold.
Such curvature terms are well known in the literature and are often exploited
to establish strong displacement convexity of entropy-type functionals.

In our case, however, this additional curvature contribution does not play a
special role.
Indeed, by the Bochner identity, the above expression can again be rewritten as
\[
\int \bigl|
\Delta \phi - \langle\nabla\phi,\nabla V_{\nu^*}\rangle
\bigr|^2\,d\mu^*,
\]
exactly as in the flat torus case.
As a consequence, the spectral-gap and coercivity arguments used in the proof of
Lemma~\ref{lemma-convconc} carry over verbatim.

The description of Wasserstein geodesics is also modified in a standard way.
Given $\mu\in\mathcal P_{ac}(\mathcal X)$ and a potential $\phi$, the constant-speed
Wasserstein geodesic starting from $\mu$ in the direction $\nabla\phi$ is given by
\[
s\longmapsto (\exp_x(s\nabla\phi(x)))_\#\mu,
\]
where $\exp_x$ denotes the Riemannian exponential map.
Apart from this change, the geometric structure of Wasserstein space and the formulation of gradient descent--ascent remain unchanged. For a systematic treatment of optimal transport and Wasserstein geometry on Riemannian manifolds, we refer readers to \cite{villani2008optimal,AGS08,bakry2013analysis,Lott2008GeometricCalculationsWasserstein}.

Finally, the parabolic and elliptic PDE arguments used throughout the paper are local in nature. On a compact manifold, these equations can be written in local charts as uniformly parabolic or elliptic equations with smooth coefficients. Standard regularity theory therefore applies without essential modification.

\paragraph{\textbf{The whole space $\mathbb{R}^n$.}}
Extending the analysis to non-compact domains such as $\mathbb{R}^n$ is more
delicate.
In this case, it is well known that additional coercivity assumptions are
required in order to recover basic functional inequalities.
Following earlier works (e.g. \cite{Kim2024SymmetricML}), one natural assumption is a \emph{quadratic growth at infinity} condition on the effective potential, which ensures that the invariant measure satisfies a Poincar\'e inequality. Under such a condition, key Lemma~\ref{lemma-convconc} (spectral gap at Nash equilibrium) is again expected to hold, since it is equivalent to the Poincar\'e inequality, as discussed in Remark~\ref{remark-alternative}.

The main remaining difficulty lies in the subsequent steps of the analysis. On compact manifolds, we repeatedly exploit uniform upper and lower bounds on densities, together with standard Sobolev embeddings and stability properties of optimal transport maps. On $\mathbb{R}^n$, such uniform bounds are no longer available in general. To carry out a parallel analysis, one would need to work with weighted Sobolev or H\"older spaces adapted to the invariant measure, and to develop corresponding stability estimates for optimal transport maps in the non-compact setting. 

While we believe that these issues are primarily technical rather than conceptual, addressing them rigorously would require a substantially more involved functional-analytic framework. For this reason, we do not pursue the $\mathbb{R}^n$ case here and leave a systematic treatment to future work.

\newpage

\section{Full proofs of lemmas and propositions in main text}\label{appendix-detailedproofs}

We provide complete proofs of statements sketched in the main text. Each proof will follow after the statement is recalled.

\subsection{Postponed proofs for mean-field stability}\label{app-MFLDA}
\noindent\textbf{Lemma~\ref{lemma-convconc} (Spectral gap for Wasserstein Hessian at Nash equilibrium)}
\emph{ Let $(\mu^*,\nu^*)$ be the unique MNE. There exists $\lambda_{gap}=\lambda_{gap}(f)>0$ such that, for any vector fields $\nabla \phi$ and $\nabla \psi \in W^{2,p}$ with $p>n$, there hold
\begin{align} \label{eq-conv1'}\frac{d^2}{dt^2}\Big \vert_{t=0}F((id+t \nabla \phi)_\# \mu^* , \nu^*) &\ge +\lambda_{gap} \int |\nabla \phi |^2 d\mu^* , \\  \label{eq-conc1'}
\frac{d^2}{dt^2}\Big \vert_{t=0} F(\mu^* , (id+t \nabla \psi)_\# \nu^*) &\le -\lambda_{gap} \int |\nabla \psi |^2 d\nu^* .\end{align}}
{
\begin{proof}
 We prove \eqref{eq-conv1'}. A proof of \eqref{eq-conc1'} follows by considering $\hat  F (\nu,\mu):= -F(\mu,\nu)$ and repeat the same proof of \eqref{eq-conv1'}; see the last sentence of this proof. Let us define $V_{\nu^*}(x):= \int_{\mathcal{Y}} f(x,y) d\nu^*(y)$ and 
\[G( \mu  ) := F(\mu  , \nu^*)= H(\mu)+\int V_{\nu^*}(x) d\mu(x) - H(\nu^*).\]
Let us denote $\frac{d \mu^*}{dx}=:\rho_*$. It is known that  $\rho_*  \propto e^{-V_{\nu^*} }$  \cite{risken1989fpe}. More precisely, $\rho_*= e^{-V_{\nu^*}} / \int e^{-V_{\nu^*}}dx$. Similarly, $\frac{d \nu^*}{dy}$ satisfies $\frac{d \nu^*}{dy} =  e^{+U_{\mu^*}}/ \int e^{+U_{\mu^*}}dy$ for $U_{\mu^*}(y):= \int _{\mathcal{X}}f(x,y) d\mu^*(x) $.  We also note that the MNE $(\mu^*,\nu^*)$ has $C^2$ densities:  \begin{equation}\label{eq-C^2bound} C^{-1}\le \frac{d\mu^*}{dx}, \, \frac{d\nu^*}{dy} \le C, \quad \Big \Vert \frac{d\mu^*}{dx}\Big \Vert_{C^2},\Big \Vert \frac{d\nu^*}{dy}\Big \Vert_{C^2} \le C, \quad \text{ for some }C=C(\Vert f\Vert_{C^2})<\infty,   
\end{equation}
which follows from the observation $\Vert U_{\mu^*}\Vert_{C^2(\mathcal{Y})}$, $\Vert V_{\nu^*}\Vert_{C^2(\mathcal{X})}\le\Vert f\Vert_{C^2(\mathcal{X}\times \mathcal{Y})}$.

For $\nabla \phi \in W^{2,p}$ for $p>n$, we have the second variation formula 
\begin{equation} \label{eq-secondvariation}\frac{d^2}{d t ^2}\Big\vert_{t=0} G((id+t\nabla \phi )_\# \mu^* )= \int  (|\nabla^2  \phi|^2 + \nabla^2 V_{\nu^*}[\nabla\phi,\nabla\phi]) \, \rho_* dx . \end{equation}Here, $|\nabla ^2 \phi|^2= \sum_{i,j} \nabla^2_{ij}\phi \nabla^2_{ij}\phi $ and $\nabla^2 V_{\nu^*}[\nabla\phi,\nabla\phi] = \sum_{i,j} \nabla^2_{ij}V_{\nu^*} \nabla _i\phi \nabla_j\phi  = \langle \nabla^2 V_{\nu^*} \nabla \phi ,\nabla \phi \rangle $. This formula has been well-known in the literature: e.g., formal derivations in  \cite[Formula 15.7]{villani2008optimal}, \cite[Section 3.1]{MR2053570}. Here the condition $\nabla \phi \in W^{2,p}$ is assumed for a rigorous justification shown in Lemma~\ref{lem:first-second-var}. Since $\rho_* \propto e^{-V_{\nu^*}}$, using regularities of $\rho_*$ and $V_{\nu^*}$,  \be \label{eq-102} \nabla^2 V_{\nu^*} = \nabla (-\rho_*  ^{-1} \nabla \rho_*  )= \rho_*^{-2} \nabla\rho_* \otimes \nabla\rho_* -\rho_*^{-1} \nabla^2 \rho_* .\ee 
After several applications of integration by parts using \eqref{eq-102}, we have 
\be \label{eq-101}\int  (|\nabla^2  \phi|^2 + \nabla^2 V_{\nu^*}[\nabla\phi,\nabla\phi]) \, \rho_* dx= \int |\Delta \phi -\langle \nabla\phi,\nabla V_{\nu^*}\rangle|^2 \,\rho_*dx.\ee 
See Lemma~\ref{lemma-A7eq101} where we derive \eqref{eq-101}. Later the authors learned that this observation is already well-known in the literature; e.g., \cite[Ch 1.16.1]{bakry2013analysis}.

Let us give a proof of \eqref{eq-conv1'}. We aim to prove a stronger statement that the constant \begin{equation} \label{eq-lambdagap}\lambda_{gap}:=\inf_{\phi \in W^{2,2}, \nabla \phi \neq 0 } \frac{\int  (|\nabla^2  \phi|^2 + \nabla^2 V_{\nu^*}[\nabla\phi,\nabla\phi]) \, \rho_* dx}{\int   |\nabla \phi|^2 \,\rho_*dx } \end{equation} is {positive} and the infimum is achieved by some $\phi_0 \in W^{2,2}$. The result \eqref{eq-conv1'} follows since $W^{3,p}\subset W^{2,2}$ for $p>n$. From its definition and \eqref{eq-101}, $\lambda_{gap}$ is finite non-negative number and  there exists a sequence $\{\phi_i\}_{i=1}^\infty\subset W^{2,2}$ such that the quotient on the right hand side of \eqref{eq-lambdagap} converges to $\lambda_{gap}$. For such a sequence we may further assume $\int |\nabla \phi_i|^2 \rho_*dx =1$ and $\int \phi_i dx  =0 $ since replacing $\phi_i$ by $\bar\phi_i= \beta_i (\phi_i-\alpha_i)$, for $\alpha_i=\int \phi_i dx $, $\beta_i^{-2}=  \int |\nabla \phi_i|^2 \rho_*dx $,  does not change the quotient.

From the previous observations and the choice $\phi_i$, the numerator of quotient is bounded uniformly in $i$. In particular, this implies $\int |\nabla ^2 \phi_i |^2 dx \le C<\infty$.
By weak-compactness Theorem (Banach-Alaoglu) and Rellich-Kondrachov compactness theorem, stated in Theorem~\ref{thm:banach_alaoglu} and  {Theorem~\ref{thm:rellich}} respectively, there exists a subsequence $\{\phi_{n_j}\}\subset W^{2,2}$ and $\phi_0 \in W^{2,2}$ such that $\phi_{n_j}$ converges to $\phi_0$ weakly in $W^{2,2}$ and strongly in $W^{1,2}$. The strong convergence implies $  \int |\nabla \phi_0|^2 \,\rho_*= \lim_{j\to \infty} \int |\nabla \phi_{n_j}|^2 \, \rho_*=1 $. Due to lower-semi-continuity of $L^2$-norm under $L^2$ weak convergence, 
\[\int  (|\nabla^2  \phi_0|^2 + \nabla^2 V_{\nu^*}[\nabla\phi_0,\nabla\phi_0]) \, \rho_*  \le \liminf _{j\to \infty} \int   (|\nabla^2  \phi_{n_j}|^2 + \nabla^2 V_{\nu^*}[\nabla\phi_{n_j},\nabla\phi_{n_j}]) \, \rho_*  =\lambda_{gap}.\]
By the definition of $\lambda_{gap}$, the previous inequality has to be an equality. This shows that $\lambda_{gap}\ge0$ is achieved by some $\phi_0\in W^{2,2}$. It remains to prove $\lambda_{gap}>0$. If $\lambda_{gap}=0$, \eqref{eq-101} implies  
\[\Delta \phi_0 - \langle \nabla \phi_0 ,\nabla V_{\nu^*}\rangle =0,\]
which implies $\nabla \cdot (e^{-V_{\nu^*}} \nabla \phi_0)=0$. By multiplying with $\phi_0$ and then integrating by parts,
\begin{equation} \label{eq-energyargument} 0 = \int \phi_0 \nabla \cdot (e^{-V_{\nu^*}} \nabla \phi_0)=-\int e^{-V_{\nu^*}} | \nabla \phi_0 |^2 . \end{equation}
This contradicts $\Vert \nabla\phi_0 \Vert_{L^2_{\mu^*}}=1$ and thereby finishes \eqref{eq-conv1'}.

For \eqref{eq-conc1'}, following similarly as suggested in the beginning, we obtain $\bar \lambda_{gap}>0$ so that \eqref{eq-conc1'} holds with equality for some $\bar \phi_0\in W^{2,2}$. We choose $\lambda_{gap}$ as the minimum of two resulting $\lambda_{gap}$ s obtained. 
\end{proof}
}

\bigskip 

\begin{remark}[Poincar\'e inequality and an alternative route]
\label{remark-alternative}
Once \eqref{eq-secondvariation} and \eqref{eq-101} are established, the remaining step in the proof of Lemma~\ref{lemma-convconc} may also be phrased in terms of the spectral gap of an associated weighted scalar Laplacian. More precisely, let
\[
\mathcal A_{\mu^*}\phi
:=
-\Delta \phi+\langle \nabla \phi,\nabla V_{\nu^*}\rangle
\]
be the nonnegative self-adjoint operator on \(L^2(\mu^*)\), where
\(d\mu^*=Z^{-1}e^{-V_{\nu^*}}\,dx\). The proof shows that \eqref{eq-conv1} in Lemma~\ref{lemma-convconc} follows by establishing the estimate
\begin{equation}\label{eq-equiv11}
\int |\nabla \phi|^2\,d\mu^*
\;\le\;
\lambda_{\rm vec}^{-1}
\int |\mathcal A_{\mu^*}\phi|^2\,d\mu^*
\end{equation}
for some \(\lambda_{\rm vec}=\lambda_{\rm vec}(f)>0\). This gap should be distinguished from the scalar Poincar\'e gap, namely a constant
\(\lambda_{\rm sc}>0\) such that
\begin{equation}\label{eq-equiv12}
\int \phi^2\,d\mu^*
-
\Bigl(\int \phi\,d\mu^*\Bigr)^2
\;\le\;
\lambda_{\rm sc}^{-1}
\int |\nabla \phi|^2\,d\mu^* .
\end{equation}
By the spectral theorem, or equivalently by the heat-flow argument in
\cite[Prop.~4.8.3]{bakry2013analysis}, these two gaps are equivalent: \eqref{eq-equiv11} holds for a given \(\lambda>0\) if and only if \eqref{eq-equiv12} holds for the same \(\lambda\).

The existence of \(\lambda_{\rm sc}>0\) in \eqref{eq-equiv12} is a classical consequence of the spectral theory of weighted scalar Laplacians. Indeed, the largest possible \(\lambda_{\rm sc}\) is the first nonzero eigenvalue, or equivalently the spectral gap, of \(\mathcal A_{\mu^*}\) on \(L^2_0(\mu^*)\)\footnote{The operator \(\mathcal A_{\mu^*}\) always has \(0\) as an eigenvalue, with constant eigenfunctions. Here the (positive) spectral gap refers to the first nonzero eigenvalue of \(\mathcal A_{\mu^*}\), equivalently the spectral gap of its restriction to the zero-mean subspace \(L^2_0(\mu^*)\).}. The Holley--Stroock perturbation theorem \cite[Prop.~4.2.7]{bakry2013analysis} gives a way to estimate this scalar gap from below: it applies to the weighted measure
\(\mu^*=Z^{-1}e^{-V_{\nu^*}}dx\) as a bounded perturbation of Lebesgue measure on
\(\mathbb T^n\). Since the unweighted Laplacian \(-\Delta\) on \(\mathbb T^n\) has spectral gap
\(4\pi^2\), the Holley--Stroock yields
\[
\lambda_{\rm sc}
\ge
4\pi^2 e^{-\operatorname{osc} V_{\nu^*}}
\ge
4\pi^2 e^{-\operatorname{osc} f}
>0, \quad \text{where } \operatorname{osc} f=\sup f - \inf f \ge 0.
 \]
By the equivalence above, the same lower bound gives \eqref{eq-equiv11}. Applying the same argument to the \(\nu\)-variable, and taking the minimum of the two constants, one obtains Lemma~\ref{lemma-convconc}; in particular, one may take a positive constant bounded below by \(4\pi^2 e^{-\operatorname{osc} f}\).

We nevertheless present the argument from \eqref{eq-lambdagap} to
\eqref{eq-energyargument} to keep the proof self-contained. Once the required identities and estimates are established, the remaining spectral step is just the transparent functional-analytic compactness argument presented in the proof of the lemma.
\end{remark}

\bigskip

\noindent\textbf{Lemma~\ref{lemma-locconvconc} (Local convex--concavity)}
\emph{ For every small $\eta >0$ and $p>n $, there exists $\delta>0$ such that if $\mu = (T_{\mu^*\to \mu})_\# \mu^*  $, $\nu = (T_{\nu^*\to \nu})_\# \nu^*$ with $\Vert T_{\mu^*\to \mu} -id \Vert_{W^{2,p}}$, $\Vert T_{\nu^*\to \nu} -id \Vert_{W^{2,p}}\le \delta$, then for any $\nabla \phi$ and $\nabla \psi \in W^{2,p}$, there hold
\begin{align}\label{eq-conv2'} \frac{d^2}{dt^2}\Big \vert_{t=0}F((id+t \nabla \phi)_\# \mu , \nu) \ge +(1-\eta)\lambda_{gap} \int |\nabla \phi |^2 d\mu ,\\
    \label{eq-conc2'} \frac{d^2}{dt^2}\Big \vert_{t=0} F(\mu , (id+t \nabla \psi)_\# \nu) \le -(1-\eta)\lambda_{gap} \int |\nabla \psi |^2 d\nu .\end{align}  
Here, $\lambda_{gap}=\lambda_{gap}(f)>0$ is the constant from Lemma~\ref{lemma-convconc} and $\delta $ depends on $f$, $p$ and $\eta$.}
{
\begin{proof}
 Let us prove \eqref{eq-conv2'} as \eqref{eq-conc2'} similarly follows. 
    Recall we have $0<C^{-1}\le \frac {d\mu^*}{dx},\frac{d\nu^*}{dx} \le C$ for some $C=C(f)<\infty$ on $\Tn$. Observe that, for every $\xi >0$, there exists $\delta>0$ such that $\mu$ and $\nu$ in the statement satisfies $\Vert \mu-\mu^*\Vert _{W^{1,p}}+\Vert \nu-\nu^*\Vert _{W^{1,p}}\le \xi $ (see Lemma~\ref{lemma:sobolev_stability}). Using Morrey embedding $W^{1,p} \hookrightarrow C^{0,\alpha}$, after possibly replacing $\delta>0$ by smaller one, we may get $\Vert \mu-\mu^*\Vert _{C^0}+\Vert \nu-\nu^*\Vert _{C^0}\le \xi $. Let us prove the statement. Recall \eqref{eq-secondvariation} (Lemma~\ref{lem:first-second-var}) that, for $V_\nu(x) = \int _{\mathcal{Y}}f(x,y) \,d\nu(y)$, there holds
\[ \frac{d^2}{dt^2}\Big \vert_{t=0}F((id+t \nabla \phi)_\# \mu , \nu) = \int |\nabla^2  \phi|^2 + \nabla ^2 V_{\nu}[\nabla \phi ,\nabla \phi] \, d\mu .\]
Let us denote $\frac{d\mu}{dx}=\rho$. For a fixed small $\xi>0$,
    \bea & \int ( |\nabla^2  \phi|^2 + \nabla ^2 V_{\nu}[\nabla \phi ,\nabla \phi] )\rho = (1-\xi  )  \int( |\nabla^2  \phi|^2 + \nabla ^2 V_{\nu^*}[\nabla \phi ,\nabla \phi] )\rho_* \\
      &+ \int (\rho -(1-\xi )\rho_*) |\nabla^2 \phi|^2 + (\rho \nabla^2 V_{\nu}-(1-\xi )\rho_*\nabla^2 V_{\nu^*} )[\nabla \phi,\nabla \phi ]  \eea 
We rearrange the matrices in the last term 
      \bea \rho  \nabla^2 V_{\nu}-(1-\xi )\rho_*\nabla^2 V_{\nu^*} = \rho (\nabla^2 V_{\nu}-\nabla^2 V_{\nu^*})+ (\rho-\rho_*) \nabla^2 V_{\nu^*} +\xi  \rho_* \nabla^2 V_{\nu^*} .\eea 
      Note that $\nabla^2 V_{\nu}(x) =\int \nabla^2_xf(x,y)d\nu(y) $ and $\nabla^2 V_{\nu}\in C^0 $. Using $f\in C^2$, we conclude that there exists $C=C(f)<\infty$ such that for given $\xi>0$ there exists $\delta>0$ small such that, for all $x\in \Tn$,
      \[ \Big \Vert \frac { \rho(x)  \nabla^2 V_{\nu}(x)-(1-\xi )\rho_*(x)\nabla^2 V_{\nu^*}(x) }{\rho_*(x)}\Big \Vert 
 \le C\xi ,\]  and $ \rho(x) -(1-\xi )\rho_*(x) \ge 0 $. Here $\Vert \cdot \Vert $ is the operator norm of matrix. Combining above,
\bea\label{eq-1677}\int ( |\nabla^2  \phi|^2 + \nabla ^2 V_{\nu}[\nabla \phi ,\nabla \phi] )\rho \ge ((1-\xi) \lambda_{gap} - C \xi) \int |\nabla\phi|^2 \rho _*    .\eea
For $\eta>0$, choose $\xi>0$ and $\delta>0$ so that $(1-\xi) \lambda_{gap} - C \xi \ge( 1-\frac1 2 \eta)\lambda_{gap} $ and $\Vert 1-\frac{\rho_*}{\rho}\Vert_{L^\infty} \le \frac12 \eta $. 
\[\int ( |\nabla^2  \phi|^2 + \nabla ^2 V_{\nu}[\nabla \phi ,\nabla \phi] )\rho  \ge (1-\frac12 \eta)(1-\frac 12 \eta )\lambda_{gap} \int |\nabla \phi |^2 \rho , \]
and the lemma follows as $(1-\frac12 \eta)^2 \ge (1-\eta)$.
\end{proof}
}

\bigskip 

\noindent\textbf{Proposition~\ref{prop-main} (Local Contraction)} \emph{Fix $p>n$ and $\lambda\in(0,\lambda_{gap})$.
There exists $\varepsilon=\varepsilon(f,p,\lambda)>0$ such that the following holds. Let $(\mu_t,\nu_t)$ be a solution on a time interval $[a,b]$ with $0\le a<b$.
Assume that for all $t\in[a,b]$,
\begin{equation}\label{eq-prop-main_2}
\|\mu_t-\mu^*\|_{W^{1,p}}+\|\nu_t-\nu^*\|_{W^{1,p}}\le \varepsilon.
\end{equation}
Then, for a.e.\ $t\in[a,b]$,
\begin{equation}\label{eq-lyapunov'_2}
\frac{d}{dt}\Bigl(W_2^2(\mu_t,\mu^*)+W_2^2(\nu_t,\nu^*)\Bigr)
\le
-2\lambda\Bigl(W_2^2(\mu_t,\mu^*)+W_2^2(\nu_t,\nu^*)\Bigr).
\end{equation}
}
\begin{proof}
Fix $p>n$ and $\lambda\in(0,\lambda_{gap})$. Choose $\eta\in(0,1)$ such that $\lambda \le (1-\eta)\lambda_{gap}$. Let $\delta=\delta(f,p,\eta)$ be as in Lemma~\ref{lemma-locconvconc} (local convex--concavity).

\smallskip
\noindent{Step 0. Control on the regularity and smallness of optimal transport map.}

\noindent By Lemma~\ref{lemma-IFT2}, taking $\varepsilon>0$ small enough we make sure \eqref{eq-prop-main_2} implies: (i) $\mu_t,\nu_t$ remain uniformly bounded away from $0$ and $\infty$ on $\Tn$, and (ii) the optimal maps $T_{\mu^*\to\mu_t}$ and $T_{\nu^*\to\nu_t}$ satisfy
\[
\|T_{\mu^*\to\mu_t}-id\|_{W^{2,p}}+\|T_{\nu^*\to\nu_t}-id\|_{W^{2,p}} \le \delta.
\]
As a consequence, for every $s\in[0,1]$, the displacement interpolations (a constant speed geodesic)
\begin{equation}\label{eq-1066}
\mu_{t,s}:=\bigl(id+s(T_{\mu^*\to\mu_t}-id)\bigr)_\#\mu^*,
\qquad
\nu_{t,s}:=\bigl(id+s(T_{\nu^*\to\nu_t}-id)\bigr)_\#\nu^*
\end{equation}
also stay in the $\delta$-neighborhood required by Lemma~\ref{lemma-locconvconc}.

\smallskip

\noindent{Step 1. A local above-tangent inequality for the $\mu$-part.}

\noindent Fix $t\in[a,b]$.
Let $T_{\mu_t\to\mu^*}=id+\nabla\tilde\phi_t$ be the optimal transport map from $\mu_t$ to $\mu^*$.
Define a constant speed geodesic from $\mu_t$ to $\mu^*$ by
\[
\tilde\mu_{t,s}:=(id+s\nabla\tilde\phi_t)_\#\mu_t,\qquad s\in[0,1],
\]
so that $\tilde\mu_{t,0}=\mu_t$ and $\tilde\mu_{t,1}=\mu^*$. Using $T_{\mu_t \to \mu^*} = (T_{\mu^*\to \mu_t})^{-1}$ and \eqref{eq-1066}, we observe that  $\mu_{t,s}=\tilde \mu_{t,1-s}$ and similarly $\nu_{t,s}=\tilde \nu_{t,1-s}$.  Let
\[
\ell(s):=F(\tilde\mu_{t,s},\nu_t).
\]
In view of Lemma~\ref{lem:first-second-var} in Appendix~\ref{appendix-variationformula}, it is straightforward to see $s\mapsto \ell(s)$ is $C^2$ and thus Taylor's theorem gives
\[
\ell(1)=\ell(0)+\ell'(0)+\int_0^1 (1-s)\ell''(s)\,ds.
\]
A direct first-variation computation (Lemma~\ref{lem:first-second-var}) yields
\begin{equation}\label{eq-taylorexpandell}
\ell'(0)
=\Big\langle \nabla\Big(\log\frac{d\mu_t}{dx}+V_{\nu_t}\Big),\,\nabla\tilde\phi_t\Big\rangle_{L^2_{\mu_t}},
\qquad
V_{\nu_t}(x):=\int f(x,y)\,d\nu_t(y).
\ee
Next, an estimate on $\ell''(s)$ follows by the second variation along geodesic (Lemma~\ref{lemma-locconvconc}): \[\ell''(s)\ge (1-\eta)\lambda_{gap}  W_2^2(\mu_{t},\mu^*)\ge \lambda W_2^2( \mu_t,\mu^*).\] Explaining this derivation more in detail,  for each fixed $s'$, observe\[id +s \nabla  \tilde \phi_t =(id +(s-s') \nabla  \tilde \phi_t\circ (id +s'\nabla \tilde \phi_t)^{-1})\circ (id +s'\nabla \tilde \phi_t) .\] By a direct calculation, 
\[ \nabla  \tilde \phi_t\circ (id +s'\nabla \tilde \phi_t)^{-1}=\nabla \tilde \phi_{t,s'},\] where $\tilde \phi_{t,s'}(y):= \tilde \phi_t(x)+\frac{s'}{2} |\nabla \tilde \phi_t (x)|^2$, $x= (id +s'\nabla \tilde \phi_t)^{-1}(y)$. Thus Lemma~\ref{lemma-locconvconc} implies
\[\ell''(s)\ge (1-\eta)\lambda_{gap}  \int |\nabla \tilde \phi_{t,s}|^2 d \tilde \mu_{t,s}=(1-\eta)\lambda_{gap}\int |\nabla \tilde \phi_t|^2 d \mu_t = (1-\eta)\lambda_{gap} W_2^2(\mu_t,\mu^*).\]
Combining estimates above, from \eqref{eq-taylorexpandell} we infer 
\begin{equation}\label{eq:local-AT-mu}
F(\mu^*,\nu_t)-F(\mu_t,\nu_t)
\ge
\Big\langle \nabla\Big(\log\frac{d\mu_t}{dx}+V_{\nu_t}\Big),\,\nabla\tilde\phi_t\Big\rangle_{L^2_{\mu_t}}
+\frac{\lambda}{2}W_2^2(\mu_t,\mu^*).
\end{equation}

\smallskip

\noindent {Step 2. EVI for the $\mu$-part.}

\noindent Under MFL-DA, $\mu_t$ satisfies the continuity equation with velocity
\[
u_t=-\nabla\Big(\log\frac{d\mu_t}{dx}+V_{\nu_t}\Big).
\]
By the first variation formula \cite[Thm.~8.4.7]{AGS08}, namely {Theorem~\ref{thm:first_variation_W2}}, for a.e.\ $t$,
\[
\frac{d}{dt}\frac12 W_2^2(\mu_t,\mu^*)
=
\big\langle -u_t,\,T_{\mu_t\to\mu^*}-id\big\rangle_{L^2_{\mu_t}}
=
\Big\langle \nabla\Big(\log\frac{d\mu_t}{dx}+V_{\nu_t}\Big),\,\nabla\tilde\phi_t\Big\rangle_{L^2_{\mu_t}}.
\]
Combining with \eqref{eq:local-AT-mu} yields the Energy Variational Inequality(EVI) with respect to the equilibrium.
\begin{equation}\label{eq:local-EDI-mu}
\frac{d}{dt}\frac12 W_2^2(\mu_t,\mu^*)
\le
F(\mu^*,\nu_t)-F(\mu_t,\nu_t)-\frac{\lambda}{2}W_2^2(\mu_t,\mu^*).
\end{equation}

\smallskip

\noindent{Step 3. EVI for the $\nu$-part.}

\noindent The same argument (using the local concavity bound in Lemma~\ref{lemma-locconvconc})
gives for a.e.\ $t$,
\begin{equation}\label{eq:local-EDI-nu}
\frac{d}{dt}\frac12 W_2^2(\nu_t,\nu^*)
\le
F(\mu_t,\nu_t)-F(\mu_t,\nu^*)-\frac{\lambda}{2}W_2^2(\nu_t,\nu^*).
\end{equation}

\smallskip
\noindent{Step 4. Lyapunov inequality and improved cancellation.}

\noindent Set $\mathcal E(t):=W_2^2(\mu_t,\mu^*)+W_2^2(\nu_t,\nu^*)$.
Summing \eqref{eq:local-EDI-mu}--\eqref{eq:local-EDI-nu} yields
\[
\mathcal E'(t)
\le
-\lambda \mathcal E(t)+2\big(F(\mu^*,\nu_t)-F(\mu_t,\nu^*)\big).
\]
Finally, applying the same local above-tangent argument with $(\mu,\nu)$ near $(\mu^*,\nu^*)$
and using stationarity $\grad_{W_2,\mu}F(\mu^*,\nu^*)=\grad_{W_2,\nu}F(\mu^*,\nu^*)=0$, we get
\[
F(\mu_t,\nu^*)-F(\mu^*,\nu^*)\ge \frac{\lambda}{2}W_2^2(\mu_t,\mu^*),
\qquad
F(\mu^*,\nu_t)-F(\mu^*,\nu^*)\le -\frac{\lambda}{2}W_2^2(\nu_t,\nu^*).
\]
Hence
\[
F(\mu^*,\nu_t)-F(\mu_t,\nu^*)
\le -\frac{\lambda}{2}\mathcal E(t).
\]
Plugging back gives $\mathcal E'(t)\le -2\lambda \mathcal E(t)$ for a.e.\ $t\in[a,b]$,
which is exactly \eqref{eq-lyapunov'_2}.
\end{proof}

\bigskip 


\noindent\textbf{Lemma~\ref{lemma-smoothing} (Smoothing)}
\emph{ For any $\mu_0$, $\nu_0$ in $\mathcal{P}(\Tn)$, $\delta>0$ and $\alpha\in(0,1)$, there is $C=C(f,\delta,\alpha)<\infty$ such that 
\[\Vert \mu_t\Vert_{C^{1,\alpha}}+\Vert \nu_t \Vert _{C^{1,\alpha}}\le C\quad \text{ for } t\ge \delta . \]}


\begin{proof}The proof proceeds as in \cite[Theorem 6.1]{CJS25}. We rewrite \eqref{eq-main} as \begin{equation}\label{eq-566}
\partial_t \mu_t - \Delta \mu_t = \nabla\cdot(\mu_t\nabla B_t),
\qquad
\partial_t \nu_t - \Delta \nu_t = \nabla\cdot(\nu_t\nabla C_t),
\end{equation}
where
\[
B_t(x)=\int_{\mathbb T^n} f(x,y)\,d\nu_t(y),
\qquad
C_t(y)=-\int_{\mathbb T^n} f(x,y)\,d\mu_t(x).
\]
Since $f\in C^2$, we have the uniform bound
\[
\|B_t\|_{C^2},\|C_t\|_{C^2}\le \|f\|_{C^2}.
\]

\medskip
\noindent
\textbf{Uniform $L^2$ bound.}
Multiplying \eqref{eq-566} by $\mu_t$ and $\nu_t$, respectively, and integrating
by parts, we obtain
\[
\partial_t \|\mu_t\|_{L^2}^2
\le -\|\nabla\mu_t\|_{L^2}^2 + C\|\mu_t\|_{L^2}^2,
\qquad
\partial_t \|\nu_t\|_{L^2}^2
\le -\|\nabla\nu_t\|_{L^2}^2 + C\|\nu_t\|_{L^2}^2,
\]
for some $C=C(f)$.

To leverage on the dissipative term, we use the Gagliardo--Nirenberg inequality \cite{nirenberg1959elliptic}
\[
\|\rho\|_{L^2}^{2+\frac{4}{n}}
\le C\Bigl(
\|\nabla\rho\|_{L^2}^2 \|\rho\|_{L^1}^{\frac{4}{n}}
+ \|\rho\|_{L^1}^{2+\frac{4}{n}}
\Bigr),
\]
which interpolates between the $L^1$ mass and the $H^1$ energy.
Since $\mu_t$ and $\nu_t$ are probability densities,
$\|\mu_t\|_{L^1}=\|\nu_t\|_{L^1}=1$, this yields
\[
\|\nabla\mu_t\|_{L^2}^2
\ge \frac{1}{C}\|\mu_t\|_{L^2}^{2(1+\frac{2}{n})}-C,
\]
and hence
\[
\partial_t \|\mu_t\|_{L^2}^2
\le -\frac{1}{C}\|\mu_t\|_{L^2}^{2(1+\frac{2}{n})}
+ C(\|\mu_t\|_{L^2}^2+1).
\]
Integrating this differential inequality gives
\begin{equation} \label{eq-1626}
\|\mu_t\|_{L^2}^2 \le C(t^{-n/2}+1),
\end{equation}
and therefore a uniform $L^2$ bound for all $t\ge\delta$.
The same estimate holds for $\nu_t$.

\medskip
\noindent

\textbf{Regularization through bootstrapping.}
The remaining steps rely on standard regularity theory for linear parabolic equations.
We invoke the $L^p$-maximal regularity estimate (or Calder\'on--Zygmund estimate) for the heat equation
$\partial_t\rho_t-\Delta\rho_t=f_t$ \cite{lieberman1996second,Krylov2008}:
for $\delta\in(0,1)$, $k\in\mathbb Z$, $p,q\in(1,\infty)$, and any $s\ge 0$,
\begin{equation}\label{eq:max-reg-shifted}
\int_{s+\delta}^{s+1} \Bigl( \Vert \partial_t \rho_t \Vert _{W^{k,p}}^q +
\|\rho_t\|_{W^{k+2,p}}^q \Bigr)\, dt 
\le C_{\delta,k,p,q } \int_s^{s+1} \Bigl(\|f_t\|_{W^{k,p}}^q 
+ \|\rho_t\|_{W^{k,p}}^q 
\Bigr) \,dt .
\end{equation}
(Here the left-hand side is measured on the interior time interval $[s+\delta,s+1]$,
while the right-hand side uses $[s,s+1]$; the constant $C_{\delta,k,p,q}$ may blow up as $\delta\downarrow0$.)
Throughout this argument, we repeatedly apply \eqref{eq:max-reg-shifted} to \eqref{eq-566}
by viewing $\nabla \cdot (\mu_t \nabla B_t)$ and $\nabla \cdot ( \nu_t \nabla C_t)$ as the source term $f_t$.

First, by \eqref{eq-1626}, we may view
$\nabla \cdot (\mu_t \nabla B_t) \in L^\infty_t W_x^{-1,2}([s,s+1]) \subset L^{q}_t W^{-1,2}_x([s,s+1])$
for all $q<\infty$ for $s>0$.\footnote{
For any time interval $I\subset\mathbb R$, we use the shorthand
\[
L^q_t W^{k,p}_x(I):=L^q\bigl(I;W^{k,p}(\Tn)\bigr),\qquad
\|\rho\|_{L^q_t W^{k,p}_x(I)}:=\Bigl(\int_I \|\rho_t\|_{W^{k,p}(\Tn)}^q\,dt\Bigr)^{1/q}.
\]
If the interval $I$ is clear from the context, we omit it.
In particular, for any Banach space $X$ and any finite interval $I$,
$L^\infty(I;X)\hookrightarrow L^q(I;X)$ for every $q<\infty$.
For basic properties of Bochner spaces in time, see \cite[\S 5.9.2]{evans2010partial}.
}
The estimate \eqref{eq:max-reg-shifted} with $k=-1$ and $p=2$ gives uniform
$L^q_tW^{1,2}_x([s+\delta,s+1])$ bounds of $\mu_t$ and $\nu_t$.
Using this improved estimate, applying \eqref{eq:max-reg-shifted} once more with $k=0$ and $p=2$
yields $L^q_tW^{2,2}_x([s+\delta,s+1])$ bounds of $\mu_t$ and $\nu_t$.
Although $\nabla B_t, \nabla^2 B_t \in L^\infty_t L^\infty_x \subset L^q_t L^\infty_x$ is not sufficiently regular to iterate in $k$,
the Sobolev embedding (Theorem~\ref{thm-sobolev})
$W^{2,2}\hookrightarrow W^{1,2^*}$ with $(2^*)^{-1}=2^{-1}-n^{-1}$
allows us to increase the spatial integrability exponent.
Iterating in $p$ while fixing $k=0$, and applying this argument on intervals of the form $[s,s+1]$,
we obtain that for every $p\in(1,\infty)$ and $q\in (1,\infty)$,
\begin{equation} \label{eq-1627}
 \int_{s}^{s+1}\Bigl(  \|\partial_t \mu_t\| _{L^p}^q + \|\mu_t\|^q _{W^{2,p}}
+\|\partial_t \nu_t\| _{L^p}^q+\|\nu_t\|^q _{W^{2,p}} \Bigr)\, dt 
\le C(f,\delta,p,q)
\quad\text{for all } s\ge \delta .
\end{equation} 

Next, differentiating the system \eqref{eq-main} in $t$, yields $\dot \mu_t = \partial_t \mu_t$ solves
\begin{equation}\label{eq-1629}
\partial_t \dot \mu_t -\Delta \dot \mu_t
= \nabla \cdot ( \dot \mu_t \nabla B_t+ \mu_t \nabla \dot B_t ),
\end{equation}
where $\dot B_t(x)=\int f(x,y)\,d\dot\nu_t(y)$.
Note $\nabla \dot B_t, \nabla^2 \dot B_t  \in L^q_t L^\infty_x$ for all $q\in(1,\infty)$:
indeed, for $|\beta|\le2$, $\| \nabla^\beta_x  \dot B_t\|_{L^\infty_x}
\le \|\nabla _x^\beta f\|_{L^\infty_{x,y}}\|\dot\nu_t\|_{L^1_y}
\le C \|\nabla _x^\beta f\|_{L^\infty_{x,y}}\|\dot\nu_t\|_{L^p_y}$, and this follows by \eqref{eq-1627}.
Using \eqref{eq-1627}, we repeat the previous bootstrapping argument for $\dot \mu_t$ and $\dot \nu_t$ on \eqref{eq-1629}.
More precisely, we first apply \eqref{eq:max-reg-shifted} with $k=-1$ to \eqref{eq-1629}
to obtain $\dot\mu_t,\dot\nu_t\in L^q_tW^{1,p}_x$ for all $p,q\in(1,\infty)$;
this in turn implies $\nabla\cdot(\dot\mu_t\nabla B_t)$ and $\nabla\cdot(\mu_t\nabla \dot B_t)\in L^q_tL^p_x$
for all $p,q\in(1,\infty)$, so we apply the second time with $k=0$.
As a consequence,
\begin{equation} \label{eq-16272}
 \int_{s}^{s+1} \Bigl( \|\partial_t \dot \mu_t\| _{L^p}^q + \|\dot \mu_t\|^q _{W^{2,p}}
+\|\partial_t \dot \nu_t\| _{L^p}^q+\|\dot \nu_t\|^q _{W^{2,p}} \Bigr) \, dt 
\le C(f,\delta,p,q)
\quad\text{for all } s\ge \delta .
\end{equation}

Finally, $\mu_t \in L^2_t W^{2,p}_x([s,1+s])$ and $\dot \mu_t \in L^2_t W^{2,p}_x([s,1+s])$ imply
$\mu_t \in C^{0,\frac12}_t W^{2,p}_x([s,1+s])$.
Namely $\mu_t \in W^{1,2}_t W^{2,p}_x([s,1+s]) \hookrightarrow C^{0,\frac12}_t W^{2,p}_x([s,1+s])$
and this in particular implies (e.g., see \cite[Theorem~2 in \S 5.9.2]{evans2010partial})
\[
\Vert \mu _{t} \Vert_{W^{2,p}} \le C(f,\delta,p), \quad \text{ for all } t\ge \delta .
\]
The result follows by Morrey's embedding (Theorem~\ref{thm-morrey})
$W^{2,p}\hookrightarrow C^{1,\alpha}$ for $p$ sufficiently large.
\end{proof}

\bigskip 

\noindent\textbf{Lemma~\ref{lemma-interpolation} (Interpolation)}
\emph{ Let $0<\beta<\alpha<1$ and let $\rho_0\,dx,\rho_1\,dx \in \mathcal{P}(\Tn)$.
Assume $\|\rho_0\|_{C^{1,\alpha}}$, $\|\rho_1\|_{C^{1,\alpha}}\le C<\infty$.
Then for every $\varepsilon>0$ there exists $\delta=\delta(\varepsilon,C,\alpha,\beta)>0$
such that
\[
W_2(\rho_0,\rho_1)\le \delta
\quad\Longrightarrow\quad
\|\rho_0-\rho_1\|_{C^{1,\beta}}\le \varepsilon.
\]}
{\begin{proof}
The proof follows by compactness argument, as in \cite[Lemma~5.7]{CJS25}.
Assume the statement is false. Then there exist sequences $\rho_{0,n},\rho_{1,n}$
with $\|\rho_{i,n}\|_{C^{1,\alpha}}\le C$ for $i\in\{0,1\}$ and all $n$, such that
\begin{equation}\label{eq-tt1}
W_2(\rho_{0,n},\rho_{1,n})\le \frac1n,
\end{equation}
while $\|\rho_{0,n}-\rho_{1,n}\|_{C^{1,\beta}}\ge \varepsilon_0$ for some $\varepsilon_0>0$.
By the compact embedding $C^{1,\alpha}\hookrightarrow C^{1,\beta}$ (Lemma~\ref{lem:GT_6_36}), passing to a subsequence
we may assume $\rho_{0,n}\to\rho_0$ and $\rho_{1,n}\to\rho_1$ in $C^{1,\beta}$. Recall that $W_2$-distance is controlled by $L^1$-distance on compact manifold: Let $D:=\mathrm{diam}(\Tn )<\infty$. By Kantorovich--Rubinstein duality,
\[
W_1(\rho_0,\rho_1)
=\sup_{\mathrm{Lip}(\varphi)\le 1}\int_{\Tn}\varphi(\rho_0-\rho_1)
\le D\|\rho_0-\rho_1\|_{L^1}.
\]
Moreover, for any coupling $\pi$ one has $\int d^2\,d\pi\le D\int d\,d\pi$, hence
$W_2^2(\rho_0,\rho_1)\le D\,W_1(\rho_0,\rho_1)$. Therefore $W_2(\rho_0,\rho_1)\le D\,\|\rho_0-\rho_1\|_{L^1}^{1/2}$.

By the triangle inequality and the observation above,
\[
W_2(\rho_0,\rho_1)
\le W_2(\rho_0,\rho_{0,n})+W_2(\rho_{0,n},\rho_{1,n})+W_2(\rho_{1,n},\rho_1)
\longrightarrow 0,
\]
hence $\rho_0=\rho_1$. Therefore,
\[
\|\rho_{0,n}-\rho_{1,n}\|_{C^{1,\beta}}
\le \|\rho_{0,n}-\rho_0\|_{C^{1,\beta}}+\|\rho_{1,n}-\rho_1\|_{C^{1,\beta}}
\longrightarrow 0,
\]
a contradiction.
\end{proof}
}
For the purposes of this paper, it suffices to use the soft compactness argument above, although one may obtain quantitative interpolation estimates, for example from \cite{peyre2018comparison}.

\bigskip 

We close Appendix~\ref{app-MFLDA} with two comments on the quantitative rate and on the role of entropy in the stability mechanism. 

\begin{remark}[Rate of convergence]\label{remark-sharpasymp}
According to the proof of Theorem~\ref{thm-2} in Section~\ref{sec-MFLDA} and the proof of Lemma~\ref{lemma-convconc} in Appendix~\ref{app-MFLDA}, if $(\mu_t,\nu_t)$ converges to the mixed Nash equilibrium (MNE) $(\mu^*,\nu^*)$, then
\begin{equation}\label{eq-gapp}
W_2(\mu_t,\mu^*) + W_2(\nu_t,\nu^*) = O(e^{-\lambda t})
\quad \text{for all } \lambda \in (0,\lambda_{gap}).
\end{equation}
Moreover, from the proof of Lemma~\ref{lemma-convconc}, the constant $\lambda_{gap}=\lambda_{gap}(f)>0$  can be characterized as
\[
\lambda_{gap} = \min\{\lambda_{gap}^{\mathcal X},\,\lambda_{gap}^{\mathcal Y}\},
\]
where
\[
\lambda_{gap}^{\mathcal X}
=
\inf_{\substack{\phi \in W^{2,2}(\mathcal X)\\ \nabla\phi \neq 0}}
\frac{\int \bigl(|\nabla^2\phi|^2 + \langle \nabla\phi,\nabla^2 V_{\nu^*}\nabla\phi\rangle\bigr)\,d\mu^*}
{\int |\nabla\phi|^2\,d\mu^*},
\qquad
V_{\nu^*}(x)=\int f(x,y)\,d\nu^*(y),
\]
and
\[
\lambda_{gap}^{\mathcal Y}
=
\inf_{\substack{\phi \in W^{2,2}(\mathcal Y)\\ \nabla\phi \neq 0}}
\frac{\int \bigl(|\nabla^2\phi|^2 - \langle \nabla\phi,\nabla^2 U_{\mu^*}\nabla\phi\rangle\bigr)\,d\nu^*}
{\int |\nabla\phi|^2\,d\nu^*},
\qquad
U_{\mu^*}(y)=\int f(x,y)\,d\mu^*(x).
\]
Note that, by Remark \ref{remark-alternative}, we have alternative characterizations of $\lambda^{\mathcal{X}}_{gap}$ and $\lambda^{\mathcal{Y}}_{gap}$ and bounds:
\[\lambda_{gap}^{\mathcal{X}}= \inf _{\substack{\phi \in W^{1,2}(\mathcal X)\\ \phi \neq 0,\int \phi d\mu^*=0}} \frac{\int |\nabla \phi|^2 d\mu^* }{\int \phi^2 d\mu^*}\ge \frac{\sigma_*(\mathcal{X})}{e^{\operatorname{osc}f}}, \quad \lambda_{gap}^{\mathcal{Y}}= \inf _{\substack{\phi \in W^{1,2}(\mathcal Y)\\ \phi \neq 0,\int \phi d\nu^*=0}} \frac{\int |\nabla \phi|^2 d\nu^* }{\int \phi^2 d\nu^*} \ge \frac{\sigma_*(\mathcal{Y})}{e^{\operatorname{osc}f}}.\]
Here $\operatorname{osc}f:= \sup f-\inf f\ge 0$ and $\sigma_*(\mathcal{X})>0$ denotes the first nonzero spectral value of unweighted Laplacian on $\mathcal{X}$, which is $4\pi^2$ for $\mathcal{X}=\Tn$.
\end{remark}

\begin{remark}[Entropic regularization alone does not imply local stability]
\label{rem:entropy-not-automatic}
The coercive spectral gap in Lemma~\ref{lemma-convconc} and the local stability
in Theorem~\ref{thm-2} should not be interpreted as a generic consequence of
adding entropy. On the flat torus, the entropy functional \(H\) is
\(0\)-displacement convex, but not uniformly displacement convex with a positive
constant; more generally, the displacement convexity constant of entropy is tied
to the Ricci curvature lower bound of the underlying space
\cite{mccann1997convexity,villani2008optimal}. Thus, adding \(H\)
to a functional does not by itself shift the Wasserstein Hessian by a positive number.

Related phenomena already occur for single-player mean-field Langevin dynamics.
For free energies with nonlinear interaction terms, entropy contributes
diffusion and smoothing, but the Wasserstein Hessian at a stationary state may
still have negative spectrum, in which case the stationary state is unstable;
alternatively, the bottom of the relevant spectrum may be zero, so that no
coercive gap is available. For instance, \cite{MR4983182} study the attractive
log gas, equivalently in dimension two a periodic Patlak--Keller--Segel model,
and identify a sharp temperature threshold at which the uniform stationary
state undergoes a spectral/stability transition. See also
\cite{Monmarche:2025aa} for related phenomena in the Curie--Weiss model.

In the present two-player problem, the positivity used in Lemma~\ref{lemma-convconc} comes from the mixed Nash equilibrium structure. More precisely, after using the equilibrium relation
\(d\mu^*\propto e^{-V_{\nu^*}}dx\), the second variation can be rewritten as the square in \eqref{eq-101}. This gives nonnegativity at the equilibrium, and the compactness argument, equivalently the associated Poincar\'e gap, then upgrades this nonnegativity to strict coercivity; this is where the entropy-induced ellipticity enters. Thus, the mechanism is the combination of the equilibrium structure and the entropy-induced elliptic gap, not entropic regularization alone.
\end{remark}

\bigskip 

\subsection{Postponed proofs for particle stability}\label{proof:particleLemma}

\paragraph{Synchronous coupling.} Since it is one of the main tools in the proofs of both Proposition~\ref{prop:stopped_expectations} and Lemma~\ref{lem:HP}, let us first detail the construction of a synchronous coupling between interacting particles~\eqref{eq:particle} and independent McKean-Vlasov processes initially distributed according to the empirical distribution of the particles. With respect to the most classical situation, we do not assume that the interacting particles are initially i.i.d. 

Recall that, for a fixed initial condition $(x_1,\dots,x_N,y_1,\dots,y_N) \in (\mathbb T^n)^{2N}$ of the particle system~\eqref{eq:particle}, $(\mu_t^{\rm mf},\nu_t^{\rm mf})$ is the mean-field solution of \eqref{eq-main} with initial condition
$(\mu_0^{\rm mf},\nu_0^{\rm mf})=(\mu_0^N,\nu_0^N)$. Let $(\tilde X_0^i,\tilde Y_0^i)_{1\le i\le N}$ be independent variables with $\tilde X_0^i \sim \mu_0^N$ and $\tilde Y_0^i \sim \nu_0^N$, and denote by
\begin{equation}
    \label{eq:loc-tilde-empiric}
\tilde\mu_0^N:=\frac1N\sum_{i=1}^N\delta_{\tilde X_0^i},\qquad 
\tilde\nu_0^N:=\frac1N\sum_{j=1}^N\delta_{\tilde Y_0^j}
\end{equation}
the resampled empirical measures. Let $\sigma$ and $\sigma'$ be permutations minimizing the empirical quadratic costs
\begin{equation}
    \label{eq:sigma}
    \frac1N\sum_{i=1}^N |X_0^{\sigma(i)}-\tilde X_0^i|^2
= W_2(\mu_0^N,\tilde\mu_0^N)^2,
\qquad
\frac1N\sum_{j=1}^N |Y_0^{\sigma'(j)}-\tilde Y_0^j|^2
= W_2(\nu_0^N,\tilde\nu_0^N)^2.
\end{equation}
 Consider the processes having initial condition $\tilde X_0^i,\tilde Y_0^i$ and solving
\begin{equation}\label{eq:particle-NL}
\begin{cases}
d\tilde X_t^i \;=\; - \int_{\mathcal Y} \nabla_x f (\tilde X_t^i,y)\, d\nu_t^{\rm mf}(y)\,dt + \sqrt{2}\,dB_t^{\sigma(i)},\\[1mm]
d\tilde Y_t^j \;=\;\int_{\mathcal X} \nabla _yf(x,\tilde Y_t^j)\, d\mu_t^{\rm mf}(x)\,dt + \sqrt{2}\,dW_t^{\sigma'(j)},
\end{cases}
\qquad i,j=1,\dots,N,
\end{equation}
where $B^i,W^j$ are the same Brownian motions as in~\eqref{eq:particle}. Since these Brownian motions are independent of the initial conditions, $(B^{\sigma(i)},W^{\sigma'(j)})_{1\le i,j\le N}$ are still independent Brownian motions independent from the initial conditions. As a consequence, the processes $(\tilde X^i,\tilde Y^j)_{1\le i,j\le N}$ are independent processes, and by uniqueness of the solution of the Kolmogorov equation associated to~\eqref{eq:particle-NL}, $\tilde X_t^i \sim \mu_t^{\rm mf}$ and $\tilde Y_t^i \sim \nu_t^{\rm mf}$ for all $t\geq 0, 1\le i \le N$.

In the proofs of both Proposition~\ref{prop:stopped_expectations} and Lemma~\ref{lem:HP}, the interest of this construction is to bound
\begin{equation}\label{eq:triangular-W2}
W_2(\mu_t^N, \mu_t^{\rm mf}) \le W_2(\mu_t^N,\tilde \mu_t^N) + W_2(\tilde \mu_t^N, \mu_t^{\rm mf}) \,,    
\end{equation}
and similarly for $W_2(\nu_t^N, \nu_t^{\rm mf}) $. Since the variables $(\tilde X_t^i)_{1\le i\le N}$ are i.i.d. with law $\mu_t^{\rm mf}$, the expectation and probability of deviations of the second term are controlled by known results, e.g. from \cite{fournier2015rate} or \cite{bolley2007quantitative}. For the first term, using that $(X_t^{\sigma(I)},\tilde X_t^I)$ with $I$ uniformly distributed over $1,\dots,N$ independent from the other variables gives a coupling of $\mu_t^N$ and $\tilde \mu_t^N$,  
\begin{equation}
    \label{eq:boundW2-mu-tildemu}
    W_2(\mu_t^N,\tilde \mu_t^N)^2 \le \frac1N \sum_{i=1}^N |X_t^{\sigma(i)}-\tilde X_t^i|^2\,.
\end{equation}
The right-hand side is controlled as follows. Since \(X_t^{\sigma(i)}\) and
\(\tilde X_t^i\) are driven by the same Brownian motion, 
\[
\begin{aligned}
d |X_t^{\sigma(i)}-\tilde X_t^i|^2
&=
2(X_t^{\sigma(i)}-\tilde X_t^i)\cdot
\left(
\int_{\mathcal Y}\nabla_x f(\tilde X_t^i,y)\,d\nu_t^{\rm mf}(y)
-\frac1N\sum_{j=1}^N\nabla_x f(X_t^{\sigma(i)},Y_t^{\sigma'(j)})
\right)dt .
\end{aligned}
\]
Adding and subtracting
\(\frac1N\sum_{j=1}^N\nabla_x f(\tilde X_t^i,\tilde Y_t^j)\), and using that
\(\nabla_x f\) is \(\|\nabla^2 f\|_\infty\)-Lipschitz, 
\[
\begin{aligned}
d |X_t^{\sigma(i)}-\tilde X_t^i|^2
&\le
2\|\nabla^2 f\|_\infty |X_t^{\sigma(i)}-\tilde X_t^i| \\
&\quad\times
\left(
|X_t^{\sigma(i)}-\tilde X_t^i|
+\frac1N\sum_{j=1}^N|Y_t^{\sigma'(j)}-\tilde Y_t^j|
+W_2(\tilde\nu_t^N,\nu_t^{\rm mf})
\right)dt .
\end{aligned}
\]
Reasoning similarly with $Y_t^{\sigma'(i)}-\tilde Y_t^i$, summing these bounds over $i$ and writing 
\[{\rm e}(t) = \frac1N\sum_{i=1}^N {|X_t^{\sigma(i)}-\tilde X_t^i|}^2 +|Y_t^{\sigma'(i)}-\tilde Y_t^i|^2)\]
leads with elementary computations to
\begin{equation}\label{eq:de(t)}
    d {\rm e}(t) \le \|\nabla^2 f\|_\infty \left(5{\rm e}(t)  + W_2(\tilde \mu_t^N, \mu_t^{\rm mf}) ^2 +  W_2(\tilde \nu_t^N, \nu_t^{\rm mf}) ^2\right) dt\,.
\end{equation}
Moreover, thanks to~\eqref{eq:sigma},
\begin{equation}
    \label{eq:e(0)}
{\rm e}(0) = W_2(\mu_0^N,\tilde\mu_0^N)^2
+ W_2(\nu_0^N,\tilde\nu_0^N)^2 = W_2(\mu_0^{\rm mf},\tilde\mu_0^N)^2
+ W_2(\nu_0^{\rm mf},\tilde\nu_0^N)^2  \,.
\end{equation}
Again, bounds on the expectation or probability of deviation ${\rm e}(0)$ and of the last two terms of~\eqref{eq:de(t)} are known, which by Grönwall's Lemma yields similar bounds on ${\rm e}(t)$, ultimately controlling $W_2(\mu_t^N,\tilde \mu_t^N)^2  $ thanks to~\eqref{eq:boundW2-mu-tildemu}, and similarly for $W_2( \nu_t^N,\tilde \nu_t^N)^2$, as will be detailed in the proofs below.

\paragraph{Proof of Proposition~\ref{prop:stopped_expectations} and Lemma~\ref{lem:HP}.} In the following proofs we use the notations introduced above in this section.

\begin{proof}
    [Proof of Proposition~\ref{prop:stopped_expectations}]
By the Markov property and working conditionally to $(\mu_{mT}^N,\nu_{mT}^N)$, it is sufficient to prove that there exists $\eta,C'>0$ such that for any fixed initial condition $(x_1,\dots,x_N,y_1,\dots,y_N)$  with empirical distributions $(\mu_0^N,\nu_0^N) $ satisfying $r(\mu_0^N,\nu_0^N) \leqslant \delta_0$ (corresponding to the condition that $\tau >0$), all $N \ge 1$ and $s\in[0,T]$,
\begin{equation}
    \label{eq:stopped-expect2}
    \mathbb E \left ( r(\mu_{s}^N,\nu_{s}^N) \right)  \le C_0 e^{- \lambda_0 s}    r(\mu_{0}^N,\nu_{0}^N)    + C' N^{-\eta}\,.
\end{equation}
In the rest of the proof, we thus consider such a fixed initial data with $r(\mu_0^N,\nu_0^N) \leq \delta_0 $. For $s\in[0,T]$, by  the triangular inequality, we bound
\begin{align*}
    r(\mu_s^N,\nu_s^N) & \le r( \mu_s^{\rm mf}, \nu_s^{\rm mf}) + W_2(\mu_s^N,\mu_s^{\rm mf}) + W_2(\nu_s^N, \nu_s^{\rm mf}) \\
    & \le C_0 e^{-\lambda_0 s} r(\mu_0^N,\nu_0^N) + \sqrt{2\rm{e}(s)} + W_2(\tilde \mu_s^N, \mu_s^{\rm mf}) + W_2(\tilde \nu_s^N, \nu_s^{\rm mf}) ,
\end{align*}
where we used Theorem~\ref{thm-2} for the first term (with the fact that $(\mu_0^{\rm mf},\nu_0^{\rm mf})= (\mu_0^N,\nu_0^N)$ by design) and~\eqref{eq:triangular-W2} and~\eqref{eq:boundW2-mu-tildemu} for the second term. Thanks to~\cite[Theorem 1]{fournier2015rate} (using that a moment of any order is bounded uniformly over all probability measures over the compact torus), there exists $C,\eta>0$ (which depends only on the dimension $n$) such that 
\[\mathbb E \left[W_2(\tilde \mu_s^N, \mu_s^{\rm mf})^2 + W_2(\tilde \nu_s^N, \nu_s^{\rm mf})^2\right] \leq C N^{-2\eta},\]
for all $s\in[0,T]$. Taking the expectation in~\eqref{eq:de(t)}, applying the Grönwall Lemma and finally Jensen's inequality yields 
\begin{align*}
    \mathbb E \left[ \sqrt{2\rm{e}(s)} + W_2(\tilde \mu_s^N, \mu_s^{\rm mf}) + W_2(\tilde \nu_s^N, \nu_s^{\rm mf}) \right] &\le C(T) N^{-\eta}
\end{align*}
uniformly over $s\in[0,T]$ for some constant $C(T)$, which concludes the proof.

\end{proof}

\begin{proof}
    [Proof of Lemma~\ref{lem:HP}]
    Combining~\eqref{eq:triangular-W2} and~\eqref{eq:boundW2-mu-tildemu} (and similar bound for $W_2(\nu_t^N,\nu_t^{\rm mf})$) yields
    \begin{align*}
       \sup_{t\in[0,T]} \left\{  W_2(\mu_t^N,\mu_t^{\rm mf})+ W_2(\nu_t^N,\nu_t^{\rm mf}) \right\} & \le \sup_{t\in[0,T]} \left\{\sqrt{{2\rm e}(t)} +  W_2(\tilde \mu_t^N,\mu_t^{\rm mf})+ W_2(\tilde \nu_t^N,\nu_t^{\rm mf})\right\}  \\
        & \le C(T) \sup_{t\in[0,T]} \left\{  W_2(\tilde \mu_t^N,\mu_t^{\rm mf})+ W_2(\tilde \nu_t^N,\nu_t^{\rm mf})\right\}\,,
    \end{align*}
    for some $C(T)>0$,     where in the second line we used~\eqref{eq:de(t)},  \eqref{eq:e(0)} and the Grönwall Lemma. Consequently, Lemma~\ref{lem:HP} is reduced to proving the same statement with
\((\mu_t^N,\nu_t^N)\) replaced by
\((\tilde \mu_t^N,\tilde \nu_t^N)\), namely for i.i.d. (time-inhomogeneous) processes. This result follows by adapting \cite[Theorem~2.9]{bolley2007quantitative}; the theorem does not exactly apply stricto sensu but the proof applies mutatis mutandis. In particular, it is written for processes in $\mathbb R^n$ instead of $\mathbb T^n$, and thus the constants $b$ depends on \(T\), \(\varepsilon\), the dimension, and \(\|f\|_{C^2}\), since the moments of any order are bounded uniformly over all probability measures over $\mathbb T^n$.
\end{proof}

\end{document}